\newif\ifswitch
\theoremstyle{plain}
\newtheorem{theorem}{Theorem}[section]
\newtheorem{proposition}[theorem]{Proposition}
\newtheorem{lemma}[theorem]{Lemma}
\newtheorem{corollary}[theorem]{Corollary}
\theoremstyle{definition}
\newtheorem{definition}[theorem]{Definition}
\newtheorem{assumption}[theorem]{Assumption}
\theoremstyle{remark}
\newtheorem{remark}[theorem]{Remark}
\newenvironment{myitemize}{\begin{list}{$\bullet$}
		{\setlength{\topsep}{1mm}
			\setlength{\itemsep}{0.25mm}
			\setlength{\parsep}{0.25mm}
			\setlength{\itemindent}{0mm}
			\setlength{\partopsep}{0mm}
			\setlength{\labelwidth}{15mm}
			\setlength{\leftmargin}{4mm}}}{\end{list}}
\definecolor{darkred}{RGB}{150,0,0}
\definecolor{darkgreen}{RGB}{0,150,0}
\definecolor{darkblue}{RGB}{0,0,200}
\newtheorem{observation}{Observation}
\newtheorem{hypothesis}{Hypothesis}
\def \endprf{\hfill {\vrule height6pt width6pt depth0pt}}
\newenvironment{proofsk}{\noindent {\bf Proof sketch.} }{\endprf}
\newcommand{\tsn}[1]{{\left\vert\kern-0.25ex\left\vert\kern-0.25ex\left\vert #1 
    \right\vert\kern-0.25ex\right\vert\kern-0.25ex\right\vert}}
\newcommand{\red}{\textcolor{black}}
\newcommand{\icl}{\text{ICL}\xspace}
\newcommand{\riskhm}{\text{risk}(h,m)}
\newcommand{\prm}{\text{prompt}}
\newcommand{\TF}{{\texttt{TF}}}
\newcommand{\eps}{\varepsilon}
\newcommand{\beps}{\boldsymbol{\varepsilon}}
\newcommand{\bbeps}{\bar{\boldsymbol{\varepsilon}}}
\newcommand{\ept}{\eps_{\TF}}
\newcommand{\bxi}{\boldsymbol{\xi}}
\newcommand{\cmt}[1]{}
\newcommand{\st}{\star}
\newcommand{\distas}{\overset{\text{i.i.d.}}{\sim}}
\newcommand{\beq}{\begin{equation}}
\newcommand{\ba}{\begin{align}}
\newcommand{\ea}{\end{align}}
\newcommand{\eeq}{\end{equation}}
\newcommand{\nn}{\nonumber}
\newcommand{\A}{{\mtx{A}}}
\newcommand{\Ab}{{\mtx{\bar{A}}}}
\newcommand{\Bb}{{\mtx{\bar{B}}}}
\newcommand{\V}{{\mtx{V}}}
\newcommand{\B}{{{\mtx{B}}}}
\newcommand{\diag}[1]{\text{diag}(#1)}
\newcommand{\Lc}{{\cal{L}}}
\newcommand{\Lch}{{\widehat{\cal{L}}}}
\newcommand{\Nc}{{\cal{N}}}
\newcommand{\Dc}{{\cal{D}}}
\newcommand{\Pb}{{\mtx{P}}}
\newcommand{\Cb}{{\mtx{C}}}
\newcommand{\Ceb}{C_0}
\newcommand{\Eb}{{\mtx{E}}}
\newcommand{\Hb}{{\mtx{H}}}
\newcommand{\Gc}{{\cal{G}}}
\newcommand{\Qc}{{\cal{Q}}}
\newcommand{\Zc}{{\cal{Z}}}
\newcommand{\bsi}{{\boldsymbol{{\sigma}}}}
\newcommand{\bSi}{{\boldsymbol{{\Sigma}}}}
\newcommand{\onebb}{{\mathbf{1}}}
\newcommand{\Iden}{{\mtx{I}}}
\newcommand{\M}{{\mtx{M}}}
\newcommand{\order}[1]{{\cal{O}}(#1)}
\newcommand{\z}{{\vct{z}}}
\newcommand{\sft}[1]{\text{softmax}(#1)}
\newcommand{\tn}[1]{\|{#1}\|_{\ell_2}}
\newcommand{\tone}[1]{\|{#1}\|_{\ell_1}}
\newcommand{\tin}[1]{\|{#1}\|_{\ell_\infty}}
\newcommand{\tsub}[1]{\|{#1}\|_{\psi_2}}
\newcommand{\Cc}{\mathcal{C}}
\newcommand{\Ac}{\mathcal{A}}
\newcommand{\Uc}{\mathcal{U}}
\newcommand{\Rc}{\mathcal{R}}
\newcommand{\bt}{{\boldsymbol{\beta}}}
\newcommand{\bT}{{\boldsymbol{\Theta}}}
\newcommand{\bal}{\texttt{Alg}}
\newcommand{\bat}{{\widetilde{\texttt{Alg}}}}
\newcommand{\bah}{{\widehat{\bal}}}
\newcommand{\berm}{\bal^{\texttt{ERM}}}
\newcommand{\Sc}{\mathcal{S}}
\newcommand{\Sca}{{\mathcal{S}_{\text{all}}}}
\newcommand{\Dca}{{\text{MTL}}}
\newcommand{\Fca}{{\mathcal{F}_{\text{all}}}}
\newcommand{\Nn}{\mathcal{N}}
\newcommand{\vb}{\vct{v}}
\newcommand{\FB}{\mathbb{F}}
\newcommand{\Xb}{\mtx{\bar{X}}}
\newcommand{\xb}{\vct{\bar{x}}}
\newcommand{\w}{\vct{w}}
\newcommand{\s}{\vct{s}}
\newcommand{\ab}{\vct{a}}
\newcommand{\bb}{\vct{b}}
\newcommand{\g}{{\vct{g}}}
\newcommand{\Zb}{{\Sc}}
\newcommand{\Tc}{\mathcal{T}}
\newcommand{\tfr}{{\text{TFR}}}
\newcommand{\dtask}{\Dc_{\text{task}}}
\newcommand{\Fc}{\mathcal{F}}
\newcommand{\Xc}{\mathcal{X}}
\newcommand{\Yc}{\mathcal{Y}}
\newcommand{\ys}{\y}
\newcommand{\xp}[1]{\x^{(#1)}_\prm}
\newcommand{\xpp}[1]{\x'^{(#1)}_\prm}
\newcommand{\xpa}[1]{\x^{(#1)}_{\prm, a}}
\newcommand{\m}{\vct{m}}
\newcommand{\pnorm}[1]{\left\|#1 \right\|_p}
\newcommand{\wnorm}[2]{\left\|#1\right\|_{#2}}
\newcommand{\x}{\vct{x}}
\newcommand{\y}{\vct{y}}
\newcommand{\W}{\mtx{W}}
\newcommand{\Wc}{{\cal{W}}}
\definecolor{emmanuel}{RGB}{255,127,0}
\newcommand{\pb}{{\vct{p}}}
\newcommand{\R}{\mathbb{R}}
\renewcommand{\P}{\operatorname{\mathbb{P}}}
\newcommand{\E}{\operatorname{\mathbb{E}}}
\newcommand{\vct}[1]{\bm{#1}}
\newcommand{\mtx}[1]{\bm{#1}}
\newcommand{\X}{{\mtx{X}}}
\newcommand{\Y}{{\mtx{Y}}}
\newcommand{\Vb}{{\mtx{V}}}
\newcommand\scalemath[2]{\scalebox{#1}{\mbox{\ensuremath{\displaystyle #2}}}}
\newcommand*{\QE}{\hfill\ensuremath{\square}}%
\newcommand{\Scnt}[2]{{{\Sc}^{(#1)}_{#2}}}
\newcommand{\fal}[2]{{f^\bal_{\Scnt{#1}{#2}}}}
\newcommand{\fac}[2]{{f^\bal_{\Sc{#1}{#2}}}}
\newcommand{\falp}[2]{{f^{\bal'}_{\Scnt{#1}{#2}}}}
\newcommand{\yal}[1]{{\f^\bal_{#1}}}
\newcommand{\f}{{\boldsymbol{f}}}
\newcommand{\Rmtl}{{R_\text{MTL}}}
\newcommand{\Gch}{{\widehat\Gc}}
\newcommand{\Rch}{{\widehat\Rc}}
\newcommand{\stt}{{\dagger}}
\icmltitlerunning{Generalization and Stability in In-context Learning}
\title{Transformers as Algorithms:\\ 
Generalization and Stability in In-context Learning}
\newtheorem{theorem}{Theorem}[section]
\newtheorem{lemma}[theorem]{Lemma}
\newtheorem{definition}[theorem]{Definition}
\newtheorem{assumption}[theorem]{Assumption}
\newenvironment{proof}{\noindent {\bf Proof.} }{\endprf\par}
\author{Yingcong Li\thanks{University of California, Riverside, \{yli692@,mildi001@,oymak@ece.\}ucr.edu.} \and
  M. Emrullah Ildiz\footnotemark[1] \and Dimitris Papailiopoulos\thanks{University of Wisconsin-Madison, dimitris@papail.io}\and Samet Oymak\footnotemark[1]}
\date{}
\patchcmd{\@maketitle}{\begin{center}}{\begin{adjustwidth}{-0.5in}{-0.5in}\begin{center}}{}{}
\patchcmd{\@maketitle}{\end{center}}{\end{center}\end{adjustwidth}}{}{}
\begin{document}
\ifswitch
\twocolumn[
\icmltitle{Transformers as Algorithms:\\ 
Generalization and Stability in In-context Learning}



\icmlsetsymbol{equal}{*}

\begin{icmlauthorlist}
\icmlauthor{Firstname1 Lastname1}{equal,yyy}
\icmlauthor{Firstname2 Lastname2}{equal,yyy,comp}
\icmlauthor{Firstname3 Lastname3}{comp}
\icmlauthor{Firstname4 Lastname4}{sch}
\icmlauthor{Firstname5 Lastname5}{yyy}
\icmlauthor{Firstname6 Lastname6}{sch,yyy,comp}
\icmlauthor{Firstname7 Lastname7}{comp}
\icmlauthor{Firstname8 Lastname8}{sch}
\icmlauthor{Firstname8 Lastname8}{yyy,comp}
\end{icmlauthorlist}

\icmlaffiliation{yyy}{Department of XXX, University of YYY, Location, Country}
\icmlaffiliation{comp}{Company Name, Location, Country}
\icmlaffiliation{sch}{School of ZZZ, Institute of WWW, Location, Country}

\icmlcorrespondingauthor{Firstname1 Lastname1}{first1.last1@xxx.edu}
\icmlcorrespondingauthor{Firstname2 Lastname2}{first2.last2@www.uk}

\icmlkeywords{Machine Learning, ICML}

\vskip 0.3in
]



\printAffiliationsAndNotice{\icmlEqualContribution} 

\else
\maketitle
\fi
\begin{abstract}
In-context learning (ICL) is a type of prompting where a transformer model operates on a sequence of (input, output) examples and performs inference on-the-fly. In this work, we formalize in-context learning as an \emph{algorithm learning} problem where a transformer model implicitly constructs a hypothesis function at inference-time. We first explore the statistical aspects of this abstraction through the lens of multitask learning: We obtain generalization bounds for ICL when the input prompt is (1) a sequence of i.i.d.~(input, label) pairs or (2) a trajectory arising from a dynamical system. The crux of our analysis is relating the excess risk to the stability of the algorithm implemented by the transformer. We characterize when transformer/attention architecture provably obeys the stability condition and also provide empirical verification. For generalization on unseen tasks, we identify an inductive bias phenomenon in which the transfer learning risk is governed by the task complexity and the number of MTL tasks in a highly predictable manner. Finally, we provide numerical evaluations that (1) demonstrate transformers can indeed implement near-optimal algorithms on classical regression problems with i.i.d.~and dynamic data, (2) provide insights on stability, and (3) verify our theoretical predictions.
\end{abstract}

\ifswitch
\begin{figure}[t]
  \begin{center}
    \includegraphics[width=0.45\textwidth]{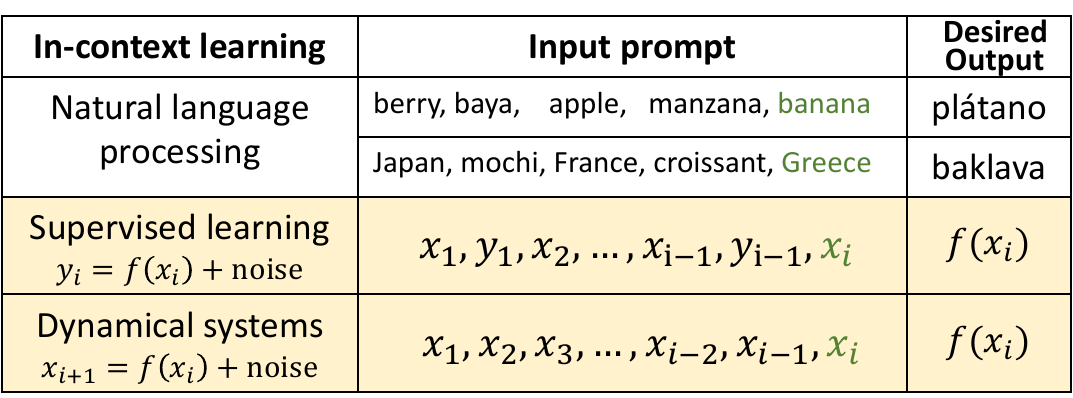}
    \includegraphics[width=0.45\textwidth]{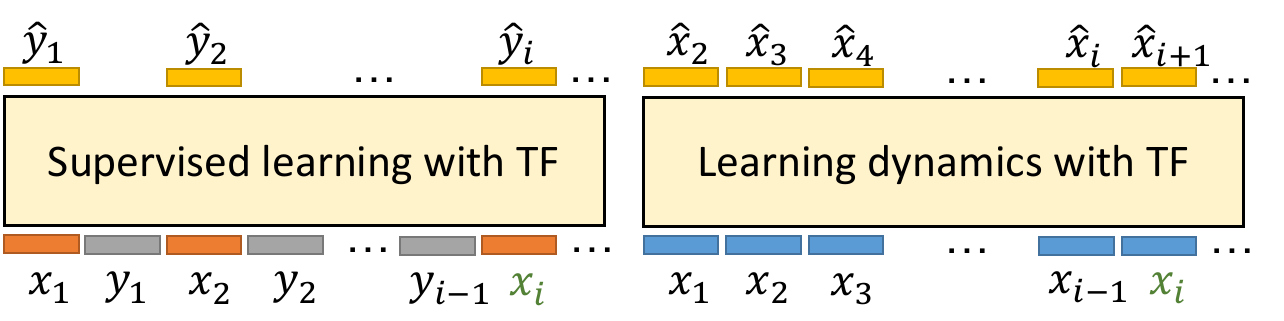}
  \end{center}
  \vspace{-7pt}
  \caption{\small{
  Examples of in-context learning. We focus on the lower two settings in the table where a transformer admits a supervised dataset or dynamical system trajectory as a prompt. Then, it auto-regressively predicts the output following an input example $\x_i$ based on the prompt $(\x_1,\dots,\x_i)$.}}
  \label{fig:icl}
  \vspace{-10pt}
\end{figure} 
\else
\begin{wrapfigure}{r}{0.46\textwidth}\vspace{-32pt}
  \begin{center}
    \includegraphics[width=0.45\textwidth]{figs/icl_table3.pdf}
    \includegraphics[width=0.45\textwidth]{figs/intro_tf_v2.pdf}
  \end{center}\vspace{-18pt}
  \caption{\small{
  Examples of ICL. We focus on the lower two settings where a transformer admits a supervised dataset or a dynamical system trajectory as a prompt. Then, it auto-regressively predicts the output following an input example $\x_i$ based on the prompt $(\x_1,\dots,\x_i)$.}}\vspace{-3pt}
  \label{fig:icl}
\end{wrapfigure} 
\fi



\section{Introduction}
Transformer (TF) models were originally developed for NLP problems to address long-range dependencies through the attention mechanism.  In recent years, language models have become increasingly large, with some boasting billions of parameters (e.g., GPT-3 has 175B, and PaLM has 540B parameters~\cite{brown2020language,chowdhery2022palm}). 
It is perhaps not surprising that these large language models (LLMs) have achieved state-of-the-art performance on a wide range of natural language processing tasks. What is surprising is the ability of some of these LLMs to perform \emph{in-context learning} (ICL), i.e., to adapt and perform a specific task given a short prompt, in the form of instructions, and a small number of examples \cite{brown2020language}. 
These models' ability to learn in-context without explicit training allows them to efficiently perform new tasks without a need for updating model weights.

Figure \ref{fig:icl} illustrates examples of ICL where a transformer makes a prediction on an example based on a few (input, output) examples provided within its prompt. For NLP, the examples may correspond to pairs of (question, answer)'s or translations. Recent works \cite{garg2022can,laskin2022context} demonstrate that ICL can also be used to infer general functional relationships. For instance, \cite{hollmann2022tabpfn,garg2022can} aims to solve certain supervised learning problems where they feed an entire training dataset $(\x_i,f(\x_i))_{i=1}^{n-1}$ {as the input prompt}, expecting that conditioning the TF model {on this prompt} would allow it to make an accurate prediction on a new input point $\x_n$. As discussed in \cite{akyurek2022learning,garg2022can}, this provides an implicit optimization flavor to ICL, where the model \emph{implicitly trains} on the data provided within the prompt, and performs inference on test points. 

\begin{figure*}
\centering
\begin{minipage}{0.31\textwidth}
	\centering
	\begin{tikzpicture}
	\centering
	\node at (0,0) {\includegraphics[scale=0.25]{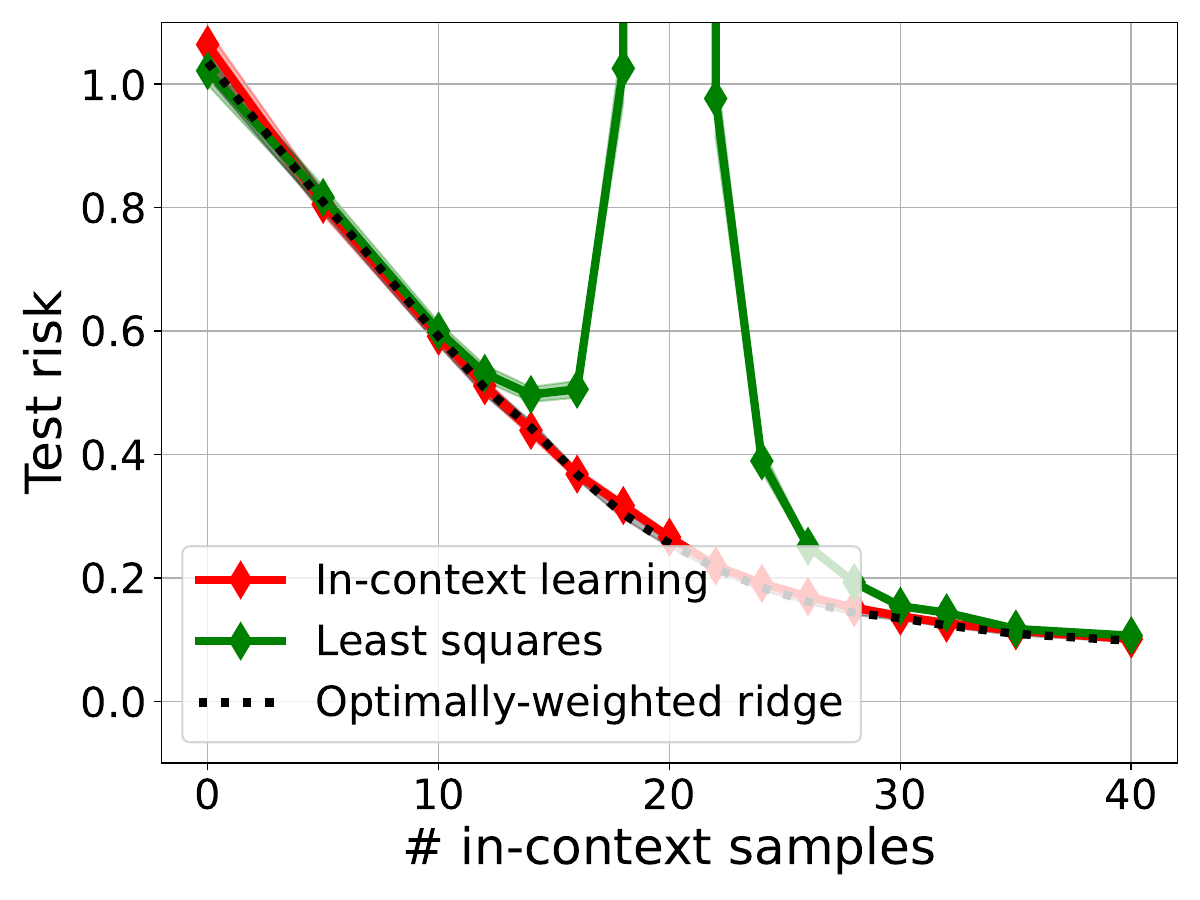}};
	\node at (0, -2.2) {\small{(a) Noisy linear regression}}; 
	\end{tikzpicture}
\end{minipage}
\hspace{5pt}
\begin{minipage}{0.31\textwidth}
	\centering
	\begin{tikzpicture}
		\centering
		\node at (0,0) {\includegraphics[scale=0.25]{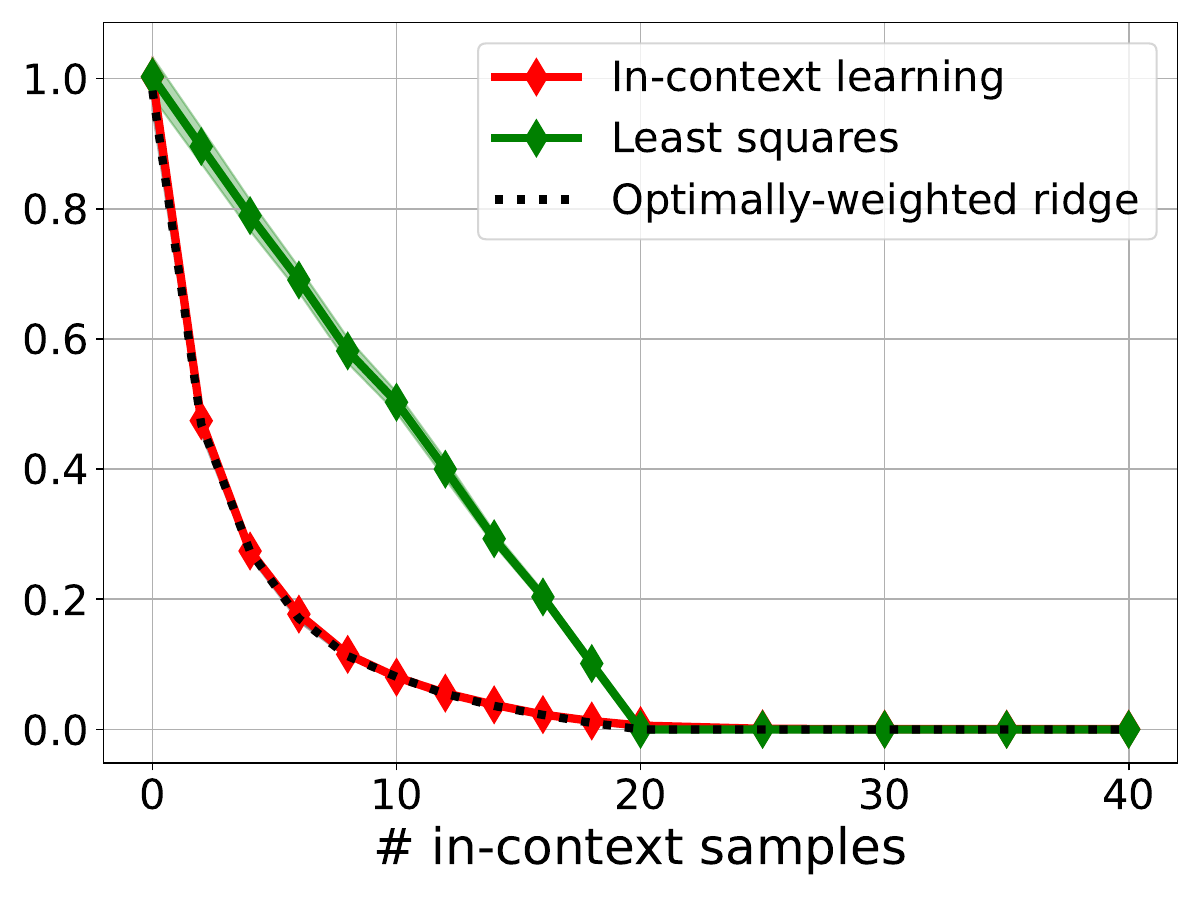}};
		\node at (0.1, -2.2) {\small{(b) Linear tasks with covariance prior}}; 
	\end{tikzpicture}
\end{minipage}
\hspace{5pt}
\begin{minipage}{0.31\textwidth}
	\centering
	\begin{tikzpicture}
		\centering
		\node at (0,0) {\includegraphics[scale=0.25]{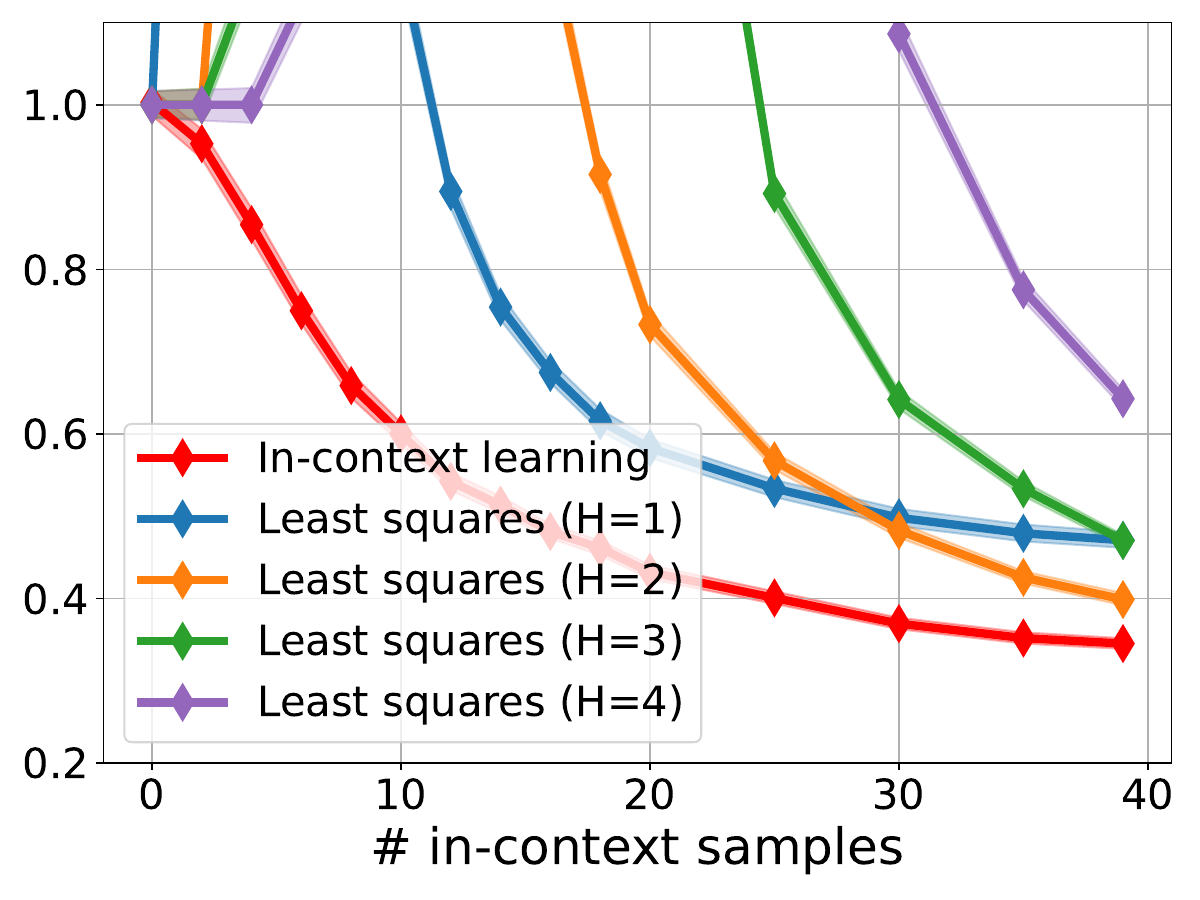}};
		\node at (0, -2.2) {\small{(c) Partially-observed LDS}}; 
	\end{tikzpicture}
\end{minipage}
\vspace{-5pt}
\caption{\small{Examples of {\emph{algorithm learning}} in three ICL settings: \textbf{(a) Noisy linear regression:} $y_i\sim \Nc(\x_i^\top\bt,\sigma^2)$ with $\x_i,\bt\sim\Nc(0,\Iden)$. \textbf{(b) Linear data with covariance prior:} $y_i=\x_i^\top\bt$ with $\bt\sim\Nc(0,\bSi)$ with non-isotropic $\bSi$. \textbf{(c) Partially observed linear dynamics:} $\x_t=\Cb\s_t$ and $\s_{t+1}\sim\Nc(\A\s_t,\sigma^2\Iden)$ with randomly sampled $\Cb,\A$. Each setting trains a transformer with large number of random regression tasks and evaluates on a new task from the same distribution. In (a) and (b), ICL performances match Bayes-optimal decision (weighted linear ridge regression) that adapt to noise level $\sigma$ and covariance prior $\bSi$ on the tasks. (c) shows that ICL outperforms auto-regressive least-squares estimators with varying memory $H$. {ICL is able to implement competitive ML algorithms} by leveraging the task prior learned during training. See Sec \ref{sec exp} for experimental details.}}\label{fig:main_exp}
\vspace{-5pt}
\end{figure*}
Our work formalizes in-context learning from a statistical lens, abstracting the transformer as {a learning algorithm} where the goal is inferring the correct (input, ouput) functional relationship from prompts. We focus on a meta-learning setting where the model is trained on many tasks, allowing ICL to generalize to both new and previously-seen tasks. Our main contributions are:

\begin{myitemize}
\item \textbf{Generalization bounds (Sec~\ref{sec mtl} \& \ref{sec dynamic}):} Suppose the model is trained on $T$ tasks each with a {data-sequence} containing $n$ examples. {During training, each sequence is fed to the model auto-regressively} as depicted in Figure \ref{fig:icl}. By abstracting ICL as an \emph{algorithm learning} problem, we establish a multitask (MTL) generalization rate of $1/\sqrt{nT}$ for i.i.d.~as well as dynamic data. In order to achieve the proper dependence on the sequence length ($1/\sqrt{n}$ factor), we overcome temporal dependencies by relating generalization to algorithmic stability \cite{bousquet2000algorithmic}. Experiments demonstrate that (1) \icl can select near-optimal algorithms for flagship regression problems as illustrated in Figure \ref{fig:main_exp} and (2) \icl indeed benefits from learning across the full task sequence in line with theory. 

\item \textbf{Stability of transformer architectures (Sec~\ref{sec stable}\&\ref{sec exp}):} We verify our stability assumptions that facilitate favorable generalization rates. Theoretically we identify when self-attention enjoys favorable stability properties through a tight analysis that quantify the influence of one token on another. Empirically, we show that ICL predictions become more stable to input perturbations as the prompt length increases. We also find that training with noisy data helps promote stability. 


\item \textbf{From multitask to meta-learning (Sec~\ref{sec transfer}):}  We provide insights into how our MTL bounds can inform generalization ability of ICL on previously unseen tasks (i.e.~transfer learning). Our experiments also reveal an intriguing \emph{inductive bias phenomenon}: The transfer risk is governed by the \emph{task complexity} (i.e.~functions $f$ in Fig \ref{fig:icl}) and the number of MTL tasks $T$ in a highly predictable fashion \red{and exhibits little dependence on the complexity of the TF architecture}.



\end{myitemize}
The remainder of the paper is organized as follows. The next section discusses connections to prior art and Section \ref{sec setup} introduces the problem setup. Section \ref{sec mtl} provides our main theoretical guarantees for ICL and stability of transformers. Section \ref{sec transfer} extends our arguments and experiments to the transfer learning setting. Section \ref{sec dynamic} extends our results to learning stable dynamical systems where each prompt corresponds to a system trajectory. \red{In Section \ref{sec approx}, we explain how ICL can be interpreted as an implicit model selection procedure building on the algorithm learning viewpoint.} Finally, Section \ref{sec exp} provides numerical evaluations.




%

%
%




\subsection{Related work}\label{sec related}

With the success of large language models, prompting methods have witnessed immense interest \cite{lester2021power}. ICL \cite{brown2020language,olsson2022context} is a prompting strategy where a transformer serves as an on-the-fly predictive model through conditioning on a sequence of input/output examples $(\x_1,f(\x_1),\dots\x_{n-1},f(\x_{n-1}),\x_n)$. Our work is inspired by \cite{garg2022can} which studies \icl in synthetic settings and demonstrates transformers can serve as complex classifiers through ICL. In parallel, \cite{hollmann2022tabpfn} uses ICL as an AutoML (i.e.~model-selection, hyperparameter tuning) framework where they plug in a dataset to transformer and use it as a classifier for new test points. Our formalism on \emph{algorithm learning} provides a justification on how transformers can accomplish this with proper meta-training. \cite{xie2021explanation} interprets ICL as implicit Bayesian inference and develops guarantees when the training distribution is a mixture of
HMMs. Very recent works \cite{von2022transformers,akyurek2022learning,dai2022can} aim to relate ICL to running gradient descent algorithm over the input prompt. \cite{akyurek2022learning} also provides related observations regarding the optimal decision making ability of ICL for linear models. 
Unlike prior ICL works, we provide finite sample generalization guarantees and our theory extends to temporally-dependent prompts (e.g.~when prompts are trajectories of dynamical systems). Dynamical systems in turn relate to a recent work by \cite{laskin2022context} who use ICL for reinforcement learning. 

This work is also related to the literature on the statistical aspects of time-series prediction \cite{yu1994rates,kuznetsov2016time,kuznetsov2014generalization,simchowitz2018learning,mohri2008rademacher} and learning (non)linear dynamics \cite{foster2020learning,ziemann2022single,ziemannlearning,tsiamis2022statistical,sarkar2019near,dean2020sample,sun2022finite,mania2020active,matni2019tutorial,oymak2021revisiting,block2023smoothed} among others. Most of these focus on autoregressive models of order $1$ whereas in ICL we allow for arbitrarily long memory/prompt for predictions. Closer works by \cite{mcdonald2017nonparametric,mohri2010stability} identify broad conditions for time-series learning however they still require finite memory as well as $\beta/\phi$-mixing assumptions. We remark that mixing assumptions are not really applicable to training sequences/prompts in ICL due to the meta-learning nature of the problem as the sequence elements are coupled through the (stochastic) task functions (see Sec \ref{sec setup}). Our algorithm learning formulation leads to new challenges and insights when verifying the conditions for Azuma-type inequalities and our results are facilitated through connections to algorithmic stability \cite{bousquet2002stability}. We also provide experiments and theory that justify our stability conditions. Further discussion is under Appendix \ref{app related}.




\section{Problem Setup}\label{sec setup}
\noindent\emph{Notation. } Let $\Xc$ be the input feature space, and $\Yc$ be the output/label space. We use boldface for vector variables. $[n]$ denotes the set $\{1,2,\dots,n\}$. $c,C>0$ denote absolute constants and $\|\cdot\|_{\ell_p}$ denotes the $\ell_p$-norm.

\noindent\textbf{In-context learning setting:} We denote {a length-$m$ prompt containing $m-1$ in-context examples and the $m$'th input} by $\xp{m}=(\z_1,\z_2,\dots,\z_{m-1},\x_{m})$. Here $\x_{m}\in\Xc$ is the input to predict and $\z_i\in\Zc$ is the $i$'th in-context example provided within prompt. Let $\TF(\cdot)$ denote a transformer (more generally an auto-regressive model) that admits $\xp{m}$ as its input and outputs a label $\hat\y_m=\TF(\xp{m})$ in $\Yc$. 


\noindent $\bullet$ \emph{Independent $(\x,\y)$ pairs.} Similar to \cite{garg2022can}, we draw i.i.d.~samples $(\x_i,\y_i)_{i=1}^n\in\Zc=\Xc\times{\Yc}$ from a data distribution. Then {a length-$m$ prompt} is written as $\xp{m}=(\x_1,\y_1,\dots \x_{m-1},\y_{m-1},\x_{m})$, and the model predicts $\hat \y_m=\TF(\xp{m})\in{\Yc}$ for $1\leq m\leq n$. 
\ifswitch\smallskip\else\fi


\noindent $\bullet$ \emph{Dynamical systems.} In this setting, the prompt is simply the trajectory generated by a dynamical system, namely, $\xp{m}=(\x_1,\dots \x_{m-1},\x_{m})$ and therefore, $\Zc=\Xc=\Yc$. Specifically, we investigate the state observed setting that is governed by dynamics $f(\cdot)$ via $\x_{m+1}=f(\x_m)+\text{noise}$. Here, $\y_m:=\x_{m+1}$ is the label associated to $\x_m$, and the model admits $\xp{m}$ as input and predicts the next state $\hat\y_m:=\hat\x_{m+1}=\TF(\xp{m})\in\Xc$. 

We first consider the training phase of \icl where we wish to learn a good $\TF(\cdot)$ model through MTL. Suppose we have $T$ tasks associated with data distributions $\{\Dc_{t}\}_{t=1}^T$. Each task independently samples a training sequence $\Sc_t=(\z_{ti})_{i=1}^n$ according to its distribution. $\Sca=\{\Sc_t\}_{t=1}^T$ denote the set of all training {sequences}. We use $\Scnt{m}{t}=(\z_{t1},\dots,\z_{tm})$ to denote a subsequence of $\Sc_t:=\Scnt{n}{t}$ for $m\leq n$ {and $\Scnt{0}{}$ denotes an empty subsequence. }


ICL can be interpreted as an implicit optimization on the subsequence $\Scnt{m}{}=(\z_{1},\z_{2},\dots,\z_{m})$ to make prediction on $\x_{m+1}$. To model this, we abstract the transformer model as a {learning algorithm that maps a sequence of data to a prediction function (e.g.~gradient descent, empirical risk minimization)}. Concretely, let $\Ac$ be a set of algorithm hypotheses such that algorithm $\bal\in\Ac$ maps a sequence of form $\Scnt{m}{}$ into a prediction function $\fal{m}{}:\Xc\rightarrow\Yc$. With this, we represent $\TF$ via
\begin{align}
\TF(\xp{m+1})=\fal{m}{}(\x_{m+1}).\nn
\end{align}
{This abstraction is without losing generality as we have the explicit relation $\fal{m}{}(\x):=\TF((\Scnt{m}{},\x))$.} 
Given training sequences, $\Sca$ and a loss function $\ell(\y,\hat{\y})$, the \icl training can be interpreted as searching for the optimal algorithm $\bal\in\Ac$, and the training objective becomes
\begin{align}\tag{ERM}
    \bah=\underset{\bal\in\Ac}{\arg\min}~&\Lch_{\Sca}(\bal):=\frac{1}{T}\sum_{t=1}^T\Lch_t(\bal)\label{mtl opt}  \\
    \text{where}\quad&\Lch_t(\bal)=\frac{1}{n}\sum_{i=1}^n \ell(\y_{ti},\fal{i-1}{t}(\x_{ti})).\nn
\end{align}
Here, $\Lch_t(\bal)$ is the training loss of task $t$ and $\Lch_\Sca(\bal)$ is the task-averaged MTL loss. Let $\Lc_t(\bal)=\E_{\Sc_t}[\Lch_t(\bal)]$ and $\Lc_\Dca(\bal)=\E[\Lch_\Sca(\bal)]=\frac{1}{T}\sum_{t=1}^T\Lc_t(\bal)$ be the corresponding population risks. {Observe that, task-specific loss $\Lch_t(\bal)$ is an empirical average of $n$ terms, one for each prompt $\xp{i}$.}

To develop generalization bounds, our primary interest is controlling the gap between empirical and population risks. For problem \eqref{mtl opt}, we wish to bound the excess MTL risk
\begin{align}
    \Rmtl(\bah)=\Lc_\Dca(\bah)-\min_{\bal\in\Ac}\Lc_\Dca(\bal).\label{mtl risk}
\end{align}



Following the MTL training \eqref{mtl opt}, we also evaluate the model on previously-unseen tasks; this can be thought of as the transfer learning problem. Concretely, let $\dtask$ be a distribution over tasks and draw a target task $\Tc\sim\dtask$ with data distribution $\Dc_\Tc$ and {a sequence} $\Sc_\Tc=\{\z_i\}_{i=1}^n\sim\Dc_\Tc$. Define the empirical and population risks on $\Tc$ as $\Lch_\Tc(\bal)=\frac{1}{n}\sum_{i=1}^n\ell(\y_{i},\fal{i-1}{\Tc}(\x_{i}))$ and $\Lc_\Tc(\bal)=\E_{\Sc_\Tc}[\Lch_\Tc(\bal)]$. Then the transfer risk of an algorithm $\bal$ is defined as $\Lc_\tfr(\bal)=\E_{\Tc}[\Lc_\Tc(\bal)]$.
With this setup, we are ready to state our main contributions.

\section{Generalization in In-context Learning}\label{sec mtl}

In this section, we study ICL under the i.i.d.~data setting with {training sequences} $\Sc_t=(\x_{ti},\y_{ti})_{i=1}^n\distas\Dc_t$. Section~\ref{sec dynamic} extends our results to dynamical systems. 
%
\subsection{Algorithmic Stability}\label{sec stable}

In ICL a training example $(\x_i,\y_i)$ in the prompt impacts all future decisions of the algorithm from predictions $i+1$ to $n$. This necessitates us to control the stability to input perturbation of the learning algorithm emulated by the transformer. Our stability condition is borrowed from the algorithmic stability literature. As stated in \cite{bousquet2000algorithmic,bousquet2002stability}, the stability level of an algorithm is typically in the order of $1/m$ (for realistic generalization guarantees) where $m$ is the training sample size (in our setting prompt length). This is formalized in the following assumption that captures the variability of the transformer output.
\begin{assumption}[Error Stability \cite{bousquet2002stability}]\label{assump robust}
    Let $\Zb=(\x_i,\y_i)_{i=1}^m$ be a sequence in $\Xc\times \Yc$ with $m\geq1$ and $\Zb^j$ be the sequence where the $j$'th sample of $\Zb$ is replaced by $(\x'_j,\y'_j)$. Error stability holds for a distribution $(\x, \y)\sim\Dc$ if there exists a $K>0$ such that for any $\Zb,(\x_j',\y_j')\in(\Xc\times\Yc)$, $j\leq m$, and $\bal\in\Ac$, we have
    \begin{align}
        \Big|\E_{(\x, \y)}\big[\ell(\y, f^\bal_\Zb(\x)) -\ell(\y,f^\bal_{\Zb^j}(\x))\big] \Big| \leq \frac{K}{m}.  \label{eq robust 1}
    \end{align}
    Let $\rho$ be a distance metric on $\Ac$. Pairwise error stability holds if for all $\bal,\bal'\in\Ac$ we have
    \ifswitch
    \begin{align}
        \Big|&\E_{(\x, \y)}\big[\ell(\y, f^\bal_\Zb(\x))-\ell(\y, f^{\bal'}_\Zb(\x)) \nonumber \\
        &-\ell(\y,f^\bal_{\Zb^j}(\x))+\ell(\y, f^{\bal'}_{\Zb^j}(\x))\big] \Big| \leq \frac{K\rho(\bal,\bal')}{m}.\nonumber  
    \end{align}
    \else
    \begin{align}
        \Big|\E_{(\x, \y)}\big[\ell(\y, f^\bal_\Zb(\x))-\ell(\y, f^{\bal'}_\Zb(\x))
        -\ell(\y,f^\bal_{\Zb^j}(\x))+\ell(\y, f^{\bal'}_{\Zb^j}(\x))\big] \Big| \leq \frac{K\rho(\bal,\bal')}{m}.\nonumber  
    \end{align}
    \fi
    
\end{assumption}
Here \eqref{eq robust 1} is our primary stability condition borrowed from \cite{bousquet2002stability} and ensures that all algorithms $\bal\in\Ac$ are $K$-stable. We will also use the stronger pairwise stability condition to develop tighter generalization bounds.
The following theorem shows that, under mild assumptions, a multilayer transformer obeys the stability condition \eqref{eq robust 1}. The proof is deferred to Appendix \ref{sec stable proof} and Theorem \ref{stable thm}.
\begin{theorem} \label{main stable}Let $\xp{m},\xpp{m}$ be two prompts that only differ at the inputs $\z_j=(\x_j,\y_j)$ and $\z_j'=(\x'_j,\y'_j)$ where $j < m$. Assume inputs and labels lie within the unit Euclidean ball in $\R^d$~\footnote{Here, we assume $\Xc,\Yc\subset\R^d$, otherwise, inputs and labels are both embedded into $d$-dimensional vectors of proper size.}. Shape these prompts into matrices $\X_{\prm},\X'_{\prm}\in\R^{(2m-1)\times d}$ respectively. Let $\TF(\cdot)$ be a $D$-layer transformer as follows: Setting $\X_{(0)}:=\X_{\prm}$, the $i$'th layer applies MLPs and self-attention\footnote{In self-attention the softmax function is applied to each row.} and outputs
\ifswitch
\begin{align}
    \X_{(i)} =&\texttt{Parallel\_MLPs}(\texttt{ATTN}(\X_{(i-1)})) \nonumber  \\
    &\text{where}\quad \texttt{ATTN}(\X):=\sft{\X \W_i \X^\top} \X\Vb_i. \nonumber
\end{align}
\else
\begin{align}
    \X_{(i)} =\texttt{Parallel\_MLPs}(\texttt{ATTN}(\X_{(i-1)})) \quad
    \text{where}\quad \texttt{ATTN}(\X):=\sft{\X \W_i \X^\top} \X\Vb_i. \nonumber
\end{align}
\fi
 Assume $\TF$ is normalized as $\|\Vb\|\leq 1$, $\|\W\|\leq \Gamma/2$ and MLPs obey $\texttt{MLP}(\x)=\texttt{ReLU}(\M\x)$ with $\|\M\|\leq 1$. \red{Let $\TF$ output the last token of the final layer $\X_{(D)}$ that correspond to the query $\x_m$.} Then,
\[
|\TF(\xp{m})-\TF(\xpp{m})|\leq \frac{2}{2m-1}((1+\Gamma)e^\Gamma)^D.
\]
Thus, assuming loss $\ell(\y,\cdot)$ is $L$-Lipschitz, the algorithm induced by $\TF(\cdot)$ obeys \eqref{eq robust 1} with $K=2L((1+\Gamma)e^\Gamma)^D$.
\end{theorem}
A few remarks are in place. First, the dependence on depth is exponential. However, this is not as prohibitive for typical transformer architectures which tend to not be very deep. For example, the different variants of GPT-2 and BERT have between 12-48 layers \cite{hugface}. \red{In our theorem, the upper bound on $\Gamma$ helps ensure that one token cannot have substantial influence on another one. In Appendix \ref{app stable}, we provide a more general version of this result which also covers our stronger stability assumption for dynamical systems (see Theorem \ref{stable thm}). Importantly, we also show that our theorem is rather tight (see Sec \ref{sec unstable}): (1) Stability can fail if $\Gamma$ is allowed to be logarithmic in $m$ indicating the tightness of our $e^\Gamma/m$ bound. (2) It is also critical that the modified token is not the last one (i.e.~$j<m$ condition), otherwise stability can again fail.} The key technicality in our result is establishing the stability of the self-attention layer which is the central component of a transformer, see Lemma \ref{lem stable}.
Finally, Figure \ref{fig:robust} provides numerical evidence for multiple \icl problems and demonstrate that stability of GPT-2 architecture's predictions with respect to inputs indeed improves with longer prompts in line with theory.


\begin{figure*}
	\centering
	\begin{minipage}{0.50\textwidth}
		\centering
		\begin{tikzpicture}
		\centering
		\node at (0,0) {\includegraphics[scale=0.3]{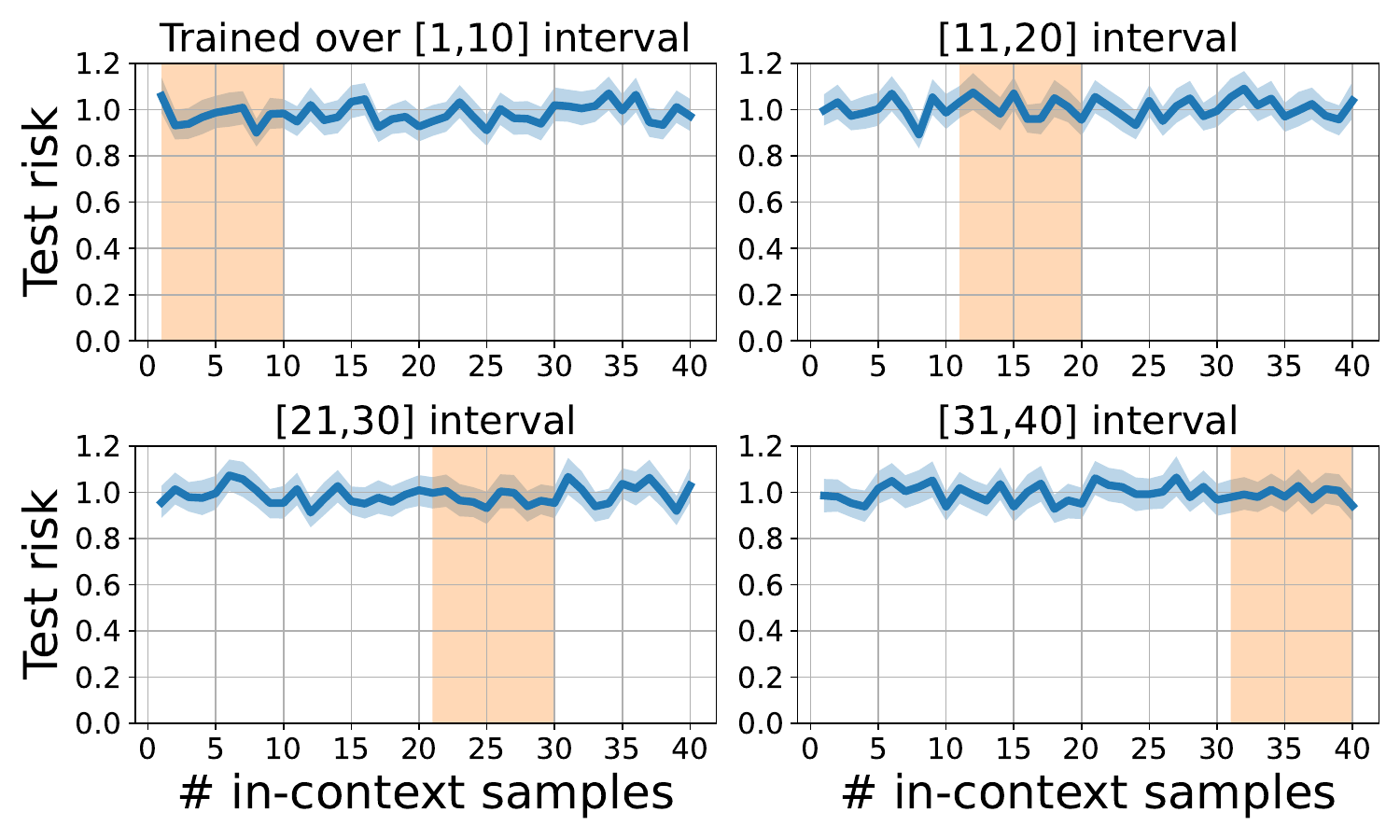}};
		\node at (0, -2.5) {\small{(a) ICL trained with $10$ prompts over a range}}; 
		\end{tikzpicture}
	\end{minipage}
	\hspace{0pt}
	\begin{minipage}{0.35\textwidth}
		\centering
		\begin{tikzpicture}
			\centering
			\node at (0,0) {\includegraphics[scale=0.3]{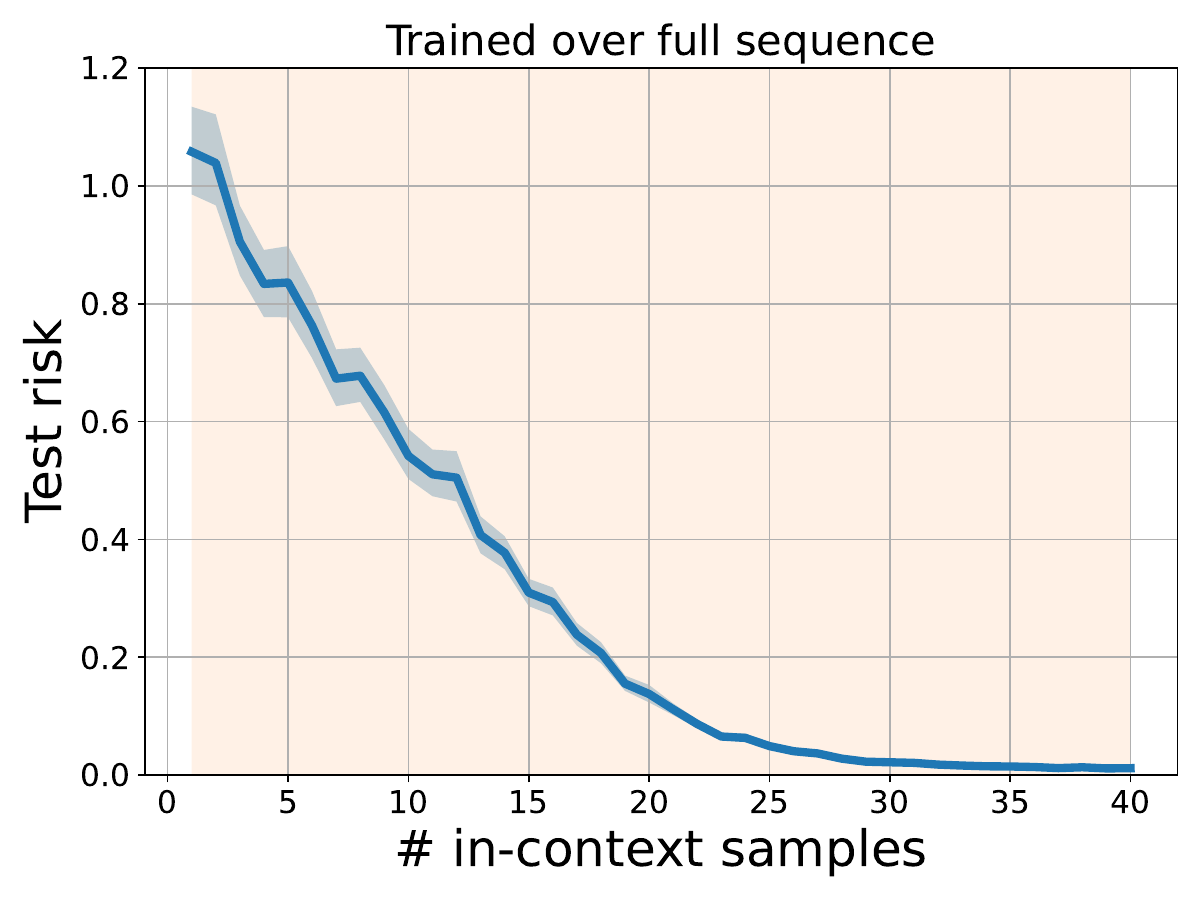}};
			\node at (0.1, -2.5) {\small{(b) ICL with all $n=40$ prompts}}; 
		\end{tikzpicture}
	\end{minipage}
	\vspace{-7pt}
	\caption{\small{The benefit of learning across the full task sequence: \textbf{Right side:} Standard \ref{mtl opt} where each task trains with all $n=40$ prompts. \textbf{Left side:} \ref{mtl opt} focuses on different parts of the trajectory by fitting $n/4=10$ prompts per task over $i\in[1,10]$ to $[31,40]$ (highlighted as the orange ranges). We train with $T=6.4$ million random linear regression tasks and display the performance on new tasks (i.e.~transfer risk). Right side learns to solve linear regression via ICL whereas left side fails to do so even when restricted to their target ranges.}}
 
	%
	 \label{fig:vary_n}
	\end{figure*}
	
\subsection{Generalization Bounds}\label{sec mtl bound}
We are ready to establish generalization bounds by leveraging our stability conditions. We use covering numbers (i.e.~metric entropy) to control model complexity.
\begin{definition}[Covering number]\label{def cover} Let $\Qc$ be any hypothesis set and $d(q,q')\geq0$ be a distance metric over $q,q'\in\Qc$. Then, $\bar\Qc=\{q_1,\dots,q_N\}$ is an $\eps$-cover of $\Qc$ with respect to $d(\cdot,\cdot)$ if for any $q\in\Qc$, there exists $q_i\in\bar\Qc$ such that $d(q,q_i)\leq\eps$. The $\eps$-covering number $\Nc(\Qc,d,\eps)$ is the cardinality of the minimal $\eps$-cover.
\end{definition}
To cover the algorithm space $\Ac$, we need to introduce a distance metric. We formalize this in terms of the prediction difference between the two algorithms on the worst-case data-sequence.
\begin{definition}[Algorithm distance]\label{def distance} Let $\Ac$ be an algorithm hypothesis set and $\Sc=(\x_i,\y_i)_{i=1}^n$ be a sequence that is admissible {for some task $t\in[T]$}. {For any pair $\bal,\bal'\in\Ac$, define the distance metric $\rho(\bal,\bal'):=\sup_{\Sc}\frac{1}{n}\sum_{i=1}^n\tn{\fal{i-1}{}(\x_{i})-\falp{i-1}{}(\x_{i})}$.}
\end{definition}
We note that the distance $\rho$ is controlled by the Lipschitz constant of the transformer architecture (i.e.~the largest gradient norm with respect to the model weights). Following Definitions~\ref{def cover}\&\ref{def distance}, the $\eps$-covering number of the hypothesis set $\Ac$ is $\Nc(\Ac,\rho,\eps)$. This brings us to our main result on the MTL risk of \eqref{mtl opt}.

\begin{theorem} \label{thm mtl}
    Suppose $\Ac$ is $K$-stable per Assumption~\ref{assump robust} for all $T$ tasks and the loss function $\ell(\y,\cdot)$ is $L$-Lipschitz taking values over $[0,1]$. 
    Let $\bah$ be the empirical solution of \eqref{mtl opt}. Then, with probability at least $1-2\delta$, the excess MTL test risk obeys, 
    \ifswitch$\Rmtl(\bah)\leq$
    \begin{equation}\label{main result 1}
    \begin{split}
        \inf_{\eps>0}\Big\{4L\eps + 2(1+K\log n)\sqrt{\frac{\log(\Nc(\Ac,\rho,\eps)/\delta)}{cnT}}\Big\}.
        \end{split}
    \end{equation}
    \else
    \begin{equation}\label{main result 1}
    \begin{split}
        \Rmtl(\bah)\leq\inf_{\eps>0}\left\{4L\eps + 2(1+K\log n)\sqrt{\frac{\log(\Nc(\Ac,\rho,\eps)/\delta)}{cnT}}\right\}.
        \end{split}
    \end{equation}
    \fi
    Additionally suppose $\Ac$ is $K$-pairwise-stable and set diameter $D:=\sup_{\bal,\bal'\in\Ac}\rho(\bal,\bal')$. Using the convention $x_+=\max(x,1)$, with probability at least $1-4\delta$,
    \ifswitch
    \begin{align}\label{main result 2}
        \begin{split}
            \Rmtl&(\bah)\lesssim\inf_{\eps>0}\Bigg\{L\eps+ \frac{L_++K\log n}{\sqrt{nT}}\cdot\\ &\Big(\int_{\eps}^{D/2}\sqrt{\log\Nc(\Ac,\rho,u)}du+D_+\sqrt{\log\frac{1}{\delta}}\Big)\Bigg\}.
        \end{split}
    \end{align}
    \else
    \begin{align}\label{main result 2}
        \begin{split}
            \Rmtl(\bah)\lesssim\inf_{\eps>0}\Bigg\{L\eps+ \frac{L_++K\log n}{\sqrt{nT}}\Big(\int_{\eps}^{D/2}\sqrt{\log\Nc(\Ac,\rho,u)}du+D_+\sqrt{\log\frac{1}{\delta}}\Big)\Bigg\}.
        \end{split}
    \end{align}
    \fi

\end{theorem}
The first bound \eqref{main result 1} achieves $1/\sqrt{nT}$ rate by covering the algorithm space with resolution $\eps$. For Lipschitz architectures with $\text{dim}(\Ac)$ trainable weights we have $\log\Nc(\Ac,\rho,\eps)\sim\text{dim}(\Ac)\log(1/\eps)$. Thus, up to logarithmic factors, the excess risk is bounded by $\sqrt{\frac{\text{dim}(\Ac)}{nT}}$ and will vanish as $n,T\to\infty$. 
{Note that our bound is also task-dependent through $\rho$ in Def.~\ref{def distance}. For instance, suppose tasks are realizable with labels $\y=f(\x)$ and admissible task sequences have the form $\Sc=(\x_i,f(\x_i))_{i=1}^n$. Then, $\rho$ will depend on the function class of $f$ (e.g.~whether $f$ is a linear model, neural net, etc), specifically, as the function class becomes richer, both $\rho$ and the covering number becomes larger.} 

Under the stronger pairwise-stability, we can obtain a bound in terms of Dudley's entropy integral which arises from a chaining argument. This bound is typically in the same order as the Rademacher complexity of the function class with $T\times n$ samples \cite{wainwright2019high}. Note that achieving $1/\sqrt{T}$ dependence is rather straightforward as tasks are sampled independently. Thus, the main feature of Theorem \ref{thm mtl} is obtaining the multiplicative $1/\sqrt{n}$ term by overcoming temporal dependencies. Figure \ref{fig:vary_n} shows that training with full {sequence} is indeed critical for ICL accuracy. \ifswitch\else\smallskip\fi


\begin{proofsk}
The main component of the proof is to find a concentration bound on {$|\Lc_{\Dca}(\bal)-\Lch_{\Sca}(\bal)|$} for a fixed algorithm $\bal \in \Ac$. To achieve this bound, we introduce the  sequence of variables $X_{t,i}=\E\left[\frac{1}{n}\sum_{j=1}^n \ell(\y_{tj},\fal{j-1}{t}(\x_{tj}))~\big|~\Scnt{i}{t}\right]$ for $0\leq i\leq n$, which forms a martingale by construction. The critical stage is upper bounding the martingale differences $|X_{t,i}-X_{t,i-1}|$ through our stability assumption, namely, increments are at most $1+\sum_{j=i}^n\frac{K}{j}\lesssim 1+K\log n$. Then, we utilize Azuma-Hoeffding's inequality to achieve a concentration bound on {$\left|\frac{1}{T}\sum_{t=1}^TX_{t,0} - X_{t, n}\right|$}, which is equivalent to {$|\Lc_{\Dca}(\bal)-\Lch_{\Sca}(\bal)|$}. To conclude, we make use of covering/chaining arguments to establish a uniform concentration bound on {$|\Lc_{\Dca}(\bal)-\Lch_{\Sca}(\bal)|$} for all $\bal \in \Ac$. The details are provided in Appendix \ref{app thm proof}. 
\end{proofsk}\ifswitch\else\smallskip\fi

\noindent\textbf{Multiple sequences per task.} Finally consider a setting where each task is associated with $M$ independent sequences with size $n$. This typically arises in reinforcement learning problems (e.g.~dynamical systems in Sec. \ref{sec dynamic}) where we collect data through multiple rollouts each leading to independent sequences. In this setting, the statistical error rate improves to $1/\sqrt{nMT}$ as discussed in Appendix \ref{app thm proof}. In the next section, we will contrast MTL vs transfer learning by letting $M\rightarrow \infty$. This way, even if $n$ and $T$ are fixed, the model will fully learn the $T$ source tasks during the MTL phase as the excess risk vanishes with $M\rightarrow\infty$.

\begin{figure*}
	\centering
	\begin{minipage}{0.23\textwidth}
		\centering
		\begin{tikzpicture}
		\centering
		\node at (0,0) {\includegraphics[scale=0.2]{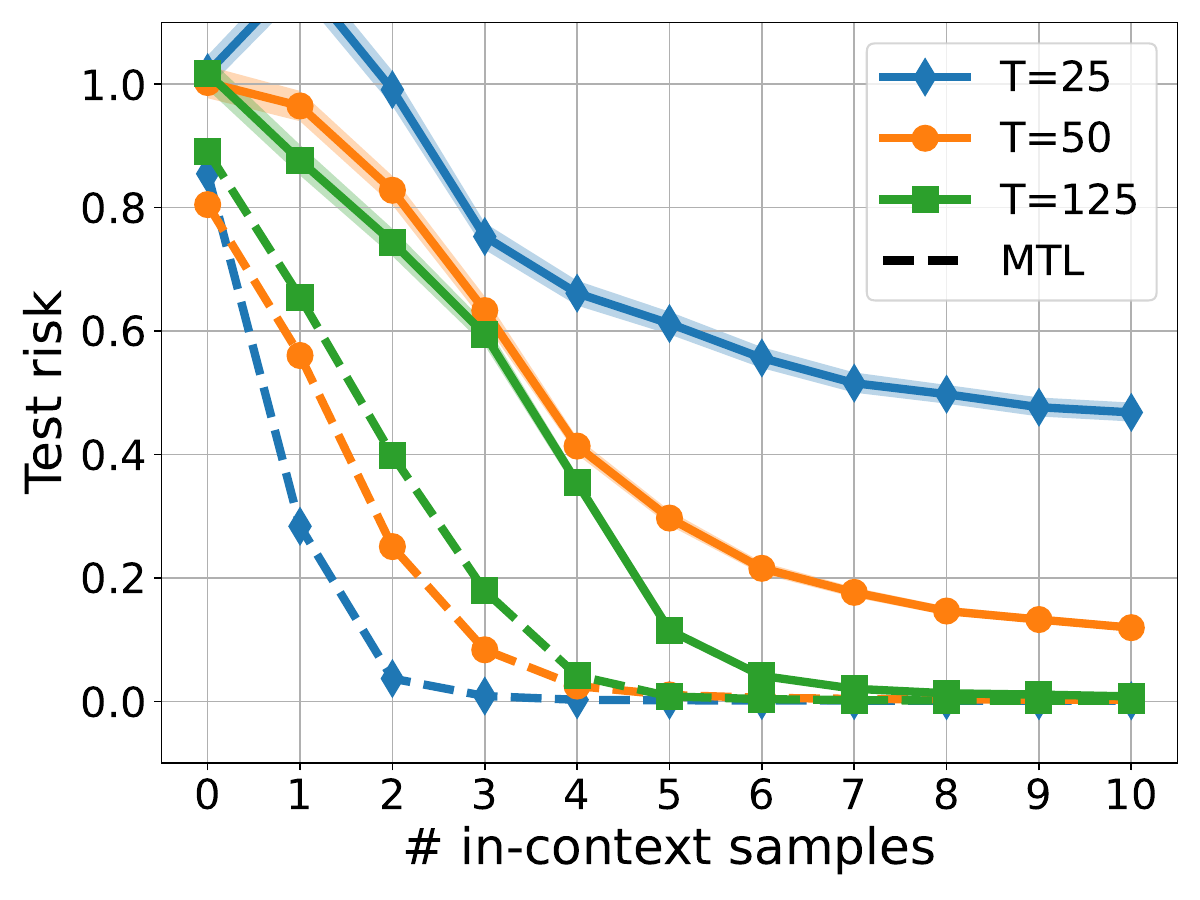}};
		\node at (0, -1.8) {\small{(a) $d=5$}}; 
		\end{tikzpicture}
	\end{minipage}
	\hspace{2pt}
	\begin{minipage}{0.23\textwidth}
		\centering
		\begin{tikzpicture}
			\centering
			\node at (0,0) {\includegraphics[scale=0.2]{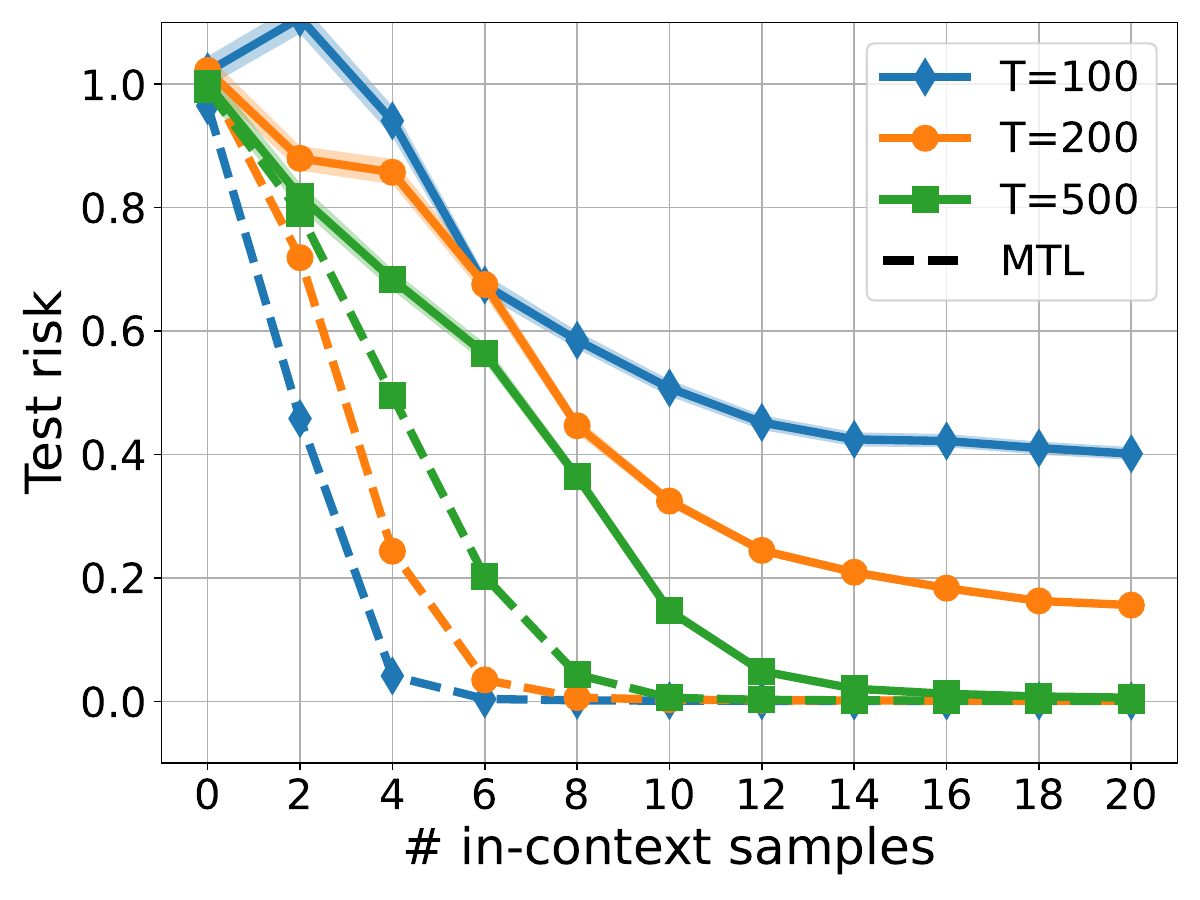}};
			\node at (0.1, -1.8) {\small{(b) $d=10$}}; 
		\end{tikzpicture}
	\end{minipage}
	\hspace{2pt}
	\begin{minipage}{0.23\textwidth}
		\centering
		\begin{tikzpicture}
			\centering
			\node at (0,0) {\includegraphics[scale=0.2]{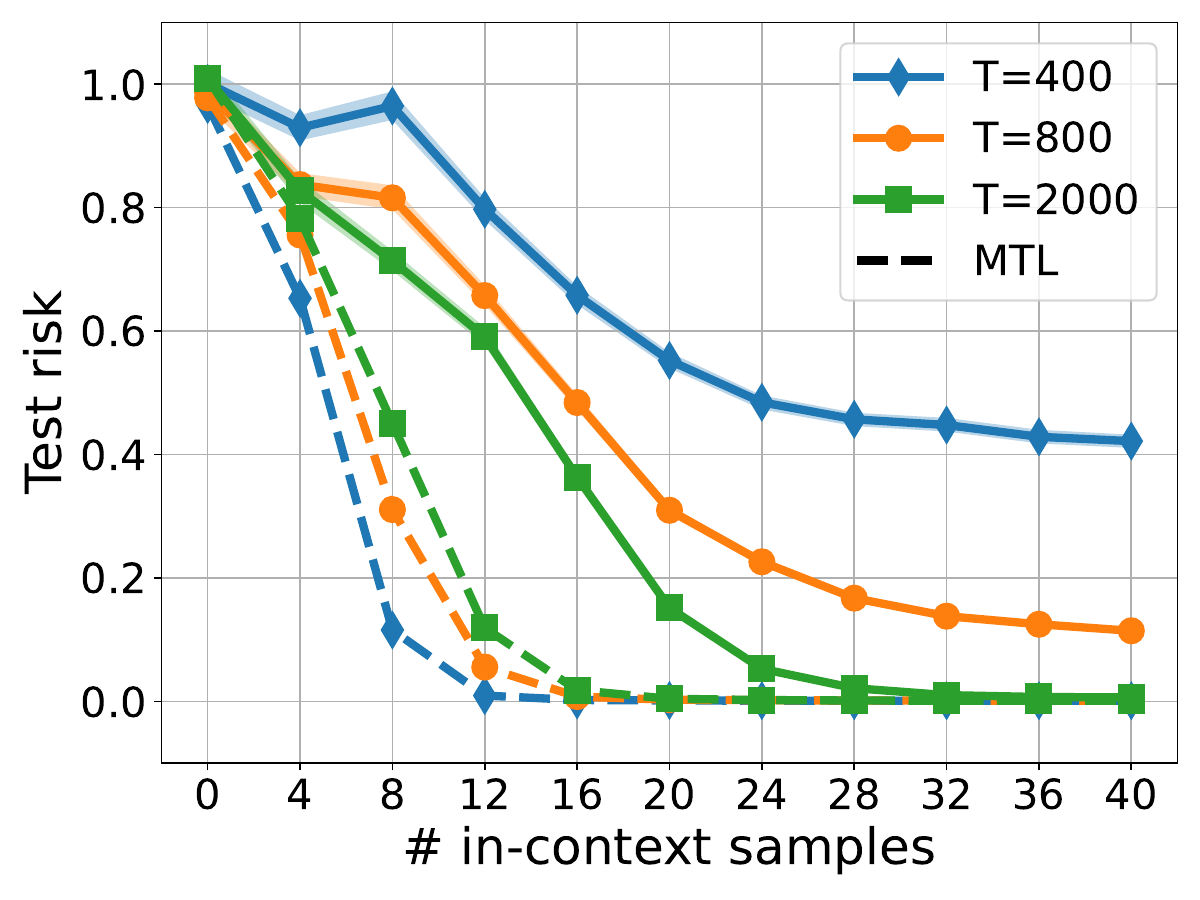}};
			\node at (0, -1.8) {\small{(c) $d=20$}}; 
		\end{tikzpicture}
	\end{minipage}
	\hspace{2pt}
	\begin{minipage}{0.23\textwidth}
		\centering
		\begin{tikzpicture}
			\centering
			\node at (0,0) {\includegraphics[scale=0.2]{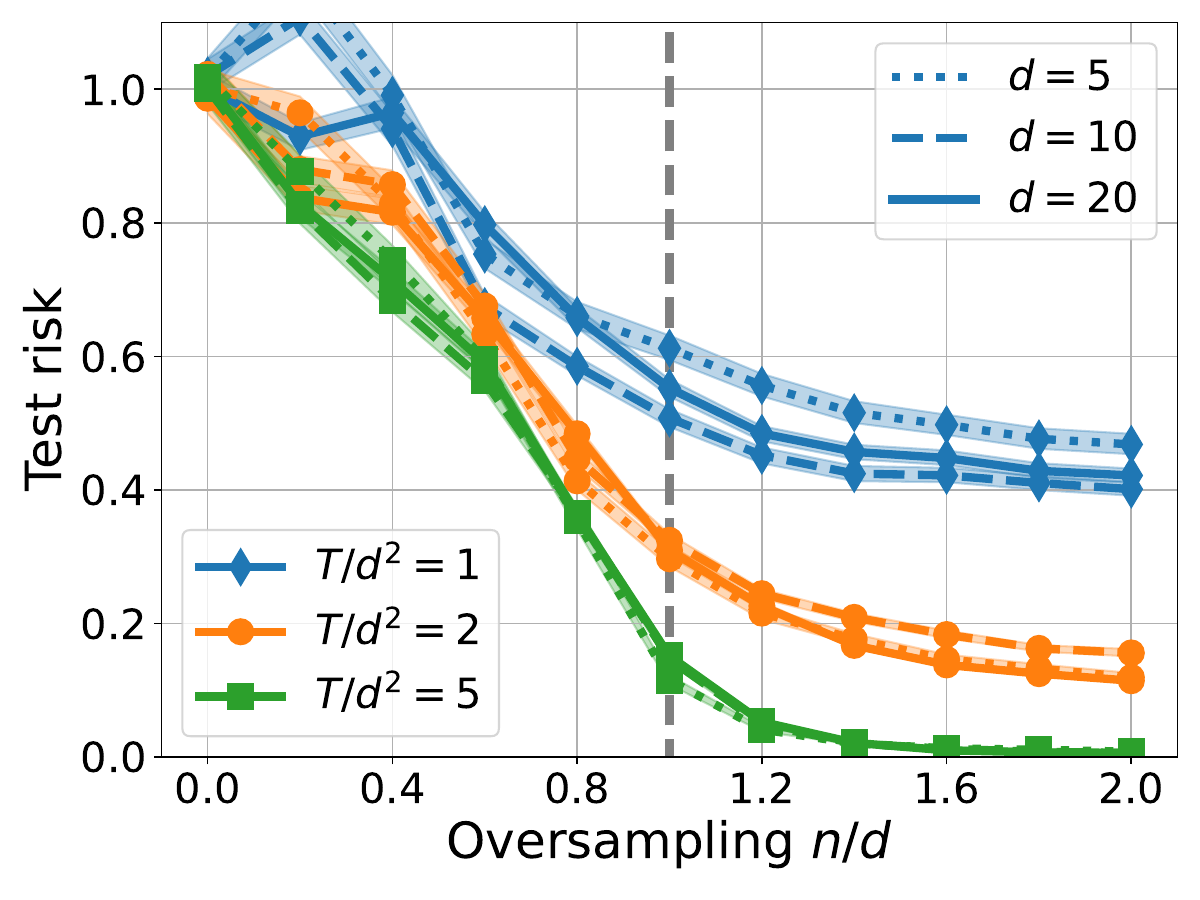}};
			\node at (0, -1.8) {\small{(d) $d=5,10,20$ overlayed}}; 
		\end{tikzpicture}
	\end{minipage}
	\vspace{-7pt}
	\caption{\small{In Figures (a,b,c,), we plot the $d\in \{5,10,20\}$-dimensional results for transfer and MTL risk curves with the same GPT-2 architecture. Figure (d) overlays (a,b,c) to reveal that transfer risks are aligned for fixed $(n/d,T/d^2)$ choice.}}\label{fig:transfer}
	\vspace{-10pt}
	\end{figure*}
\section{Generalization and Inductive Bias on Unseen Tasks}\label{sec transfer}
In this section, we explore transfer learning to assess the performance of ICL on new tasks: The MTL phase generates a model $\bah$ trained on $T$ source tasks and we use $\bah$ to predict a target task $\Tc$. Consider a meta-learning setting where $T$ sources are drawn from the distribution $\dtask$ and we evaluate the transfer risk on a new $\Tc\sim \dtask$. We aim to control the transfer risk $\Lc_\tfr(\bah)=\E[\Lc_\Tc(\bah)]$ in terms of the MTL risk $\Lc_\Dca(\bah)$. When the source tasks are i.i.d, one can use a standard generalization analysis to bound the transfer risk as follows $\Lc_\tfr(\bah)-\Lc_\Dca(\bah)\lesssim \sqrt{{\log(\Nc(\Ac,\rho,\eps)}/{T}}$ (see Thm \ref{tfr thm}).

Here, an important distinction with MTL is that transfer risk decays as $1/\text{poly}(T)$ because the unseen tasks induce a distribution shift, which, typically, cannot be mitigated with more samples $n$ or more sequences-per-task $M$. \ifswitch\else\smallskip\fi

\noindent$\bullet$ \textbf{Inductive Bias in Transfer Risk.} Before investigating distribution shift, let us consider the following question: While $1/\text{poly}(T)$ behavior may be unavoidable, is it possible that dependence on architectural complexity $\text{dim}(\Ac)$ is avoidable? Perhaps surprisingly, we answer this question affirmatively through experiments on linear regression. In what follows, during MTL pretraining, we train with $M\rightarrow\infty$ independent sequences per task to minimize population MTL risk $\Lc_\Dca(\cdot)$. We then evaluate resulting $\bah$ on different dimensions $d$ and numbers of MTL tasks $T$. Figures \ref{fig:transfer}(a,b,c) display the MTL and transfer risks for dimensions $d=5,10,20$. In each figure, we evaluate the results on $T=\{1,2,5\}\times d^2$ and the $x$-axis moves from $0$ to $n=2d$. Each task has isotropic features, noiseless labels and task vectors $\bt\sim\Nn(0,\Iden_d)$. Here, our first observation is that, the Figures \ref{fig:transfer}(a,b,c) seem (almost perfectly) aligned with each other, that is, each figure exhibits identical MTL and transfer risk curves. To further elucidate this, Figure \ref{fig:transfer}(d) integrates the transfer risk curves from $d=5,10,20$ and overlays them together. This alignment indicates that, for a fixed point $\alpha=n/d$ and $\beta=T/d^2$, the transfer risks remain unchanged. Here, $n$ proportional to $d$ can be attributed to linearity, thus, the more surprising aspect is the dependence on $T$: This is because rather than ${\text{dim}(\Ac)}/{T}$ (where $\Ac$ is fixed to a GPT-2 architecture), the generalization risk behaves like ${d^2}/{T}$. Thus, rather than model complexity, what matters seems to be the task complexity $d$. \red{In support of this hypothesis, Figure~\ref{fig:d5 overlay} trains ICL on GPT-2 architectures with up to 64 times different parameter counts and reveals that transfer risk indeed exhibits little dependence on the model complexity $\dim(\Ac)$.}

Inductive bias is a natural explanation of this behavior: Intuitively, the MTL pretraining process identifies a favorable algorithm that lies in the span of the source tasks $\bT_{\Dca}=(\bt_t)_{t=1}^T$. Specifically, while the transformer model can potentially fit MTL tasks through a variety of algorithms, we speculate that the optimization process is implicitly biased  to an algorithm $\bal(\bT_{\Dca})$ (akin to \cite{soudry2018implicit,neyshabur2017geometry}). Such bias would explain the lack of $\text{dim}(\Ac)$ dependence since $\bal(\bT_{\Dca})$ solely depends on the source tasks. While we leave the theoretical exploration of the empirical $d^2/T$ behavior to a future work, below we explain that $d^2/T$ dependence is rather surprising.

To this end, let us first introduce the optimal estimator (in terms of Bayes risk) for linear regression with Gaussian task prior $\bt\sim\Nc(0,\bSi)$. This estimator can be described explicitly \cite{richards2021asymptotics,lindley1972bayes} and is given by the weighted ridge regression solution 
\begin{align}
\hat{\bt}=(\X^\top \X+\sigma^2\bSi^{-1})^{-1}\X^\top\y.
\label{wridge}
\end{align}
Here $\X=[\x_1,\dots,\x_n]^\top\in\R^{n\times d},\y=[\y_1,\dots,\y_n]^\top\in\R^{n}$ are the concatenated features and labels obtained from the task sequence and $\sigma^2$ is the label noise variance. With this in mind, what is the ideal algorithm $\bal(\bT_{\Dca})$ based on the (perfect) knowledge of source tasks? Eqn.~\eqref{wridge} crucially requires the knowledge of the task covariance $\bSi$ and variance $\sigma^2$. Thus, even with the hindsight knowledge that our problem is linear, we have to estimate the task covariance from source tasks. This can be done via the empirical covariance $\hat{\bSi}=\frac{1}{T}\sum_{i=1}^T \bt_i\bt_i^\top$. To ensure $\hat{\bSi}$-weighted LS performs ${\cal{O}}(1)$-close to $\bSi$-weighted LS, we need a spectral norm control, namely, ${\|\bSi-\hat{\bSi}\|}/{\lambda_{\min}(\bSi)}\leq {\cal{O}}(1)$. When $\bSi=\Iden_d$ (as in our experiments) and tasks are isotropic, the latter condition holds with high probability when $T=\Omega(d)$. This is also numerically demonstrated in Figure \ref{fig:hindsight} in the appendix. This behavior is in contrast to the stronger $T\propto d^2$ requirement we observe for ICL and indicates that ICL training may not be sample-optimal in terms of $T$. For instance, $T\propto d^2$ is sufficient to ensure the stronger entrywise control $\tin{\bSi-\hat{\bSi}}\leq O(1)$ rather than spectral norm. \ifswitch\else\smallskip\fi



\ifswitch
\begin{figure}[t]
 \vspace{-0.15cm}
\centering
		\begin{tikzpicture}
			\centering
			\node at (0,0) {\includegraphics[scale=0.26]{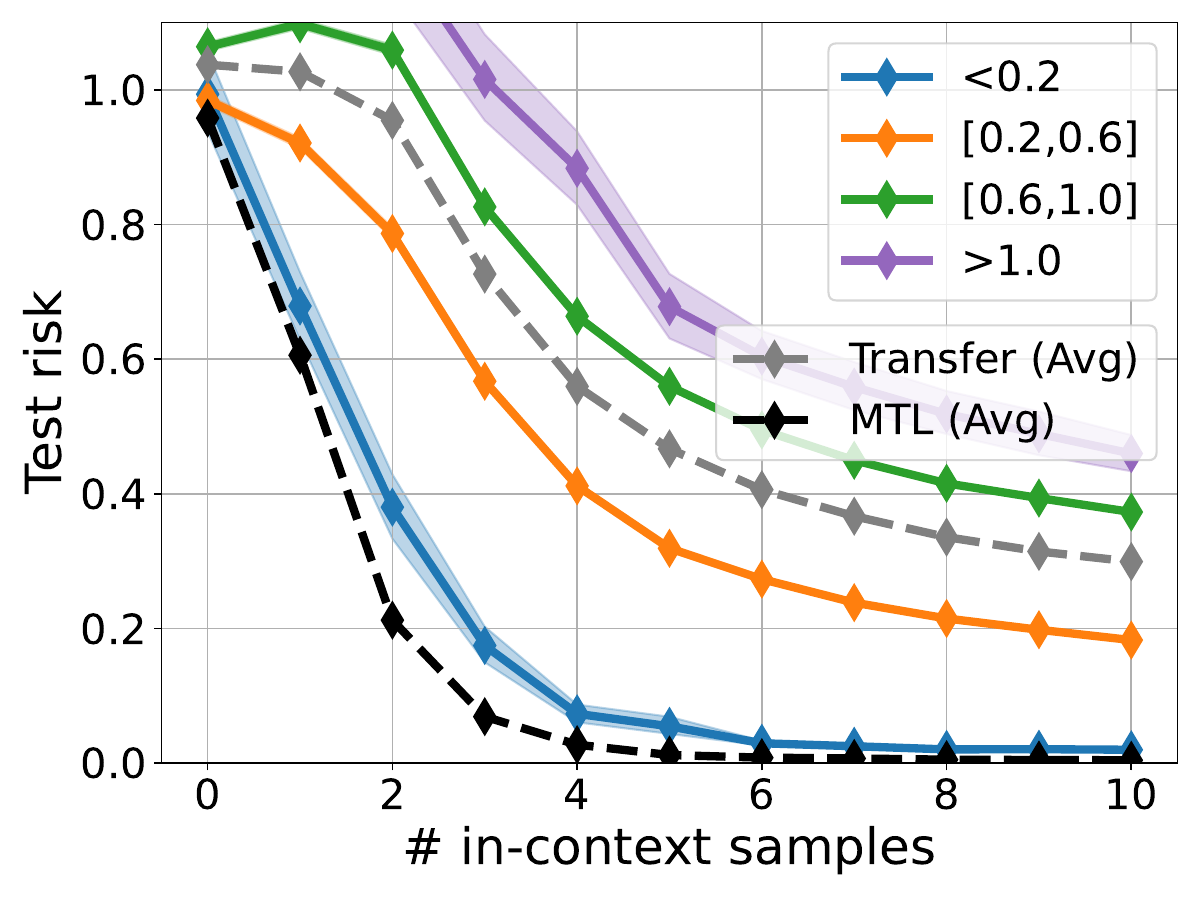}};
		\end{tikzpicture}\vspace{-10pt}
\caption{Transfer risk as a function of the distance to the source (MTL) tasks. Distant tasks (with smaller cosine similarity) generalize worse.}
\label{fig:div}
\vspace{-10pt}
\end{figure}
\else
\begin{wrapfigure}{r}{5.5cm}
\vspace{-0.45cm}
\centering
		\begin{tikzpicture}
			\centering
			\node at (0,0) {\includegraphics[scale=0.26]{figs/diverseT20_meta_unit.pdf}};
		\end{tikzpicture}\vspace{-10pt}
\caption{\small{Transfer risk as a function of distance to the MTL tasks. Distant tasks (with smaller cosine similarity) generalize worse.}}\vspace{-10pt}
\label{fig:div}
\end{wrapfigure}
\fi
\noindent$\bullet$ \textbf{Exploring transfer risk via source-target distance.} Besides drawing source and target tasks from the same $\dtask$, we also investigate transfer risk in an instance specific fashion. Specifically, the population risk of a new task $\Tc$ can be bounded as $\Lc_{\Tc}(\bal)\leq \Lc_{\Dca}(\bal)+\text{dist}(\Tc,(\Dc_t)_{t=1}^T)$. Here, $\text{dist}(\cdot)$ assesses the (distributional) distance of task $\Tc$ to the source tasks $(\Dc_t)_{t=1}^T$ (e.g.~\cite{ben2010theory,hanneke2019value}). In case of linear tasks, we can simply use the Euclidean distance between task vectors, specifically, the distance of target weights $\bt_{\Tc}$ to the nearest source task $\text{dist}(\Tc)=\min_{t\in[T]}\tn{\bt_\Tc-\bt_t}$. In Fig.~\ref{fig:div} we investigate the distance of specific target tasks from source tasks and how the distance affects the transfer performance. Here, all source and target tasks have unit Euclidean norms so that closer distance is equivalent to larger cosine similarity. We again train each MTL task with multiple sequences $M\rightarrow \infty$ (as in Fig.~\ref{fig:transfer}) and use $T=20$ source tasks with {$d=5$ dimensional regression} problems. In a nutshell, Figure \ref{fig:div} shows that Euclidean task similarity is indeed highly predictive of transfer performance across different distance slices (namely $[0,0.2],[0.2,0.6],[0.6,1],[1,2]$). 

\section{Extension to Stable Dynamical Systems}\label{sec dynamic}
Until now, we have studied ICL with sequences of i.i.d. (input, label) pairs. In this section, we investigate the scenario where {prompts are obtained from the trajectories of stable dynamical systems, thus, they consist of dependent data.} Let $\Xc\subset\R^d$ and $\Fc : \Xc \rightarrow \Xc$ be a hypothesis class elements of which are dynamical systems. During MTL phase, suppose that we are given $T$ tasks associated with $(f_t)_{t=1}^T$ where $f_t\in\Fc$, and each contains $n$ in-context samples. Then, the data-{sequence} of $t$'th task is denoted by $\Sc_t = (\x_{ti})_{i=0}^{n}$ where $\x_{ti} = f_t(\x_{t,i-1}) + \w_{ti}$, $\x_{t0}$ is the initial state, and $\w_{ti}\in\Wc\subset\R^d$ are bounded i.i.d. random noise following some distribution. Then, prompts are given by $\xp{i}:= (\x_0,\x_1,\dots \x_{i})$. Let $\Scnt{i}{} = \xp{i}$, and we can make prediction $\hat\x_{ti}=\fal{i-1}{t}(\x_{t,{i-1}})$. We consider the similar optimization problem as \eqref{mtl opt}.

For generalization analysis, we require the system to be stable (which differs from algorithmic stability!). In this work, we use an exponential stability condition \cite{foster2020learning,sattar2022non} that controls the distance between two trajectories initialized from different points.
\begin{definition}[$(C_{\rho}, \rho)$-stability]\label{def stable}
    Denote the $m$'th state resulting from the initial state $\x_{t0}$ and $(\w_{t i})_{i = 1}^{m}$ by $f_t^{(m)}(\x_{t0})$. Let $C_{\rho} \geq 1$ and $\rho \in (0,1)$ be system related constants. We say that the dynamical system for the task $t$ is $(C_{\rho}, \rho)$-stable if, for all $\x_{t0}, \x'_{t0}\in\Xc$, $m \geq1$, and $(\w_{ti})_{i \geq 1}\in\Wc$, we have
    \begin{align}
        \wnorm{f_t^{(m)}(\x_{t0}) - f_t^{(m)}(\x'_{t0})}{\ell_2} \leq C_{\rho}\rho^m\wnorm{\x_{t0} - \x'_{t0}}{\ell_2}
    \end{align}   
\end{definition}

\begin{assumption}\label{assump stable dynamical}
    There exist $\bar{C}_{\rho}$ and $\bar{\rho}< 1$ such that all dynamical systems $f \in \Fc$ are $(\bar{C}_\rho, \bar{\rho})$-stable.
\end{assumption}

In addition to the stability of the hypothesis set $\Fc$, we also leverage the algorithmic-stability of the set $\Ac$ similar to Assumption \ref{assump robust}. Different from Assumption \ref{assump robust}, we restrict the variability of algorithms with respect to Euclidean distance metric,  similar to the definition of Lipschitz stability.

\begin{assumption}[Algorithmic-stability for Dynamics]\label{assump robust dynamical} Let $\Zb=(\x_0,\x_1,\dots,\x_{m+1})$ be a realizable dynamical system trajectory and $\Zb^j$ be the trajectory obtained by swapping $\w_j$ with $\w_{j}'$ ($j=0$ implies that $\x_0$ is swapped with $\x'_0$). As a result, starting with the $j$'th index, the sequence $\Zb^j$ has different samples $(\x'_{j}, \dots, \x'_{m+1})$. 
Let $X:=\ell(\x_{m+1}, f^\bal_{\Zb}(\x_{m}))$ and $X^j:=\ell(\x'_{m+1}, f^\bal_{\Zb^j}(\x'_{m}))$. There exists $K> 0$ such that for any $\Zb$, $\x_0'\in\Xc$, $\w'_j\in\Wc,j\in[m]$, we have 
\[|\E_{\w_{m+1}}[X -X^j ]| \\
         \leq  \frac{K}{m\red{-j+1}} \sum_{i=j}^{m}\wnorm{\x_i - \x'_i}{\ell_2}.
         \]
\end{assumption}

Lemma \ref{TF dynamic stable} fully justifies this assumption for multilayer transformers. To proceed, we state the main result of this section.
\begin{theorem}\label{thm dynamic mtl}
    Suppose $\ell(\x,\hat{\x})=\ell(\x-\hat{\x}):\Xc\times\Xc\rightarrow[0,1]$ is $L$-Lipschitz and Assumptions~\ref{assump stable dynamical}\&\ref{assump robust dynamical} hold. Assume $\Xc$ and $\Wc$ are bounded by $\bar{x}, \bar{w}$, respectively. Then, with the same probability, {the identical bound as in Theorem~\ref{thm mtl} Eqn.~\eqref{main result 1} holds after updating $K$ to be} $\bar K = 2K \frac{\bar{C}_{\rho}}{1-\bar{\rho}} (\bar{w} + \bar{x}/\sqrt{n})$.
\end{theorem}
The proof of this result is similar in spirit to the proof of Theorem \ref{thm mtl} and is provided in Appendix \ref{sec dynamic mtl}. The main difference is that we use system's stability to control the impact of a perturbation on the future trajectory.

\section{Interpreting In-context Learning as a Model Selection Procedure}\label{sec approx}


In Section~\ref{sec mtl}, we study the generalization error of ICL, which can be eliminated by increasing sample size $n$ or number of {sequences $M$} per task. In this section, we will discuss how ICL can be interpreted as an implicit model selection procedure building on the formalism that transformer is a learning algorithm. Following Figure \ref{fig:main_exp} and prior works \cite{garg2022can,laskin2022context,hollmann2022tabpfn}, a plausible assumption is that, transformer can implement ERM algorithms up to a certain accuracy. Then, model selection can be formalized by the selection of the right hypothesis class so that running ERM on that hypothesis class can strike a good bias-variance tradeoff during ICL.




To proceed with our discussion, let us consider the following hypothesis which states that transformer can implement an algorithm competitive with ERM. 
\begin{hypothesis}\label{hypo erm} Let $\FB=(\Fc_h)_{h=1}^H$ be a family of $H$ hypothesis classes. Let $\Sc=(\x_i,\y_i)_{i=1}^{n}$ be a {data-sequence} with $n$ examples sampled i.i.d.~from $\Dc$ and let $\Scnt{m}{}=(\x_i,\y_i)_{i=1}^{m}$ be the first $m$ examples. Consider the risk\footnote{Since the loss $\ell$ is bounded by $1$, $0\leq \text{risk}(h,m)\leq 1$ for all $m$ including the scenario $m=0$ and ERM is vacuous.} associated to ERM with $m$ samples over $\Fc_h\in\FB$: \ifswitch\vspace{-7pt}
\begin{align}
\riskhm=&\E_{(\x,\y,\Scnt{m}{})}[\ell(\y, \hat{f}^{(h)}_{\Scnt{m}{}}(\x))]\nn\\
\text{where}\quad &\hat f^{(h)}_{\Scnt{m}{}}=\arg\min_{f\in \Fc_h}\frac{1}{m}\sum_{i=1}^m\ell(\y_i,f(\x_i)),\nn
\end{align}
\else
\begin{align}
\riskhm=\E_{(\x,\y,\Scnt{m}{})}[\ell(\y, \hat{f}^{(h)}_{\Scnt{m}{}}(\x))]\quad
\text{where}\quad \hat f^{(h)}_{\Scnt{m}{}}=\arg\min_{f\in \Fc_h}\frac{1}{m}\sum_{i=1}^m\ell(\y_i,f(\x_i)),\nn
\end{align}
\fi
Let $(\ept^{h,m})>0$ be approximation errors associated with $(\Fc_h)_{h=1}^H$. There exists $\bal\in\Ac$ such that, for any $m\in[n],h\in[H]$, $f^\bal_{\Scnt{m}{}}$ can approximate ERM in terms of population risk, i.e.
\[
\E_{(\x,\y,\Scnt{m}{})}[\ell(\y, f^\bal_{\Scnt{m}{}}(\x))]\leq \riskhm+ \ept^{h,m}.
\]
\end{hypothesis}


For model selection purposes, these hypothesis classes can be entirely different ML models, for instance, $\Fc_1=\{\text{convolutional-nets}\}$, $\Fc_2=\{\text{fully-connected-nets}\}$, and $\Fc_3=\{\text{decision-trees}\}$. Alternatively, they can be a nested family useful for capacity control purposes. For instance, Figures~\ref{fig:main_exp}(a,b) are learning covariance/noise priors to implement a constrained-ridge regression. Here $\FB$ can be indexed by positive-definite matrices $\bSi$ with linear classes of the form $\Fc_\bSi=\{f(\x)=\x^\top \bt\quad\text{where}\quad\bt^\top\bSi^{-1}\bt\leq 1\}$.


Under Hypothesis \ref{hypo erm}, ICL selects the most suitable class that minimizes the excess risk for each $m\in [n]$. 

\cmt{Following \cite{garg2022can,laskin2022context,hollmann2022tabpfn}, some instructive examples for family $\FB$ are
\begin{myitemize}
    \item $\FB_\lambda=\{\Fc_\lambda: \text{Linear model with parameter bounded by $\tn{\bt}\leq\lambda$}\}$
    \item $\FB_\bSi=\{\Fc_\bSi:\text{Linear model with covariance $\bSi$, $\E[\bt\bt^\top]=\bSi$}\}$.
    \item $\FB^{\text{sparse}}_s=\{\Fc_s: s\text{-sparse linear model}\}$
    \item $\FB^{\text{NN}}_s=\{\Fc_s: 2\text{-layer neural net with width $s$}\}$
    \item $\FB^{\text{RF}}_s=\{\Fc_s: \text{Random forest with $s$ trees}\}$
\end{myitemize}
Here to simplify, we assumed a discrete family of hypothesis classes, $|\FB|=H<\infty$, however, our theory in Section~\ref{sec mtl} is developed for the continuous setting.}


\begin{observation}\label{thm approx} Suppose Hypothesis \ref{hypo erm} holds for a target distribution $\Dc_\Tc$. Let $\Lc_\Tc^\st:=\min_{\bal\in\Ac}\Lc_\Tc(\bal)$ be the risk of the optimal algorithm. We have that 
	\[
	\Lc_\Tc^\st\leq \frac{1}{n}\sum_{m=0}^{n-1}\min_{h\in [H]} \{\riskhm+ \ept^{h,m}\}.
    \]
    Additionally, denote Rademacher complexity of a class $\Fc$ by $\Rc_m(\Fc)$. Define the minimum achievable risk over function set $\Fc_h$ as {$\Lc^\st_h:=\min_{f\in \Fc_h}\E_{\Dc_\Tc}[\ell(\y,f(\x))]$}. Since $\riskhm$ is controlled by $\Rc_m(\Fc_h)$ \cite{mohri2018foundations}, we have that
    \[
    {\Lc_\Tc^\st\leq\frac{1}{n}\sum_{m=0}^{n-1}\min_{h\in [H]}\{\Lc^\st_h+ \ept^{h,m}+\order{\Rc_m(\Fc_h)}\}.}
    \]
\end{observation}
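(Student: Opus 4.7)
The plan is to invoke Hypothesis~\ref{hypo erm} directly and decompose the target population risk across prompt lengths. By definition, $\Lc_\Tc(\bal)=\E_{\Sc_\Tc}[\Lch_\Tc(\bal)]=\frac{1}{n}\sum_{i=1}^n\E\bigl[\ell(\y_i,f^\bal_{\Scnt{i-1}{\Tc}}(\x_i))\bigr]$. Since the examples in $\Sc_\Tc$ are i.i.d.~from $\Dc_\Tc$, the $i$-th summand depends only on a size-$(i-1)$ prefix $\Scnt{i-1}{}$ together with one independent fresh draw $(\x,\y)$. Setting $m=i-1$, this rewrites as
\begin{align*}
\Lc_\Tc(\bal)=\frac{1}{n}\sum_{m=0}^{n-1}\E_{(\x,\y,\Scnt{m}{})}\bigl[\ell(\y,f^\bal_{\Scnt{m}{}}(\x))\bigr].
\end{align*}

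Next I would extract from Hypothesis~\ref{hypo erm} a \emph{single} algorithm $\bal\in\Ac$ whose prediction at every prompt length $m$ approximates the ERM on any class $\Fc_h\in\FB$ within additive error $\ept^{h,m}$. Crucially, once $m$ is fixed, the left-hand side of the hypothesis inequality is independent of $h$, so the right-hand side $\riskhm+\ept^{h,m}$ can be minimized over $h\in[H]$ term by term, giving
\begin{align*}
\E_{(\x,\y,\Scnt{m}{})}\bigl[\ell(\y,f^\bal_{\Scnt{m}{}}(\x))\bigr]\ \leq\ \min_{h\in[H]}\bigl\{\riskhm+\ept^{h,m}\bigr\}.
\end{align*}
Averaging this inequality over $m=0,\dots,n-1$ and using $\Lc_\Tc^\st\leq\Lc_\Tc(\bal)$ from the definition of the population minimizer yields the first stated bound.

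For the second bound I would simply substitute the standard Rademacher-complexity generalization inequality for ERM. Specifically, for $L$-Lipschitz loss bounded in $[0,1]$ and any hypothesis class $\Fc_h$, one has $\riskhm\leq \Lc^\st_h+\order{\Rc_m(\Fc_h)}$ (see e.g.~\cite{mohri2018foundations}). Plugging this into the per-$m$ inequality \emph{before} taking the minimum over $h$ reproduces the Rademacher form of the bound.

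The main thing to watch is bookkeeping rather than any real mathematical difficulty: the index $m$ appearing in $\riskhm$, in $\ept^{h,m}$, and in $\Rc_m(\Fc_h)$ must consistently refer to the same prompt length, and the minimization over $h$ must sit \emph{inside} the average over $m$ (rather than being commuted outside, which would only give a weaker bound). Beyond this accounting, the statement is a clean deterministic corollary of Hypothesis~\ref{hypo erm} and the definition of $\Lc_\Tc$, and requires no additional probabilistic machinery of its own.
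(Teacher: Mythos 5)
Your proposal follows essentially the same route as the paper's own formal argument (Appendix B.4): decompose the population risk $\Lc_\Tc(\bal)$ as an average over prompt lengths $m=0,\dots,n-1$, apply Hypothesis~\ref{hypo erm} at each fixed $m$ to pull in the minimum over $h$, then substitute a Rademacher-complexity bound for $\riskhm$. The only substantive point you gloss over is the $m=0$ term: Hypothesis~\ref{hypo erm} is stated only for $m\in[n]=\{1,\dots,n\}$ and ERM with an empty training set is vacuous (as the paper's footnote notes), so the per-$m$ inequality you invoke is not licensed at $m=0$. The paper's formal lemma handles this by splitting off the $i=1$ summand and bounding it crudely by $B/n$, which is why its stated bound starts at $m=1$ and carries an extra $cB/\sqrt{n}$ residual. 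If you intend your sum to literally run from $m=0$, you should explicitly bound that term by $1$ (or $B$) rather than invoking the hypothesis there. One further, cosmetic difference: you cite the in-expectation form of the Rademacher excess-risk bound for $\riskhm$, whereas the paper goes through a high-probability version (its Theorem~\ref{thm F risk bound}) and integrates the tail to recover an expectation bound, picking up the $cB/\sqrt{m}$ term; your more direct route is fine and is absorbed anyway into the $\order{\Rc_m(\Fc_h)}$ notation of the observation.
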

Here, ICL adaptively selects the classes $\arg\min_{h\in [H]}\{\Lc^\st_h+\Rc_m(\Fc_h)+ \ept^{h,m}\}$ to achieve small risk. This is in contrast to training over a single large class $\Fc=\bigcup_{i=1}^H\Fc_i$, which would result in a less favorable bound ${\approx\min_{h\in [H]}\Lc^\st_h+\frac{1}{n}\sum_{m=0}^{n-1}\Rc_m(\Fc)}$. A formal version of this statement is provided in Appendix \ref{app approx}. Hypothesis \ref{hypo erm} assumes a discrete family for simpler exposition ($|\FB|=H<\infty$), however, our theory in Section~\ref{sec mtl} allows for the continuous setting.

We emphasize that, in practice, we need to adapt the hypothesis classes for different sample sizes $m$ (typically, more complex classes for larger $m$). With this in mind, while we have $H$ classes in $\FB$, in total we have $H^n$ different ERM algorithms to compete against. This means that VC-dimension of the algorithm class is as large as $n \log H$. This highlights an insightful benefit of our main result: Theorem \ref{thm mtl} would result in an excess risk $\propto\sqrt{\frac{n\log H}{nT}}=\sqrt{\frac{\log H}{T}}$. In other words, the additional $\times n$ factor achieved through Theorem \ref{thm mtl} facilitates the adaptive selection of hypothesis classes for each sample size and avoids requiring unreasonably large $T$.

\section{Numerical Evaluations}\label{sec exp}

Our experimental setup follows \cite{garg2022can}: 
{All ICL experiments are trained and evaluated using the same GPT-2 architecture with 12 layers, 8 attention heads, and 256 dimensional embeddings. We first explain the details of Fig.~\ref{fig:main_exp} and then provide stability experiments.}\footnote{Our code is available at \url{https://github.com/yingcong-li/transformers-as-algorithms}.}
\ifswitch\else\smallskip\fi

\noindent$\bullet$ \textbf{Linear regression (Figures~\ref{fig:main_exp}(a,b)).} We consider a $d$-dimensional linear regression tasks with in-context examples of the form $\z=(\x,y)\in\R^d\times\R$. Given $t$'th task $\bt_t$, we generate $n$ i.i.d. samples via $y=\bt_t^\top\x+\xi$, where $\x\sim\Nc(0,\Iden),~\xi\sim\Nc(0,\sigma^2)$ and $\sigma$ is the noise level. Tasks are sampled i.i.d.~via~$\bt_t\sim\Nc(0,\bSi),~t\in[T]$. Results are displayed in Figures~\ref{fig:main_exp}(a)\&(b). We set $d=20$, $n=40$ and significantly larger $T$ to make sure model is sufficiently trained and we display meta learning results (i.e.~on unseen tasks) for both experiments. In Fig.~\ref{fig:main_exp}(a), $\sigma=1$ and $\bSi=\Iden$. We also solve ridge-regularized linear regression (with sample size from $1$ to $n$) over the grid $\lambda=[0.01,0.05,0.1,0.5,1]$ and display the results of the best $\lambda$ selection as the optimal ridge curve (Black dotted). Recall from \eqref{wridge} that ridge regression is optimal for isotropic task covariance. In Fig.~\ref{fig:main_exp}(b), we set $\sigma=0$ and $\bSi=\diag{\left[1,\frac{1}{2^2},\frac{1}{3^2},\dots,\frac{1}{20^2}\right]}$. Besides ordinary least squares (Green curve), we also display the optimally-weighted regression according to \eqref{wridge} (dotted curve) as $\sigma\rightarrow0$. In both figures, ICL (Red) outperforms the least squares solutions (Green) and are perfectly aligned with optimal ridge/weighted solutions (Black dotted). This in turn provides evidence for the automated model selection ability of transformers by learning task priors.\ifswitch\else\smallskip\fi





\ifswitch
\else
\begin{wrapfigure}{r}{5.5cm}
\vspace{-0.45cm}
\centering
		\begin{tikzpicture}
			\centering
			\node at (0,0) {\includegraphics[scale=0.26]{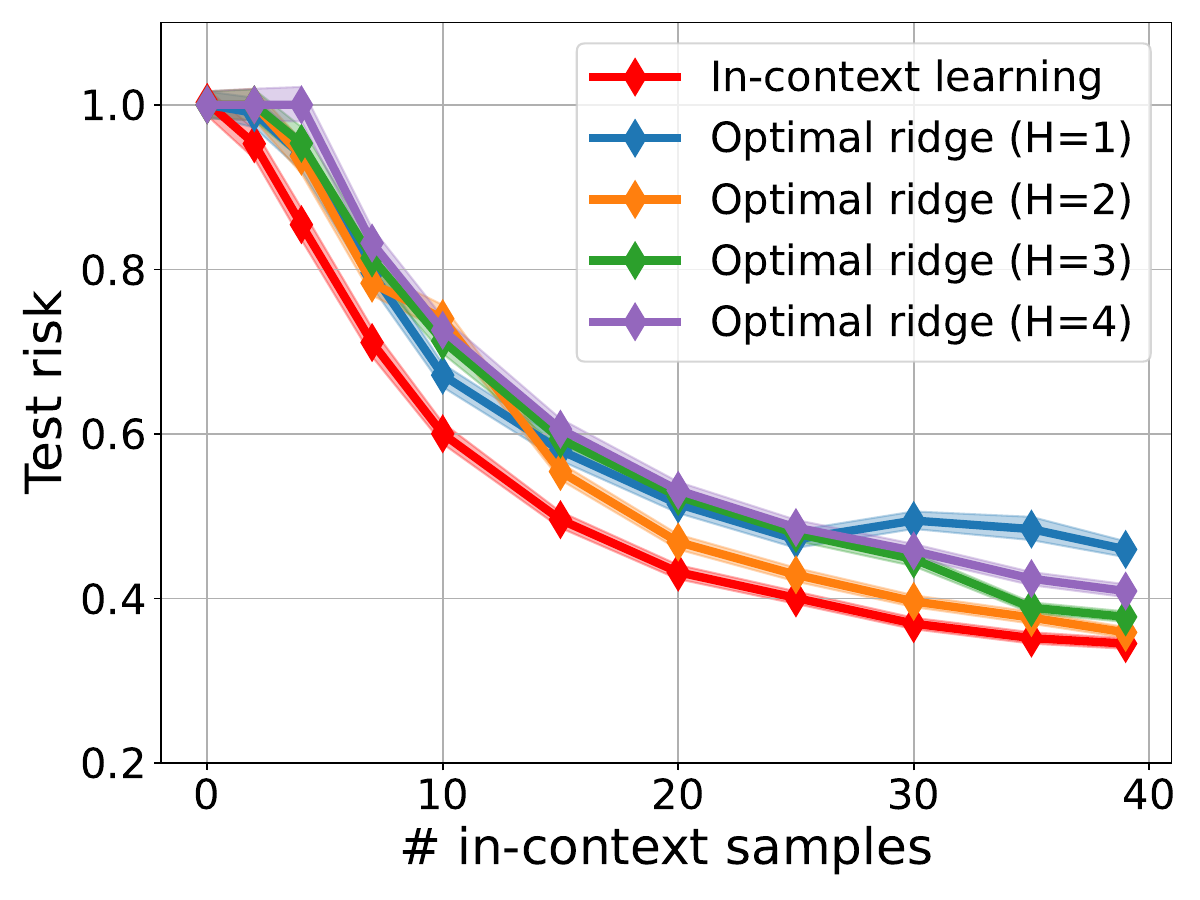}};
		\end{tikzpicture}\vspace{-10pt}
\caption{\small{Dynamical system experiments. The difference from Fig.~\ref{fig:main_exp}(c) is that we compare ICL to the optimally-tuned ridge regression with different history windows $H$.}}\vspace{-10pt}
\label{fig:dyn}
\end{wrapfigure}
\fi
\noindent$\bullet$ \textbf{Partially-observed dynamical systems (Figures \ref{fig:main_exp}(c) \& \ref{fig:dyn}).} We generate in-context examples $\z_i=\x_i\in\R^r,~i\in[n]$ via the partially-observed linear dynamics $\x_i=\Cb\s_i$, $\s_i=\A\s_{i-1}+\bxi_{i}$ with noise $\bxi_i\sim\Nc(0,\sigma^2\Iden_d)$ and initial state $\s_0=\boldsymbol{0}$. Each task is parameterized by $\Cb\in\R^{r\times d}$ and $\A\in\R^{d\times d}$ which are drawn with i.i.d.~$\Nc(0,1)$ entries {and $\A$ is normalized to have spectral radius $0.9$}. In Fig.~\ref{fig:main_exp}(c), we set $d=10$, $r=4$, $\sigma=0$, $n=20$ and use sufficiently large $T$ to train the transformer. For comparison, we solve least-squares regression to predict new observations $\x_i$ via the most recent $H$ observations for varying window sizes $H$. Results show that in-context learning outperforms the least-squares results of all orders $H=1,2,3,4$. In Figure \ref{fig:dyn}, we also solve the dynamical problem using optimal ridge regression for different window sizes. This reveals that ICL can also outperform auto-regressive models with optimal ridge tuning, albeit the performance gap is much narrower. It would be interesting to compare ICL performance to a broader class of system identification algorithms (e.g.~Hankel nuclear norm, kernel-based, atomic norm \cite{ljung1998system,pillonetto2016regularized}) and understand the extent ICL can inform practical algorithm design.\ifswitch\else\smallskip\fi

\ifswitch
\begin{figure}
	\centering
	\includegraphics[scale=0.2]{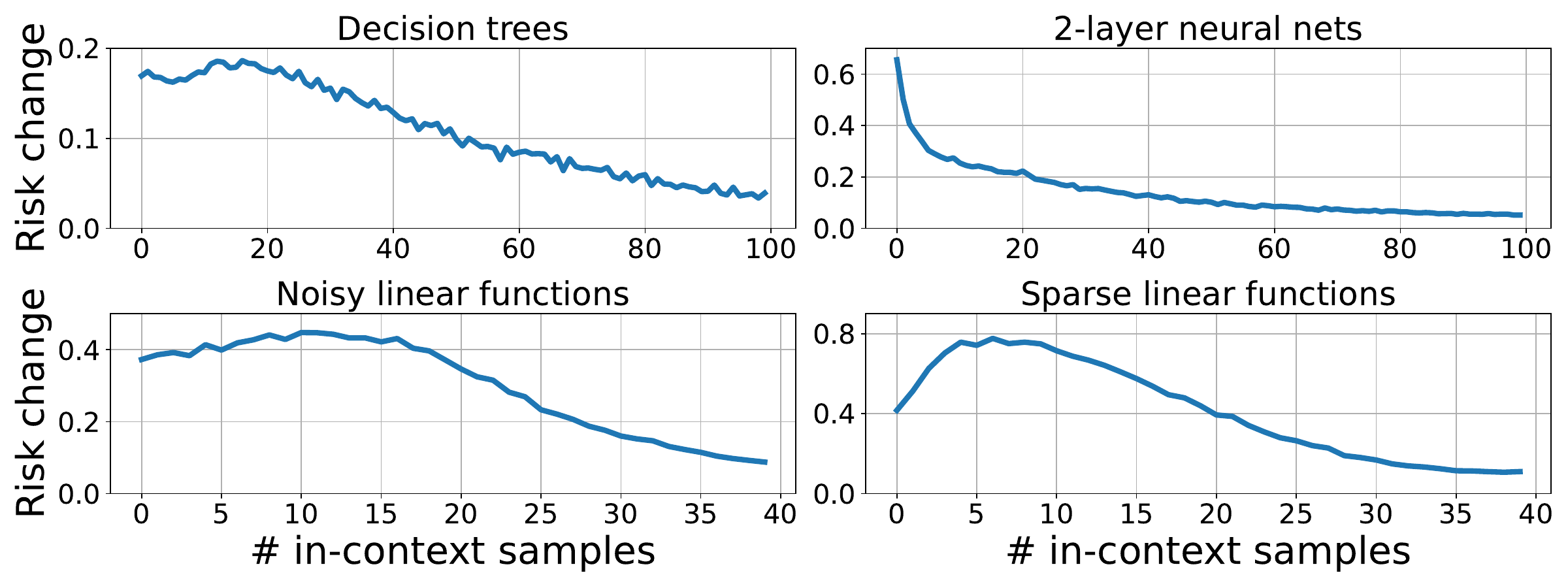}
 \vspace{-7pt}
	\caption{\small{Experiments to assess the algorithmic stability Assumption~\ref{assump robust}. Each figure shows the increase in the risk for varying ICL sample sizes after an example in the prompt is modified. We swap an input example in the prompt and assign a flipped label to this new input, e.g., we move from $(\x,f(\x))$ to $(\x',-f(\x'))$. 
 }}\label{fig:robust}
\vspace{-16pt}
\end{figure}
\else
\begin{figure}
	\centering
	\includegraphics[scale=0.35]{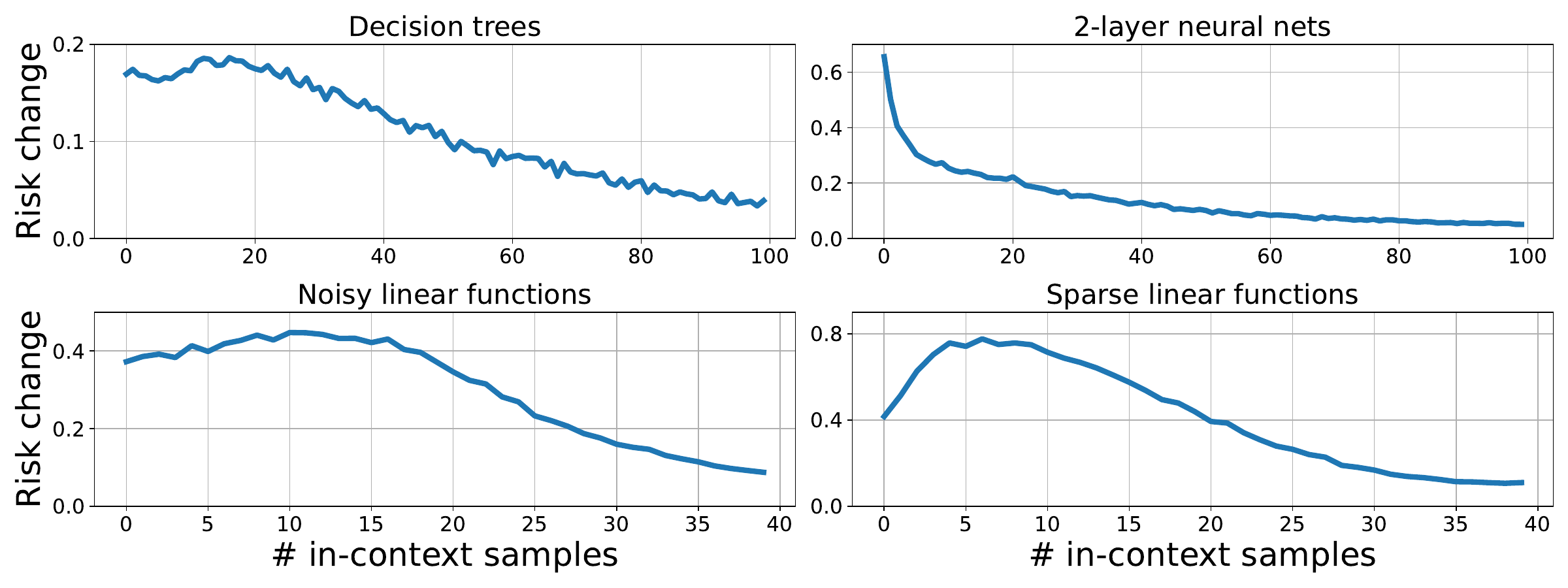}
	\caption{\small{Experiments to assess the algorithmic stability Assumption~\ref{assump robust}. Each figure shows the increase in the risk for varying ICL sample sizes after an example in the prompt is modified. We swap an input example in the prompt and assign a flipped label to this new input, e.g., we move from $(\x,f(\x))$ to $(\x',-f(\x'))$. 
 }}\label{fig:robust}
\end{figure}
\fi
\noindent$\bullet$ \textbf{Stability analysis (Figure~\ref{fig:robust}).} In Assumption~\ref{assump robust}, we require that transformer-induced algorithms are stable to input perturbations, specifically, we require predictions to vary by at most ${\cal{O}}(1/m)$ where {$m$ is the sample size}. This was justified in part by Theorem \ref{main stable}. To understand empirical stability, we run additional experiments where the results are displayed in Fig.~\ref{fig:robust}. We study stability of four function classes: linear models, $3$-sparse linear models, decision trees with depth $4$, and 2-layer ReLU networks with 100 hidden units, all with input dimension of $20$. For each class $\Fc$, a GPT-2 architecture is trained with large number of random tasks $f\in\Fc$ and evaluate on new tasks. With the exception of Fig.~2(a), we use the pretrained models provided by \cite{garg2022can} and the task sequences are noiseless i.e.~sequences obey $y_i=f(\x_i)$. As a coarse approximation of the \emph{worst-case} perturbation, we perturb a prompt $\xp{m}=(\x_1,y_1,\cdots,\x_{m-1},y_{m-1},\x_{m})$ as follows. Draw a random point $(\x'_1,y'_1)\sim (\x_1,y_1)$ and flip its label to obtain $(\x'_1,-y'_1)$. {We obtain the adversarial prompt via $\bar\x_{\text{prompt}}^{(m)}=(\x'_1,-y'_1,\cdots,\x_{m-1},y_{m-1},\x_{m})$}\footnote{To fully verify Assumption~\ref{assump robust} one should adversarially optimize $\x'_1,\y'_1$ and also swap the other indices $m>i>1$.}. In Fig.~\ref{fig:robust}, we plot the test risk change between the adversarial and standard prompts. All figures corroborate that, after a certain sample size, the risk change noticeably decreases as the in-context sample size increases. This behavior is in line with Assumption~\ref{assump robust}; however, further investigation and longer context window are required to accurately characterize the stability profile (e.g.~to verify whether stability is ${\cal{O}}(1/m)$ or not). {Finally, in Figure \ref{fig:robust linear noise} of the  appendix, we show that adding label noise to regression tasks during MTL training  can help improve stability.}

\section{Conclusions}\label{sec discuss}
In this work, we approached in-context learning as an algorithm learning problem with a statistical perspective. We presented generalization bounds for MTL where the model is trained with $T$ tasks each mapped to a sequence containing $n$ examples. Our results build on connections to algorithmic stability which we have verified for transformer architectures empirically as well as theoretically. Our generalization and stability guarantees are also developed for dynamical systems capturing autoregressive nature of transformers.

There are multiple interesting questions related to our findings: (1) Our bounds are mainly useful for capturing multitask learning risk, which motivates us to study the question: How can we control generalization on individual tasks or prompts with specific lengths (rather than average risk)? (2) We provided guarantees for dynamical systems with full-state observations. Can we extend such results to more general dynamic settings such as reinforcement/imitation learning or system identification with partial state observations? (3) Our investigation of transfer learning in Section \ref{sec transfer} revealed that transfer risk is governed by the number of MTL tasks and task complexity however it seems to be independent of the model complexity. It would be interesting to further demystify this inductive bias during pretraining and characterize exactly what algorithm is learned by the transformer.

\section*{Acknowledgements}

This work was supported in part by the NSF grants CCF-2046816 and CCF-2212426, Google Research Scholar award, and Army Research Office grant W911NF2110312.

\bibliography{refs,biblio}
\ifswitch
\bibliographystyle{icml2023}
\else
\bibliographystyle{plain}
\fi

\newpage
\appendix
\onecolumn
\subsection*{Organization of the Appendix}
\begin{myitemize}
\item Supporting experiments and details are provided under Section~\ref{app add exp}.
    \item In Section~\ref{app stable}, we prove and discuss our stability results.
    \item Section~\ref{app main proof} provides proofs of MTL (Section~\ref{sec mtl}) and transfer learning (Section~\ref{sec transfer}) generalization bounds.
    \item Section~\ref{sec dynamic mtl} proves our dynamical generalization bound (Theorem~\ref{thm dynamic mtl}).
    \item In Section~\ref{app approx}, we discuss the model selection aspect of ICL.
    \item We introduce more related work in Section~\ref{app related}.
\end{myitemize}

\begin{figure}[t]
\centering
\begin{minipage}[t]{0.31\textwidth}
\centering
\begin{tikzpicture}
			\node at (0,0) {\includegraphics[scale=0.28]{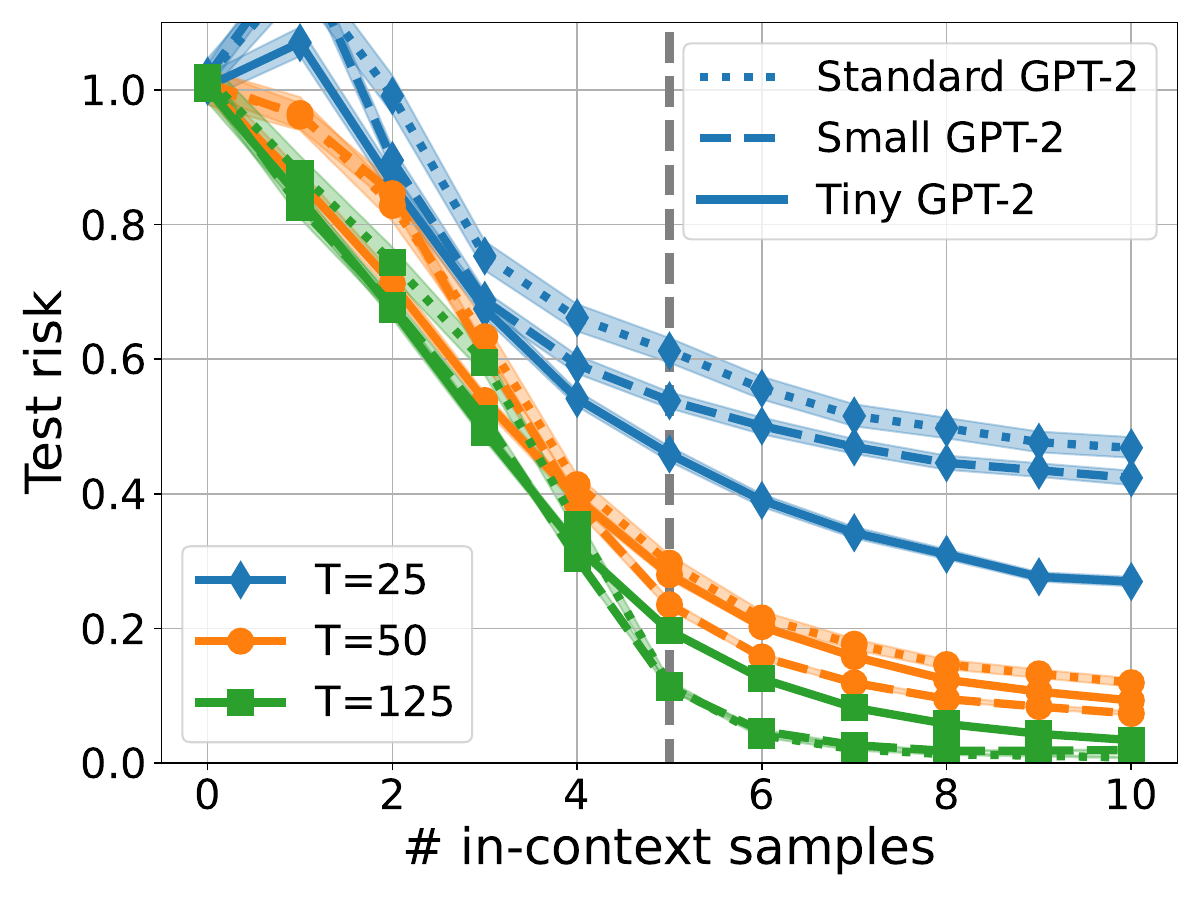}};
		\end{tikzpicture}
		\vspace{-20pt}
\caption{\red{Following Figure~\ref{fig:transfer}, instead we train $d=5$ dimensional linear regression problem with three different GPT-2 architectures and overlay the transfer results.}}\vspace{-10pt}
\label{fig:d5 overlay}
\end{minipage}
\hspace{10pt}
\centering
\begin{minipage}[t]{0.31\textwidth}
\centering
\begin{tikzpicture}
			\node at (0,0) {\includegraphics[scale=0.28]{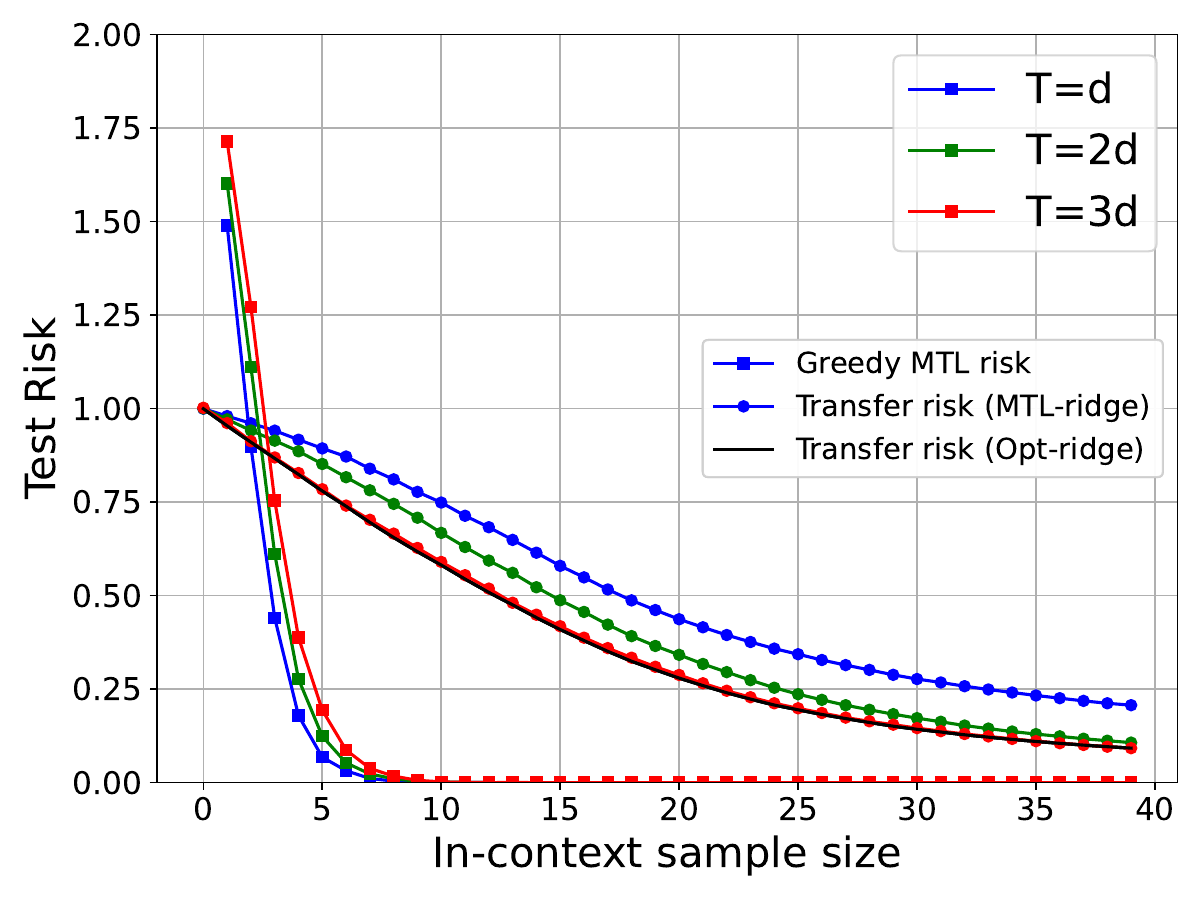}};
		\end{tikzpicture}
		\vspace{-20pt}
\caption{We display the performance of the \emph{idealized} transfer and MTL algorithms described in Section \ref{sec transfer}. Unlike ICL experiments, these require $T\lesssim d$ tasks.}
\label{fig:hindsight}
\end{minipage}
\hspace{10pt}
\centering
\begin{minipage}[t]{0.31\textwidth}
\centering
\begin{tikzpicture}
			\node at (0,0) {\includegraphics[scale=0.28]{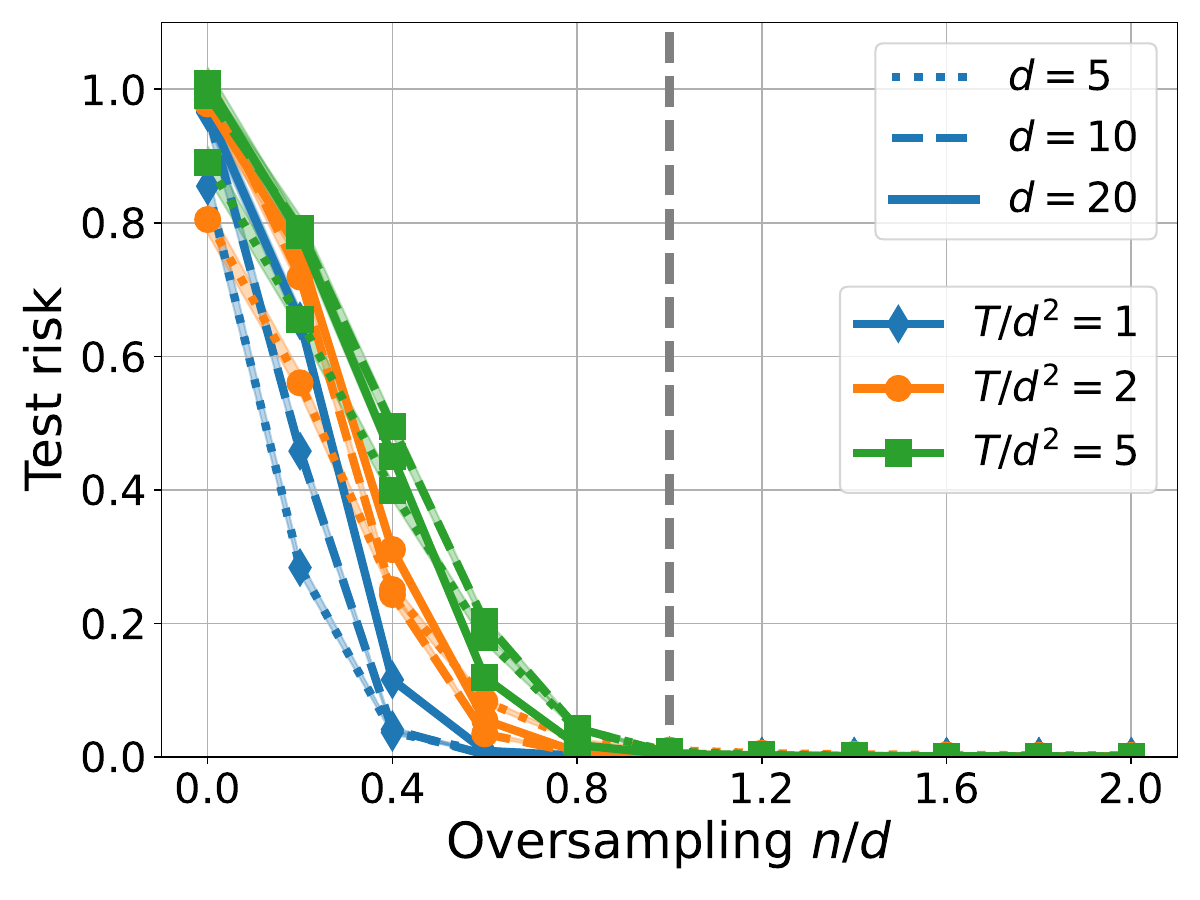}};
		\end{tikzpicture}
		\vspace{-20pt}
\caption{The difference form Fig.~\ref{fig:transfer}(d) is that we overlay the MTL results of  dimensions $d\in\{5,10,20\}$ (dashed curves in Fig.~\ref{fig:transfer} (a,b,c)).}\vspace{-10pt}
\label{fig:overlay mtl}
\end{minipage}

\end{figure}

\section{Additional Experiments}\label{app add exp}
\red{Our linear regression experiments are based on the code released by \cite{garg2022can}, however without curriculum learning. All the inputs and noise are i.i.d.~Gaussian vectors and tasks are i.i.d.~sampled from some distribution. The meta learning results Fig.~\ref{fig:main_exp}(a,b) are trained with $T=32$ million random linear tasks and Fig.~\ref{fig:main_exp}(c) and Fig.~\ref{fig:dyn} are trained with $T=6.4$ million dynamical trajectories (here, we fix the batch size to $64$ and train with $500$k/$100$k iterations). All experiments use learning rate $0.0001$ and Adam optimizer.
}

\subsection{Supporting Experiments for Section \ref{sec transfer}}\label{additional exp}

\noindent\textbf{Architecture dependence of transfer risk:} In Figure~\ref{fig:d5 overlay}, we verify that the transfer risk is (mostly) independent of the model complexity $\dim(\Ac)$ (in contrast to the dependence on task complexity $d$). Following the same setting as in Figure~\ref{fig:transfer}, during the MTL phase, we consider $5$-dimensional linear regression problem and train with $T=25/50/125$ tasks over three different models: tiny/small/standard GPT-2. The standard GPT-2 has the same architecture as used in Fig.~\ref{fig:transfer} and Section~\ref{sec exp}, with $12$ layers, $8$ attention heads and $256$ dimensional embeddings. While, small GPT-2 has $6$ layers, $4$ attention heads and $128$ dimensional embeddings, and tiny GPT-2 has only $3$ layers, $2$ attention heads and $64$ dimensional embeddings. They contain $9.5$M, $1.2$M and $0.15$M parameters respectively, which shows that small GPT-2 has around $8\times$ less parameters than standard GPT-2 and tiny GPT-2 has around $64\times$ less. Overlayed results are displayed in Figure~\ref{fig:d5 overlay}, which demonstrate that although the architectures are substantially different in terms of complexity and expressive power, the performances under the same data setting (same color with different line styles) are approximately aligned.

\smallskip
\noindent\textbf{Contrasting ICL to Idealized Algorithms.} In Section \ref{sec transfer}, we discussed how transfer risk seems to require $T\propto d^2$ source tasks. In contrast, constructing the empirical covariance $\hat{\bSi}=\frac{1}{T}\sum_{i=1}^T \bt_i\bt_i^\top$ can make sure that $\hat{\bSi}$-weighted LS performs ${\cal{O}}(1)$-close to $\bSi$-weighted LS whenever ${\|\bSi-\hat{\bSi}\|}/{\lambda_{\min}(\bSi)}\leq {\cal{O}}(1)$. In Figure \ref{fig:hindsight}, \emph{MTL-ridge} curves with circle markers are referring to the $\hat{\bSi}$-weighted ridge regression. As suspected, $T= 3d$ is already sufficient to get very close performance to the optimal weighting with true $\bSi$ (black curve). We remark that in Figure \ref{fig:hindsight}, we set $d=20$, noise variance obeys $\sigma^2=0.1$, and linear task vectors $\bt$ are uniformly sampled over the sphere.

For MTL, Section \ref{sec transfer} introduces the following simple greedy algorithm to predict a prompt that belong to one of the $T$ source tasks (aka MTL risk): Evaluate each source task parameter $(\bt_t)_{t=1}^T$ on the prompt and select the parameter with the minimum risk. Since there are $T$ choices, this greedy algorithm will determine the optimal task in $n\lesssim \log(T)$ samples\footnote{Note that, this dependence can be even better if the problem is noiseless, in fact, that is why we added label noise in these experiments.}. The experiments of this algorithm is provided under \emph{Greedy MTL} legend (square markers). It can be seen that as $T$ varies ($d,2d,3d$), there is almost no difference in the MTL risk, likely due to the $\log(T)$ dependence. \red{Figure~\ref{fig:overlay mtl} gathers the MTL risk curves from Fig.~\ref{fig:transfer} (a,b,c) and overlays them together. Same as transfer risks shown in Fig.~\ref{fig:transfer}(d), the test risks stay approximately unchanged for fixed point $\alpha=n/d$ and $\beta=T/d^2$. It is also aligned with Fig.~\ref{fig:hindsight}, greedy MTL risk curves, where larger $T$ requires more samples (although their $d$-dependence is very different).} In short, these experiments highlight the contrast between ideal/greedy algorithms and ICL algorithm implemented within the transformer model. 

\begin{figure}[t]
\centering
\begin{minipage}[t]{\ifswitch0.31\else0.45\fi\textwidth}
\centering
\includegraphics[scale=0.28]{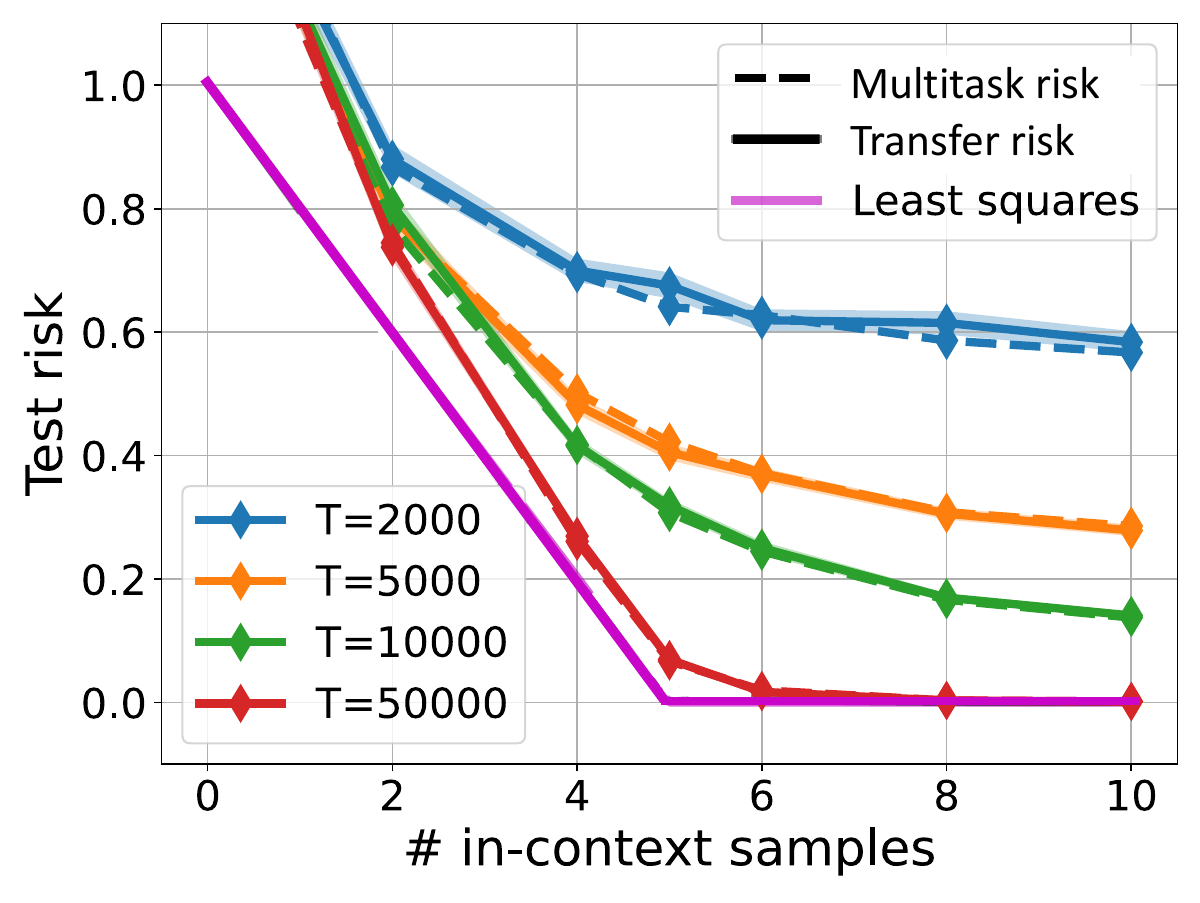}
\caption{Comparing MTL and transfer risks when each task has single trajectory ($M=1$).}
\label{fig:single}
\end{minipage}
\ifswitch
\hspace{10pt}
\begin{minipage}[t]{0.31\textwidth}
\centering
\includegraphics[scale=0.28]{figs/dynamic_partial_ridge.pdf}
\caption{Dynamical system experiments. The difference from Fig.~\ref{fig:main_exp}(c) is that we compare ICL to the optimally-tuned ridge regression with different history windows $H$.}\vspace{-10pt}
\label{fig:dyn}
\end{minipage}
\else
\fi
\hspace{10pt}
\begin{minipage}[t]{\ifswitch0.31\else0.45\textwidth}
\centering
\includegraphics[scale=0.28]
{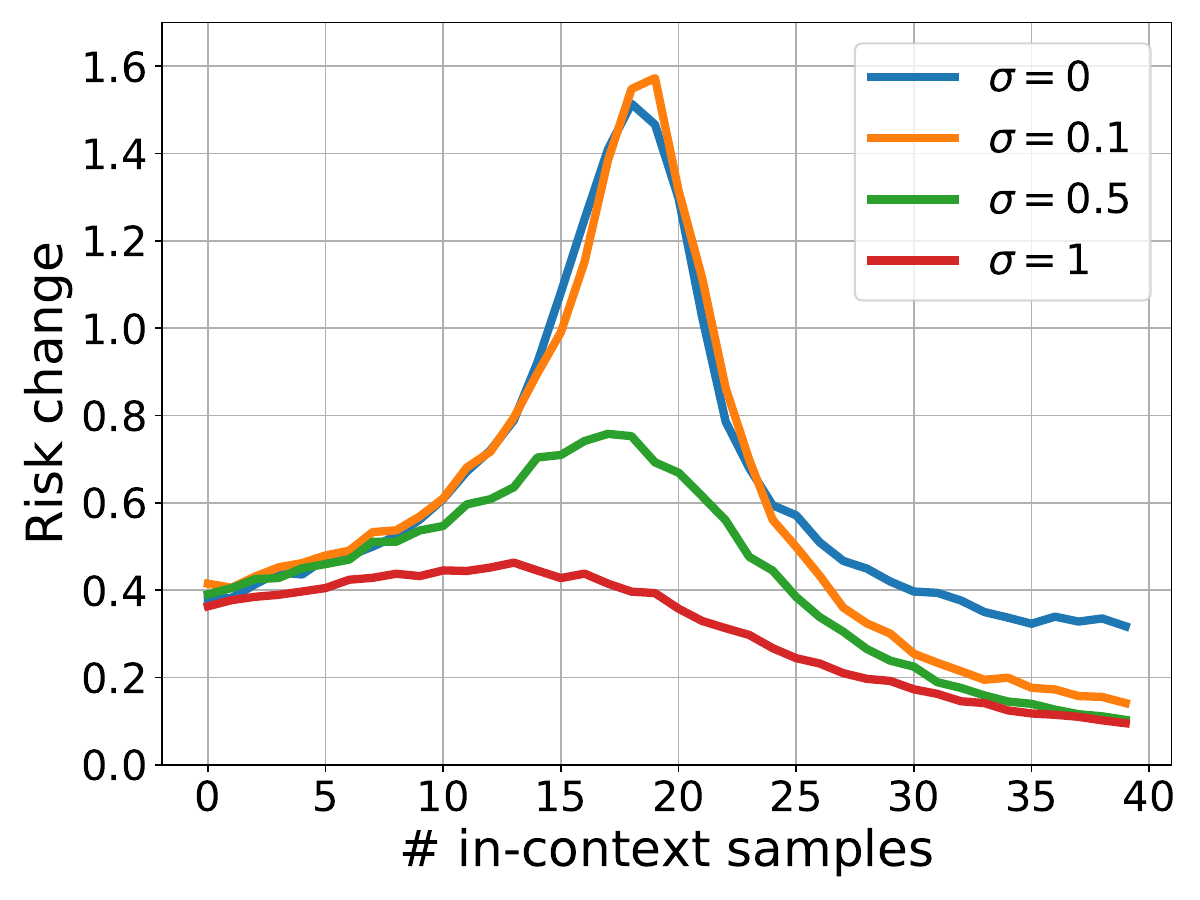}
\caption{\red{Stability experiments on noisy/noiseless linear settings, where $\sigma$ is the label noise level and data is generated by $y\sim\Nc(\x^\top\bt,\sigma^2)$ where $\x,\beta\sim\Nc(0,\Iden)$. Blue curve is noiseless regression. }}\label{fig:robust linear noise}\vspace{-10pt}
\end{minipage}
\end{figure}

In Section \ref{sec transfer}, we also exclusively focused on the setting $M\rightarrow \infty$ i.e. MTL tasks are thoroughly trained. In Figure \ref{fig:single} we consider the other extreme where each task is trained with a single trajectory $M=1$, which is closer to the spirit of Theorem \ref{thm mtl}. We set $d=5$, $n=10$ and $M=1$, and vary the number of linear regression tasks $T$ from $2000$ to $50000$. Not surprisingly, the results show that increasing $T$ helps in reducing the MTL risk. The more interesting observation is that transfer risk and MTL risk are almost perfectly aligned. We believe that this is due to the small $M,n$ choices which would make it difficult to overfit to the MTL tasks. Thus, when $M=1$, the gap between transfer and MTL risk seems to vanish and Theorem \ref{thm mtl} becomes directly informative for the transfer risk. In contrast, as $M$ grows, training process can overfit to the MTL tasks which leads to the split between MTL and transfer risks as in Figure \ref{fig:transfer}.


\subsection{Additional Stability Experiments}
In Section~\ref{sec exp} and Figure~\ref{fig:robust}, we run adversarial experiments demonstrating that our stability assumption (Assumption~\ref{assump robust}) is indeed realistic. In addition, we find that adding noise to the labels can help improve stability. As depicted in Figure~\ref{fig:robust linear noise}, Red curve is much more stable compared to the Blue curve which is trained with noiseless linear regression tasks. One interpretation is that solving noiseless problems might result in an overfitted algorithm (towards noiseless tasks) and a small perturbation/distribution-shift leads to significant error. \red{The peaks in Figure~\ref{fig:robust linear noise} occurs around $n=d$ and (most likely) arise from the double-descent phenomena: When there is no label noise, an interpolating linear model (without ridge regularization) is optimal (recall \eqref{wridge}). However such an interpolating model is susceptible to adversarial perturbations especially when the condition number is poor (which occurs at $n=d$). Here, the key takeaway is that noise has a stabilitizing effect, because under label-noise, optimal model learned by TF is the solution of a weighted ridge regression thus regularizes the transformer's algorithm.}
\section{Stability of Transformer-based ICL}\label{app stable}
\begin{lemma} \label{lem sft}Let $\x,\beps\in\R^n$ be vectors obeying $\tin{\x},\tin{\x+\beps}\leq c$. Then, there exists a constant $C=C(c)$, such that 
\[
\tin{\sft{\x}}\leq e^{2c}/n\quad\text{and}\quad\tone{\sft{\x}-\sft{\x+\beps}}\leq e^{2c}\tone{\beps}/n.
\]
\end{lemma}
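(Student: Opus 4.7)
The plan is to treat the two bounds separately, starting with the pointwise sup-norm bound on the softmax output and then using it to control the $\ell_1$ variation under perturbation.

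For the first inequality, I would just unpack the definition $\sft{\x}_i = e^{x_i}/\sum_{j=1}^n e^{x_j}$ and use the hypothesis $\tin{\x}\leq c$ twice: the numerator is at most $e^c$ and each term of the denominator is at least $e^{-c}$, so the denominator is at least $ne^{-c}$, giving $\sft{\x}_i\leq e^{2c}/n$ uniformly in $i$. This is the elementary step and there is no obstacle here.

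For the second inequality, my plan is to use a mean-value representation. Writing $p^{(t)}:=\sft{\x+t\beps}$ and noting that the softmax Jacobian is $J_{ij}(\vct{u})=p_i(\delta_{ij}-p_j)$ with $p=\sft{\vct{u}}$, I would express
\[
\sft{\x+\beps}-\sft{\x}=\int_0^1 J(\x+t\beps)\,\beps\, dt,\qquad (J\beps)_i=p^{(t)}_i\bigl(\epsilon_i-\langle p^{(t)},\beps\rangle\bigr).
\]
Taking $\ell_1$ norms and splitting the triangle inequality yields
\[
\tone{J\beps}\leq\sum_i p^{(t)}_i|\epsilon_i|+|\langle p^{(t)},\beps\rangle|\leq 2\,\bigl(\max_i p^{(t)}_i\bigr)\,\tone{\beps}.
\]
The key input is that $\max_i p^{(t)}_i$ is controlled uniformly along the interpolation. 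For this I would observe that $\tin{\cdot}$ is convex, so $\tin{\x+t\beps}\leq(1-t)\tin{\x}+t\tin{\x+\beps}\leq c$ for every $t\in[0,1]$; then the first part of the lemma applies and gives $\max_i p^{(t)}_i\leq e^{2c}/n$ along the entire segment. Plugging this back and integrating in $t$ completes the bound.

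The only subtle step is the convexity argument above — without it, one cannot apply the first bound uniformly in $t$ and would have to work with the endpoints only, which forces clumsier estimates (I sketched such a direct endpoint calculation through $q_i-p_i=p_i(e^{\epsilon_i}/M-1)$ with $M=\sum_j p_j e^{\epsilon_j}$, and it produces a larger exponential constant in $c$). With the convex interpolation trick the proof collapses to two lines after the Jacobian computation. The only cosmetic gap is a factor of $2$ between my constant $2e^{2c}$ and the stated $e^{2c}$; I would absorb this into the unspecified constant $C=C(c)$ quantified in the statement, or tighten it by keeping the single expression $\sum_i p^{(t)}_i|\sum_j p^{(t)}_j(\epsilon_i-\epsilon_j)|$ and using the cancellation $\sum_i p^{(t)}_i(\epsilon_i-\langle p^{(t)},\beps\rangle)=0$ to avoid the wasteful triangle inequality.
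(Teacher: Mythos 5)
Your proof follows the same route as the paper's: the elementary ratio bound for the sup-norm, then the softmax Jacobian $\text{diag}(\sft{\vct{u}})-\sft{\vct{u}}\sft{\vct{u}}^\top$ integrated along the segment from $\x$ to $\x+\beps$. You do, however, tighten two steps the paper elides. The paper evaluates the Jacobian only at $\x$ and then says ``integrating the derivative along $\beps$,'' silently assuming the $e^{2c}/n$ entrywise bound on $\sft{\cdot}$ holds at every intermediate point $\x+t\beps$; your observation that $\tin{\cdot}$ is convex, so $\tin{\x+t\beps}\leq c$ for all $t\in[0,1]$, supplies exactly the missing justification. Your factor of $2$ is also not merely cosmetic: the stated constant $e^{2c}$ is in fact slightly too small --- for instance at $\x=0$, $n=3$, $\beps=(\delta,0,0)$ one computes $\tone{\sft{\x}-\sft{\x+\beps}}\to\tfrac{4\delta}{9}$ while $e^{2c}\tone{\beps}/n\to\tfrac{\delta}{3}$ as $\delta\to 0$ --- so the correct bound is $2e^{2c}\tone{\beps}/n$. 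Since the lemma's preamble already announces an unspecified constant $C=C(c)$ and Lemma~\ref{lem stable} carries the constant forward only qualitatively, the discrepancy is harmless, but you are right to flag it.
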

\begin{proof} Without losing generality, assume the first coordinate is the largest. Using monotonicity of softmax, we obtain $\tin{\sft{\x}}\leq \frac{e^c}{e^c+\sum_{i=2}^ne^{-c}}\leq \frac{e^{2c}}{n}$. For vectors $\beps$ and $\x$, infitesimal softmax perturbation is bounded via
\[
\lim_{\delta\rightarrow0}[\sft{\x+\delta\beps}-\sft{\x}]/\delta= [\text{diag}(\sft{\x})-\sft{\x}\sft{\x}^\top]\beps.
\]
We use $\tone{[\text{diag}(\sft{\x})-\sft{\x}\sft{\x}^\top]\beps}\leq e^{2c}\tone{\beps}/n$. Integrating the derivative along $\delta=0$ to $1$, we obtain the result.
\end{proof}

For a matrix $\A$, let $\|\A\|_{2,p}$ denote the $\ell_p$ norm of the vector obtained by the $\ell_2$ norms of its rows.
\begin{lemma} \label{lem stable} Let $\X=[\x_1~\dots~\x_n]^\top$ and $\Eb=[\beps_1~\dots~\beps_n]^\top$ be the input and perturbation matrices respectively. Assume that the tokens ($\x_i,\x_i+\beps_i$) lie in unit ball i.e.~$\|\X\|_{2,\infty},\|\X+\Eb\|_{2,\infty}\leq 1$. Let $\Vb,\W\in\R^{d\times d}$ be the weights of the self-attention layer obeying $\|\Vb\|\leq 1$ and $\|\W\|\leq \Gamma$. Define the attention outputs $\A=\sft{\X \W\X^\top}\X \Vb$ and $\Ab=\sft{\Xb \W\Xb^\top}\Xb \Vb$. Define $\bar{\Eb}=\Ab-\A:=[\bbeps_1~\dots~\bbeps_n]^\top$. Let $\Ceb$ be an upper bound on $\|\Eb\|_{2,1}$. We have that
\[
\|\A\|_{2,\infty},\|\Ab\|_{2,\infty}\leq 1,\quad \|\bar{\Eb}\|_{2,1}\leq (2\Gamma+1)e^{2\Gamma}\Ceb.
\]
Additionally, for any $i\in[n]$ such that $\tn{\beps_i}\leq \Ceb/n$, we have $\tn{\bbeps_i}\leq \frac{1}{n}(2\Gamma+1)e^{2\Gamma}\Ceb$.
\end{lemma}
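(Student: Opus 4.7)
The plan is to handle the two claims in turn. For $\|\A\|_{2,\infty}\leq 1$, I would note that $\Sb:=\sft{\X\W\X^\top}$ is row-stochastic (its rows are softmax outputs), so each row of $\A=\Sb\X$ is a convex combination of the rows of $\X$. By convexity of $\tn{\cdot}$, $\tn{(\A)_i}\leq\sum_j S_{ij}\tn{\x_j}\leq\|\X\|_{2,\infty}\leq 1$, and the identical reasoning applies to $\Ab$.

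For the perturbation bound, the plan is to decompose
\[
\A-\Ab \;=\; \Sb\X - \bar{\Sb}\Xb \;=\; -\Sb\Eb + (\Sb-\bar{\Sb})\Xb,
\]
and bound each piece in $\|\cdot\|_{2,1}$ separately. For $\Sb\Eb$, swapping the order of summation yields $\|\Sb\Eb\|_{2,1}\leq\sum_j\tn{\beps_j}\sum_i S_{ij}$. Since $\tn{\x_i}\leq 1$ and $\|\W\|\leq\Gamma$, every entry of $\X\W\X^\top$ lies in $[-\Gamma,\Gamma]$, so Lemma~\ref{lem sft} applied row-wise gives $S_{ij}\leq e^{2\Gamma}/n$; hence every column sum of $\Sb$ is at most $e^{2\Gamma}$, producing $\|\Sb\Eb\|_{2,1}\leq e^{2\Gamma}\|\Eb\|_{2,1}$.

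For the softmax-perturbation piece I would first pull out $\|\Xb\|_{2,\infty}$: $\|(\Sb-\bar{\Sb})\Xb\|_{2,1}\leq\|\Xb\|_{2,\infty}\sum_i\|\Sb_i-\bar{\Sb}_i\|_{\ell_1}\leq\sum_i\|\Sb_i-\bar{\Sb}_i\|_{\ell_1}$. Setting $\z_i:=\X\W\x_i$ and $\bar{\z}_i:=\Xb\W\xb_i$, both vectors have entries in $[-\Gamma,\Gamma]$, so the second assertion of Lemma~\ref{lem sft} gives $\|\Sb_i-\bar{\Sb}_i\|_{\ell_1}\leq(e^{2\Gamma}/n)\|\z_i-\bar{\z}_i\|_{\ell_1}$. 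The decomposition $\bar{\z}_i-\z_i = \Eb\W\x_i+\X\W\beps_i+\Eb\W\beps_i$, together with $\|\W\|\leq\Gamma$ and the row-norm bounds on $\X,\Xb,\Eb$, yields $\|\bar{\z}_i-\z_i\|_{\ell_1}\leq\Gamma\tn{\x_i}\|\Eb\|_{2,1}+\Gamma\tn{\beps_i}\sum_j\tn{\x_j}+\Gamma\tn{\beps_i}\|\Eb\|_{2,1}$. Summing over $i$ and feeding back through the $e^{2\Gamma}/n$ factor assembles the linear-in-$\|\Eb\|_{2,1}$ contribution $2\Gamma e^{2\Gamma}\|\Eb\|_{2,1}$, which combined with the first piece gives the advertised $(2\Gamma+1)e^{2\Gamma}\|\Eb\|_{2,1}$.

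The main obstacle I anticipate is pinning down the clean $(2\Gamma+1)$ constant. The bookkeeping above produces a residual quadratic term of order $\Gamma e^{2\Gamma}\|\Eb\|_{2,1}^2/n$ stemming from the cross piece $\Eb\W\beps_i$, which is not immediately subsumed by the linear contribution. I would dispatch it either by observing $\tn{\beps_j}\leq\tn{\x_j}+\tn{\xb_j}\leq 2$ so that $\|\Eb\|_{2,1}\leq 2n$ and the quadratic is dominated by the linear part (at worst with slightly looser constants), or by replacing the single-shot use of Lemma~\ref{lem sft} with a path integral along $t\mapsto\sft{\z_i+t(\bar{\z}_i-\z_i)}$, which naturally picks up only the first-order perturbation and relegates the quadratic to a higher-order remainder. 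Once this residue is dispatched, the remaining calculation is routine triangle-inequality and double-sum manipulation.
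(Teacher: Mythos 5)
Your proposal is correct and follows essentially the same route as the paper. Both arguments split the difference into a ``perturbed-$\Eb$'' piece and a softmax-difference piece, bound the former via the entrywise softmax bound from Lemma~\ref{lem sft} (column sums of the stochastic matrix bounded by $e^{2\Gamma}$), and bound the latter via the $\ell_1$ perturbation bound from Lemma~\ref{lem sft} applied row-wise; your decomposition $-\Sb\Eb+(\Sb-\bar\Sb)\Xb$ versus the paper's $(\bar\Sb-\Sb)\X+\bar\Sb\Eb$ is an immaterial bookkeeping choice. You also correctly flagged the one genuine obstacle --- the quadratic $\Eb\W\beps_i$ cross term that appears in a one-shot application of Lemma~\ref{lem sft} --- and your second proposed remedy (the path integral along $t\mapsto\X+t\Eb$, noting that convex combinations of unit-ball rows stay in the unit ball so the derivative bound is uniform in $t$) is precisely what the paper does; your first remedy using $\|\Eb\|_{2,1}\le 2n$ is valid but, as you say, would only deliver a looser constant such as $(4\Gamma+1)e^{2\Gamma}$ rather than $(2\Gamma+1)e^{2\Gamma}$, so the integral argument is the one to commit to.
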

\begin{proof} First observe that $\Vb$ preserves norms i.e.~$\X\Vb$ obeys $\|\X\Vb\|_{2,\infty}\leq \|\X\|_{2,\infty}\leq 1$ and $\|\Eb\Vb\|_{2,1}\leq \|\Eb\|_{2,1}$.

Next, set $\Xb=\X+\Eb$ and define attention outputs $\A=\sft{\X \W\X^\top}\X\Vb$, $\Ab=\sft{\Xb \W\Xb^\top}\Xb\Vb$. Observe that, since softmax applies row-wise to the similarities (e.g.~$\X\W\X$), we preserve the feature norms i.e.~$\|\A\|_{2,\infty},\|\Ab\|_{2,\infty}\leq 1$ as advertised.

Now, consider the attention output difference $\Pb=\Ab-\A$
\begin{align}
\Pb=\underbrace{[\sft{\Xb \W\Xb^\top}-\sft{\X \W\X^\top}]\X\Vb}_{\Pb_1}+\underbrace{\sft{\Xb \W\Xb^\top}\Eb\Vb}_{\Pb_2}.\label{pb decomp}
\end{align}
For any pairs of tokens, we have $|\x_i^\top \W\x_j|\leq \Gamma$. Using Lemma \ref{lem sft}
\begin{align}
\|\Pb_2\|_{2,1}=\|\sft{\Xb \W\Xb^\top}\Eb\Vb\|_{2,1}\leq n\|\sft{\Xb \W\Xb^\top}\|_{\infty}\|\Eb\|_{2,1}\leq e^{2\Gamma}\|\Eb\|_{2,1}.\label{Pb2 term}
\end{align}
Secondly, set $\Pb_1=[\sft{\Xb \W\Xb^\top}-\sft{\X \W\X^\top}]\X\Vb$. We have that
\begin{align*}
\|\Pb_1\|_{2,1}&\leq \tone{\sft{\Xb \W\Xb^\top}-\sft{\X \W\X^\top}}\|\X\Vb\|_{2,\infty}\\
&\leq  \tone{\sft{\Xb \W\Xb^\top}-\sft{\X \W\X^\top}}.
\end{align*}
To proceed, define the $\delta$-scaled perturbation $\Eb'=\delta\Eb=\Xb'-\X$ for some $0\leq \delta\leq 1$. We will bound the derivative via $\delta\rightarrow 0$ and then integrate this derivative bound along $\Eb$ (i.e.~from $\delta=0$ to $\delta=1$). Clearly, as $\delta\rightarrow 0$, the quadratic-terms involving $\delta^2\Eb$ disappear and $\tone{\sft{\Xb' \W\Xb'^\top}-\sft{\X \W\X^\top}}$
\[
\leq \tone{\sft{\Xb' \W\X^\top}-\sft{\X \W\X^\top}}+\tone{\sft{\X \W\Xb'^\top}-\sft{\X \W\X^\top}}.
\]
To bound the latter, consider each row individually, namely pick a row from $\X,\X+\Eb'$ each denoted by the pair $(\x,\x+\beps')$. Note that for any cross-product, we are guaranteed to have $|(\x+\beps')^\top \W\x_i|,|\x^\top \W\x_i|\leq \Gamma,\tone{\beps'^\top \W\X}\leq \Gamma n\tn{\beps'}$, $\tone{\x^\top \W\Eb'^\top}\leq \Gamma \|\Eb'\|_{2,1}$. Applying perturbation bound of Lemma \ref{lem sft}, we get
\begin{align}
&\tone{\sft{(\x+\beps')^\top \W\X^\top}-\sft{\x^\top \W\X^\top}}\leq \Gamma e^{2\Gamma}\tn{\beps'}\label{main pert eq1}\\
&\tone{\sft{\x^\top \W(\X+\Eb')^\top}-\sft{\x^\top \W\X^\top}}\leq \Gamma e^{2\Gamma} \|\Eb'\|_{2,1}/n\label{main pert eq2}.
\end{align}
Adding up all $n$ rows, we obtain 
\[
\lim_{\delta\rightarrow 0}\tone{\sft{(\X+\delta\Eb) \W\Xb^\top}-\sft{\X \W\X^\top}}/\delta\leq 2\Gamma e^{2\Gamma}\|\Eb\|_{2,1}.
\]
Integrating the derivative along $\delta=0$ to $\delta=1$, we obtain $\|\Pb_1\|_{2,1}\leq 2\Gamma e^{2\Gamma}\|\Eb\|_{2,1}$. Together with \eqref{Pb2 term}, we obtain the main claim $\|\Pb\|_{2,1}\leq (2\Gamma+1)e^{2\Gamma}\|\Eb\|_{2,1}\leq (2\Gamma+1)e^{2\Gamma}\Ceb$. \red{To proceed, we control the individual output $i$ for which the input perturbation is small i.e.~$\tn{\beps_i}\leq \Ceb/n$. To this end, let us repeat the identical argument focusing on $i$th token. Suppose $i$'th token inputs are (dropping subscripts $i$) $\x,\xb,\beps=\xb-\x$ and outputs are $\ab,\bar{\ab},\bbeps=\bar{\ab}-\ab$. Similar to \eqref{pb decomp}, we write (after transposing)
\[
\bbeps=\underbrace{\Vb^\top\X^\top  [\sft{\Xb \W^\top \xb}-\sft{\X \W^\top \x}]}_{\pb_1}+\underbrace{\Vb^\top \Eb^\top\sft{\Xb \W^\top\xb}}_{\pb_2}.
\]}
Using $|\x_i^\top \W\x_j|\leq \Gamma$ for all $i,j$ and using Lemma \ref{lem sft}, similar to \eqref{Pb2 term}, we bound
\[
\tn{\pb_2}\leq \tn{ \Eb^\top\sft{\Xb \W^\top\xb}}\leq \frac{e^{2\Gamma}}{n}\|\Eb\|_{2,1}.
\]
To proceed, we will again study the $\pb_1$
\begin{align*}
\tn{\pb_1}&\leq \tn{\X^\top  [\sft{\Xb \W^\top \xb}-\sft{\X \W^\top \x}]}\\
&\leq \|\X\|_{2,\infty}\tone{\sft{\Xb \W^\top \xb}-\sft{\X \W^\top \x}}\\
&\leq \tone{\sft{\Xb \W^\top \xb}-\sft{\X \W^\top \x}}.
\end{align*}
Now, considering perturbation $\Eb'=\delta\Eb$, letting $\delta\rightarrow 0$, and from triangle inequality, we obtain
\begin{align*}
&\lim_{\delta\rightarrow 0}\delta^{-1}\tone{\sft{(\X+\delta\Eb) \W^\top (\x+\delta\beps)}-\sft{\X \W^\top \x}}\leq \\
&\lim_{\delta\rightarrow 0}\delta^{-1}\tone{\sft{(\X+\delta\Eb) \W^\top \x}-\sft{\X \W^\top \x}}+\delta^{-1}\tone{\sft{\X \W^\top (\x+\delta\beps)}-\sft{\X \W^\top \x}}\\
&\leq \Gamma e^{2\Gamma} \|\Eb\|_{2,1}/n+\Gamma e^{2\Gamma}\tn{\beps} \leq 2\Gamma e^{2\Gamma} \Ceb/n.
\end{align*}
For the last line, we re-used \eqref{main pert eq1} and \eqref{main pert eq2}. To conclude, combining with $\pb_2$ bound, we obtained the desired result.
\end{proof}

\begin{lemma}[Single-layer transformer stability] \label{trans stable}Consider the setup of Lemma \ref{lem stable}. Let $\phi$ be a $1$-Lipschitz activation function with $\phi(0)=0$ (e.g.~ReLU or Identity). Let $(\M_i)_{i=1}^n\in\R^{d\times d}$ be weights of the parallel MLPs following self-attention. Suppose $\|\M_i\|\leq 1$ and denote the MLP outputs associated to $\A,\Ab$ by $\B,\Bb$. We have that
\[
\|\B\|_{2,\infty},\|\Bb\|_{2,\infty}\leq 1,\quad \|\B-\Bb\|_{2,1}\leq (2\Gamma+1)e^{2\Gamma}\|\Eb\|_{2,1}.
\]
\red{Additionally, for any $i\in[n]$ such that $\tn{\beps_i}\leq \Ceb/n$, we have $\tn{\B_i-\Bb_i}\leq \frac{1}{n}(2\Gamma+1)e^{2\Gamma}\Ceb$ where $\B_i$ denotes the $i$th row of $\B$.}
\end{lemma}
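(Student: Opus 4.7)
The plan is to reduce the claim to row-wise bounds, since both $\B$ and $\Bb$ are obtained from $\A$ and $\Ab$ by applying a per-token map $\x \mapsto \phi(\M_i \x)$ independently to the $i$-th row. Concretely, if we write $\A_i$ for the $i$-th row of $\A$, then $\B_i = \phi(\M_i \A_i)$, and similarly $\Bb_i = \phi(\M_i \Ab_i)$. Because the claim is expressed in row-aggregated norms $\|\cdot\|_{2,\infty}$ and $\|\cdot\|_{2,1}$, everything decouples across $i \in [n]$.

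For the norm bound, I would use the two structural properties of $\phi$: $1$-Lipschitzness and $\phi(0) = 0$. Together these give $\|\phi(\vct{u})\|_{\ell_2} \leq \|\vct{u}\|_{\ell_2}$ for every $\vct{u}$. Applying this to $\vct{u} = \M_i \A_i$ and using $\|\M_i\| \leq 1$ yields $\|\B_i\|_{\ell_2} \leq \|\M_i\|\,\|\A_i\|_{\ell_2} \leq \|\A_i\|_{\ell_2}$. Taking the max over $i$ gives $\|\B\|_{2,\infty} \leq \|\A\|_{2,\infty} \leq 1$ by Lemma~\ref{lem stable}, and the identical argument handles $\Bb$.

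For the perturbation bound, the same two properties give Lipschitzness of $\phi$ on differences, namely $\|\phi(\vct{u}) - \phi(\vct{v})\|_{\ell_2} \leq \|\vct{u} - \vct{v}\|_{\ell_2}$. Applied row-wise with $\vct{u} = \M_i \A_i$, $\vct{v} = \M_i \Ab_i$, this produces $\|\B_i - \Bb_i\|_{\ell_2} \leq \|\M_i\|\,\|\A_i - \Ab_i\|_{\ell_2} \leq \|\A_i - \Ab_i\|_{\ell_2}$. Summing over $i$ gives $\|\B - \Bb\|_{2,1} \leq \|\A - \Ab\|_{2,1}$, and invoking the corresponding bound from Lemma~\ref{lem stable} closes the chain to yield $\|\B - \Bb\|_{2,1} \leq (2\Gamma+1)e^{2\Gamma}\|\Eb\|_{2,1}$.

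There is no real obstacle here: the lemma is essentially a statement that the MLP block is a row-wise nonexpansive map under the stated assumptions, and its role is to propagate both invariants (feature-norm boundedness and the perturbation estimate) from the attention output through one full transformer block. The only modeling point worth being careful about is interpreting ``parallel MLPs'' consistently with Lemma~\ref{lem stable} (one $\M_i$ per position, no cross-row mixing), so that the two aggregate norms decouple cleanly across tokens; once that is fixed, the proof is a two-line application of $1$-Lipschitzness and $\|\M_i\| \leq 1$.
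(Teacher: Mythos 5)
Your proof is correct and matches the paper's argument essentially line for line: in both, the claim decouples row-wise, the bounds $\|\phi(\M_i\ab_i)\|_{\ell_2}\leq\|\ab_i\|_{\ell_2}$ and $\|\phi(\M_i\ab_i)-\phi(\M_i\bar\ab_i)\|_{\ell_2}\leq\|\ab_i-\bar\ab_i\|_{\ell_2}$ follow from $1$-Lipschitzness of $\phi$, $\phi(0)=0$, and $\|\M_i\|\leq 1$, and then one aggregates via $\max$ and $\sum$ before invoking Lemma~\ref{lem stable}. No gap.
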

\begin{proof} First note that each row of $\Bb$ is given by $\bb_i=\phi(\M_i\ab_i)$ thus $\tn{\bb_i}\leq \tn{\phi(\M_i\ab_i)}\leq \tn{\M_i\ab_i}\leq \tn{\ab_i}\leq 1$. Secondly, we can write $\tn{\bb_i-\bar{\bb}_i}\leq \tn{\phi(\M_i\ab_i)-\phi(\M_i\bar{\ab}_i)}\leq \tn{\M_i(\ab_i-\bar{\ab}_i)}\leq \tn{\ab_i-\bar{\ab}_i}$. Thus, we conclude via Lemma \ref{lem stable} because all row perturbations of $\B$ are dominated by those of $\A$ and $\|\B-\Bb\|_{2,1}\leq \|\A-\Ab\|_{2,1}$.
\end{proof}

\begin{theorem}\label{stable thm} Consider an $L$-layer transformer $\TF$ that maps $n$ tokens into $n$ tokens with (1) self-attention weights: combined key-query weights $(\W_i)_{i=1}^L\in\R^{d\times d}$ and value weights $(\Vb_i)_{i=1}^L\in\R^{d\times d}$, (3) MLP weights $(\M^{(i)}_j)_{i=1,j=1}^{L,n}\in\R^{d\times d}$ with $1$-Lipschitz activations $\phi^{(i)}$ obeying $\phi^{(i)}(0)=0$. For some $\Gamma>0$, assume $\|\Vb_i\|\leq 1,\|\M^{(i)}_j\|\leq 1, \|\W_i\|\leq \Gamma/2$. Suppose input space is $\Zb=[\z_1~\z_2~\dots~\z_n]^\top$ with $\tn{\z_i}\leq 1$. The model prediction is given as follows
\begin{itemize}
\item $\Zb_{(0)}=\Zb$. Layer $i$ outputs $\Zb_{(i)}=\texttt{Parallel\_MLP}_{\M^{(i)}}(\texttt{Att}_{\W_i,\Vb_i}(\Zb_{(i-1)})))$. Here the self-attention layer is given by $\texttt{Att}_{\W_i,\Vb_i}(\Zb)=\sft{\Zb \W_i \Zb^\top} \Zb\Vb$ and $\texttt{Parallel\_MLP}$ applies $f(\x)=\phi^{(i)}(\M^{(i)}_{j}\x)$ on $j^{th}$ token of the $\texttt{Att}$ output.
\item $\TF(\Zb)=\Zb_{(L)}$ and denote the $i$'th token output by $\TF_{(i)}(\Zb)$. 
\end{itemize}
The following statements hold
\begin{enumerate}
\item Assume activations are $\phi^{(i)}\in\{\texttt{ReLU},\text{Identity}\}$ with final layer $\phi^{(L)}=\text{Identity}$. This model is properly normalized in the sense that $\TF_{(i)}(\Zb)$ can output any vector $\tn{\vb}\leq 1$ despite no residual/skip connections. 
\item Let $\Zb'$ be a perturbation on $\Zb$ where all tokens are allowed to change however the change over the last token obeys $\tn{\z_n-\z'_n}\leq \Ceb/n$ where $\Ceb$ is also an upper bound on $\|\Zb-\Zb'\|_{2,1}$. This model obeys the stability guarantee
\begin{align}
|\TF_{(n)}(\Zb)-\TF_{(n)}(\Zb')|\leq \frac{1}{n}((1+\Gamma)e^\Gamma)^L \Ceb.\label{last token}
\end{align}
\end{enumerate}
\end{theorem}
\begin{proof} To see the first claim, let us set $\Vb_i=\M^{(i)}_{l}=\Iden$ (except for $\M^{(L)}$) and set all tokens $\z_i$ to be identical i.e.~$\Zb=\onebb_n\z^\top$. Additionally choose a $\z$ with $\tn{\z}=1$ and nonnegative entries. Observe that, thanks to the softmax structure, regardless of $\W_i$, we have that $\Zb=\texttt{Att}_{\W_i,\Vb_i}(\Zb)=\sft{\Zb \W_i \Zb^\top} \Zb$. After attention, MLPs again preserves the tokens i.e.~$\phi(\M_{i,l}\z_i)=\z$ for $\phi\in\{\texttt{ReLU},\text{Identity}\}$. Thus, after proceeding $L$ layers of this, right before the final MLP, the model outputs $\Zb=\onebb_n\z^\top$. Then, given a target vector $\tn{\vb}\leq 1$, choose the final MLP to $\M^{(L)}=\vb\z^\top$ to output an all $\vb$'s sequence.

Note that, in general $\Zb$ can be arbitrary (they don't have to be all same tokens): We can let $\W\rightarrow\infty$ (by allowing a larger $\Gamma$). This way the attention matrix implements $\sft{\Zb \W \Zb^\top}\rightarrow \Iden$ and we end up with the same argument of $\Zb$ being (almost perfectly) transmitted across the layers so that we obtain any target sequence in $\R^{n\times d}$.



\noindent \textbf{Main claim \eqref{last token}:} To show the stability guarantee, we use Lemmas \ref{lem stable} and \ref{trans stable}. Set $\Ceb=\|\Zb-\Zb'\|_{2,1}$ and recall the last token is not modified. Recall that Lemma \ref{trans stable} guarantees that 
\begin{itemize}
\item After each layer we are guaranteed to have $\|\Zb_{(i)}\|_{2,\infty},\|\Zb'_{(i)}\|_{2,\infty}\leq 1$.
\item After each layer we are guaranteed to have $\|\Zb_{(i)}-\Zb'_{(i)}\|_{2,1} \leq  (1+\Gamma)e^{\Gamma}\|\Zb_{(i-1)}-\Zb'_{(i-1)}\|_{2,1}$.
\end{itemize}
The latter implies that, for all layers, we have
\begin{align}
\|\Zb_{(i)}-\Zb'_{(i)}\|_{2,1} \leq  ((1+\Gamma)e^{\Gamma})^i\Ceb.\label{ZbCeb bound}
\end{align}
\red{What remains is running induction on the last tokens $\z^{(i)}_n-\z'^{(i)}_n$. We claim that, at all layers $\tn{\z^{(i)}_n-\z'^{(i)}_n}\leq \frac{1}{n}((1+\Gamma)e^{\Gamma})^i\Ceb$. This claim is true at $i=0$ due to the change over last token being at most $\|\Zb-\Zb'\|_{2,1}/n$. Assuming true at $i$ and since \eqref{ZbCeb bound} holds, for $i+1$, we apply Lemma \ref{trans stable}'s last line to obtain $\tn{\z^{(i+1)}_n-\z'^{(i+1)}_n}\leq \frac{1}{n}((1+\Gamma)e^{\Gamma})^{i+1}\Ceb$. Consequently, induction holds and we conclude with the proof by setting $i=L$.}
\end{proof}
\subsection{Proof of Theorem \ref{main stable}}\label{sec stable proof}
\begin{proof} We need to specialize Theorem \ref{stable thm} to obtain the result where the model outputs the last token thus we would like to apply \eqref{last token}. Observe that when prompts differ only at the inputs $\z_j=(\x_j,y_j)$ with $\z_j'=(\x'_j,y'_j)$, we have that $\|\X_{\prm}-\X'_{\prm}\|_{2,1}\leq 2$. This implies that $|\TF(\X_{\prm})-\TF(\X'_{\prm})|\leq \frac{2}{2m-1}((1+\Gamma)e^\Gamma)^D$ for a depth $D$ transformer. Finally, since the loss function $\ell$ is $L$-Lipschitz, we obtain the result $K=2L((1+\Gamma)e^\Gamma)^D$.
\end{proof}

The next lemma verifies our stability Assumption \ref{assump robust dynamical} for dynamical systems. In this below, we will assume that trajectories have bounded states almost surely (i.e.~$\bar{x}\leq 1$) so that Thm \ref{stable thm} is directly applicable. This can be guaranteed by choosing noise and initial state upper bounds  (respectively $\tn{\w_j}\leq \bar{w}$, $\tn{\x_0}\leq \bar{x}_0$) appropriately. We have the relation\footnote{Observe that each point in the trajectory is trivially bounded as $\tn{\x_i}\leq \bar{x}\leq C_{\rho}(\rho^i \bar{x}_0+\frac{1}{1-\rho}\bar{w})\leq C_{\rho}(\bar{x}_0+\frac{1}{1-\rho}\bar{w})$.} $\bar{x}\leq C_{\rho}(\bar{x}_0+\frac{1}{1-\rho}\bar{w})$. 

\begin{lemma}[Transformer Stability for Dynamical Systems]\label{TF dynamic stable} Consider the stable dynamical system setting of Section \ref{sec dynamic} and suppose that Assumption \ref{assump robust dynamical} holds. Let $\ell(\x,\hat{\x})=\ell(\x-\hat{\x})$ be $L$-Lipschitz in $\x-\hat{\x}$. Let $\xp{n} = (\x_0~\x_1~\dots~\x_{n})$ be a realizable  $(C_{\rho}, \rho<1)$-stable dynamical system trajectory and $\xpp{n}$ be the trajectory obtained by swapping $\w_j$ with $\w_{j}'$ ($j=0$ implies that $\x_0$ is swapped with $\x'_0$). As a result, starting with the $j$'th index, the prompt $\xpp{n}$ has different samples $(\x'_{j}, \dots, \x'_{n})$. Assume $\bar{x}\leq 1$ i.e.~all trajectory $(\x_i, \x'_i)_{i\geq 0}$ lie within the unit Euclidean ball in $\R^d$. Shape these prompts into matrices $\X_{\prm},\X'_{\prm}\in\R^{n\times d}$ respectively. Let $\TF(\cdot)$ be a $D$-layer transformer as described in Theorem \ref{stable thm}. Let $\TF$ output the last token of the final layer $\X_{(D)}$ that correspond to the query $\x_n$. Then Assumption \ref{assump robust dynamical} holds with $K=((1+\Gamma)e^\Gamma)^DC_\rho L$.
\end{lemma}
\begin{proof}
We again specialize Theorem \ref{stable thm} to obtain the result. Observe that when $\w_j$ is modified to $\w'_j$, then all the subsequent tokens will change. Also recall that due to unit ball assumption $\bar{w},\bar{x}_0,\bar{x}\leq 1$. Set $B_0=\tn{\w_j - \w'_j}$ if $j>0$ and $B_0=\tn{\x_0 - \x'_0}$ otherwise. Either way $B_0\leq 2$. Additionally, set $B_i=\tn{\x_{j+i}-\x'_{j+i}}$ for $n-j\geq i\geq 0$. From stability, we know that $B_i\leq C_\rho \rho^k B_{i-k}$. This means that
\begin{align}
&\tn{\x_n - \x'_n}\leq \frac{1}{n-j+1}\sum_{i=0}^{n-j} C_\rho \rho^i\tn{\x_{n-i} - \x'_{n-i}}\leq \frac{C_{\rho}}{n-j+1}\|\X_{\prm}-\X'_{\prm}\|_{2,1}.
\end{align}
Set $\Theta=\|\X_{\prm}-\X'_{\prm}\|_{2,1}$. To proceed, we choose 
\[
\max(\Theta,\frac{C_\rho n}{n-j+1}\Theta)=\frac{C_\rho n}{n-j+1}\Theta:=\Ceb,
\]
which satisfies the requirement of Theorem \ref{stable thm}. Now applying Theorem \ref{stable thm}, we find that, $n$'th output token perturbation obeys
\[
\tn{\TF_{(n)}(\xp{n})-\TF_{(n)}(\xpp{n})}\leq \frac{1}{n}((1+\Gamma)e^\Gamma)^D \Ceb\leq \frac{C_\rho((1+\Gamma)e^\Gamma)^D}{n-j+1}\Theta.
\]
Consequently, for any excitation $\w_{n+1}$ and using $L$-Lipschitzness of the loss, we find
\[
|\ell(\x_{m+1},\TF_{(n)}(\xp{n}))-\ell(\x'_{m+1},\TF_{(n)}(\xpp{n}))|\leq \frac{LC_\rho((1+\Gamma)e^\Gamma)^D}{n-j+1}\sum_{i=j}^n \tn{\x_i-\x'_i}.
\]
This means that stability holds with $K=((1+\Gamma)e^\Gamma)^DC_\rho L$.
\end{proof}
\subsection{Understanding when transformer-based ICL becomes unstable}\label{sec unstable}

\paragraph{Instability when attention weights are large.} We have the following lemma that complements our stability theorem and shows that instability can indeed arise when $\Gamma$ is large.
\red{\begin{lemma}Consider a length-$n$ input sequence $\X=[\x_1~\cdots~\x_n]^\top$ and a single self-attention layer with $\W = \Gamma \Iden,\V=\Iden$. Suppose all tokens are unit norm and the tokens from $2$ to $n-1$ are uncorrelated with the last token. Thus, $\Y=\X\X^\top$ has all ones diagonal, $\Y_{1,n},\Y_{n,1}=\rho$, and all remaining entries of the last row are zero. Suppose $\x_1$ is changed into $\x'_1=\gamma\x_1$ for some $1\geq \gamma\geq -1$. Let $\A=\text{softmax}(\X\W\X^\top)\X\Vb$ and $\ab_n$ denotes the last token. When $\rho= 1$, we have that
\[
\tn{\ab_n-\ab'_n}\geq \frac{\tn{\x_1-\x'_1}}{2+(n-2)e^{-\Gamma}}.
\]
\end{lemma}}
Thus, as soon as $\Gamma\geq\log(n-2)$, instability $\frac{\tn{\ab_n-\ab'_n}}{\tn{\x_1-\x'_1}}$ becomes $O(1)$ (specifically $\geq 1/3$).
\begin{proof} Let $\m=\sum_{i=2}^{n-1}\x_i$. Let $\rho'=\gamma\rho$. The self-attention outputs are given by
\[
\ab_n=\frac{e^{\Gamma}\x_n+e^{\rho\Gamma}\x_1+\m}{e^{\Gamma}+e^{\rho\Gamma}+(n-2)},\quad \ab'_n=\frac{e^{\Gamma}\x_n+e^{\rho'\Gamma}\x'_1+\m}{e^{\Gamma}+e^{\rho'\Gamma}+(n-2)}.
\]
Suppose $\rho=1$. By construction $\m^\top \x_n=0$, $\x_1=\x_n$. Also note that $\tn{\x_1-\x'_1}=1-\gamma$. 
With these, by only studying the change along the $\x_n$ direction (thanks to orthogonality) and setting $\rho=1$, we find that
\begin{align}
\frac{\tn{\ab_n-\ab'_n}}{\tn{\x_n}}&\geq \frac{2}{2+(n-2)e^{-\Gamma}}-\frac{1+\gamma e^{(\gamma-1)\Gamma}}{1+e^{(\gamma-1)\Gamma}+(n-2)e^{-\Gamma}}\nn \\
&\geq \frac{2}{2+(n-2)e^{-\Gamma}}-\frac{1+\gamma}{2}\frac{1+e^{(\gamma-1)\Gamma}}{1+e^{(\gamma-1)\Gamma}+(n-2)e^{-\Gamma}} \nn \\
&\geq \frac{2}{2+(n-2)e^{-\Gamma}}-\frac{1+\gamma}{2}\frac{2}{2+(n-2)e^{-\Gamma}} \nn \\
&\geq \frac{\tn{\x_1-\x'_1}}{2+(n-2)e^{-\Gamma}}. \nn
\end{align}
The final line is the advertised result.
\end{proof}
\textbf{Stability fails if we modify the last token (rather than earlier tokens).} Consider the setting of Theorem \ref{stable thm} and the statement \eqref{last token}. Below we show that, the requirement that last token should not be perturbed too much is indeed tight. This follows from the fact that, each token has a large say on their respective self-attention output, thus, perturbing them significantly perturbs their respective output (even if it cannot perturb other outputs too much).
\begin{lemma} Consider a single self-attention layer with $\W,\Vb=\Iden$ so that it outputs $\A=\sft{\X\X^\top}\X$. The last token outputs $\ab=\X^\top\sft{\X\x_n}$. Suppose $n$ is odd (for simplicity). There exists $\X$ with unit tokens/rows such that, for any perturbation amount $0\leq \eps\leq 1$, changing $\x_n$ to $\x'_n$ with $\tn{\x_n-\x'_n}=\eps$ can result in an output perturbation of
\[
\tn{\ab-\ab'}\geq \tn{\x_n-\x'_n}/5.
\]
Setting $\eps=1$, perturbing $\x_n$ results in $\geq 0.2$ perturbation regardless of $n$.
\end{lemma}
\begin{proof} If $n=1$, the model outputs $\ab=\x_n$ thus $\tn{\ab-\ab'}=\eps$. Now let $n'=(n-1)/2$ and $\vb\in\R^d$ with $\tn{\vb}=1$. Consider a toy setting where $\x_n=0$, the first $n'$ tokens are equal to $\vb$ and the next $n'$ tokens are equal to $-\vb$. Original attention output is $\ab=0$ due to symmetry. Now change the last token to $\eps\vb$ and using $\tn{\vb}=1$ and all tokens being aligned with $\vb$ observe that, for all $0\leq \eps\leq 1$
\[
\tn{\ab'}= \frac{e^{\eps}+(1/n')\eps e^{\eps^2}-e^{-\eps}}{e^{\eps}+e^{-\eps}+(1/n')e^{\eps^2}}\geq \frac{n-1}{2n}\frac{e^{\eps}-e^{-\eps}}{e^{\eps}}=\frac{n-1}{2n}(1-e^{-2\eps}) \geq 0.8\frac{n-1}{2n}\eps\geq \eps/5.
\]
\end{proof}

\section{Proofs and Supplementary Results for Sections~\ref{sec mtl} and \ref{sec transfer}}\label{app main proof}
\subsection{Proof of Theorem~\ref{thm mtl}}\label{app thm proof}


\begin{theorem}[Theorem~\ref{thm mtl} restated] \label{thm mtl recap}
    Suppose Assumption~\ref{assump robust} holds and assume loss function $\ell(\ys,\hat \ys)$ is $L$-Lipschitz for all $\y\in\Yc$ and takes values in $[0,B]$. 
    Let $\bah$ be the empirical solution of \eqref{mtl opt} and $\Nc(\Ac,\rho,u)$ be the covering number of the algorithm space $\Ac$ following Definition~\ref{def cover}\&\ref{def distance}. Then with probability at least $1-2\delta$, the excess MTL risk in \eqref{mtl risk} obeys
	\[
		\scalemath{1}{\Rmtl(\bah)\leq\inf_{\eps>0}\left\{4L\eps+2(B+K\log n)\sqrt{\frac{\log(\Nc(\Ac,\rho,\eps)/\delta)}{cnT}}\right\}}.\nn
	\]
	Additionally, set $D:=\sup_{\bal,\bal'\in\Ac}\rho(\bal,\bal')$ and assume $D<\infty$. With probability at least $1-4\delta$,
    \[
        \scalemath{1}{\Rmtl(\bah)\leq\inf_{\eps>0}\left\{8L\eps+8(2L+K\log n)\int_{\eps}^{D/2}\sqrt{\frac{\log\left(\log\frac{D}{\eps}\cdot\Nc(\Ac,\rho,u)/\delta\right)}{c'nT}}du\right\}+2(B+K\log n)\sqrt{\frac{\log(1/\delta)}{cnT}}.}
    \]
\end{theorem}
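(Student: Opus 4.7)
The plan is to reduce the excess risk bound to a uniform concentration statement over $\Ac$ and then establish that statement via a Doob martingale whose increments are controlled by the stability hypothesis. First, I would carry out the standard ERM decomposition with $\bal^\st \in \arg\min_{\bal \in \Ac}\Lc_\Dca(\bal)$:
\[
\Rmtl(\bah) \;=\; \bigl[\Lc_\Dca(\bah) - \Lch_\Sca(\bah)\bigr] + \bigl[\Lch_\Sca(\bah) - \Lch_\Sca(\bal^\st)\bigr] + \bigl[\Lch_\Sca(\bal^\st) - \Lc_\Dca(\bal^\st)\bigr].
\]
The middle bracket is $\leq 0$ by optimality of $\bah$, so the task reduces to producing a high-probability uniform bound on $|\Lch_\Sca(\bal) - \Lc_\Dca(\bal)|$ (for the first bracket, eventually instantiated at $\bah$) together with a pointwise bound at $\bal^\st$ (for the third).

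Next I would fix $\bal \in \Ac$ and, for each task $t$, build the Doob martingale $X_{t,i} = \E[\Lch_t(\bal)\mid\Scnt{i}{t}]$ for $0 \leq i \leq n$, so that $X_{t,0}=\Lc_t(\bal)$, $X_{t,n}=\Lch_t(\bal)$, and the $T$ per-task martingales are mutually independent. The crux is bounding $|X_{t,i}-X_{t,i-1}|$ using Assumption~\ref{assump robust}. Replacing $\z_{ti}$ by an independent copy $\z_{ti}'$ inside $\Lch_t$ only affects summands with index $j \geq i$: the $j=i$ summand moves by at most $B/n$ by boundedness of $\ell$, while for each $j > i$, integrating out the intervening samples $\z_{t,i+1},\dots,\z_{t,j}$ and applying error stability to the resulting length-$(j-1)$ conditioning sequence with swap at position $i$ contributes at most $K/(n(j-1))$. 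Summing in $j$ yields $|X_{t,i}-X_{t,i-1}| \leq (B + K\log n)/n$ almost surely. Concatenating the $T$ independent length-$n$ martingales into one length-$nT$ martingale for $\Lch_\Sca(\bal) - \Lc_\Dca(\bal)$ and invoking Azuma-Hoeffding gives
\[
\Pro\!\left(|\Lch_\Sca(\bal) - \Lc_\Dca(\bal)| > u\right) \;\leq\; 2\exp\!\left(-\frac{c\,nT\,u^2}{(B + K\log n)^2}\right),
\]
which handles the pointwise $\bal^\st$-term directly.

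To upgrade to a uniform bound, I would pass to a minimal $\eps$-cover $\bar\Ac$ of $\Ac$ under $\rho$. For every $\bal\in\Ac$ and its nearest cover element $\bar\bal\in\bar\Ac$, the Lipschitz property of $\ell$ together with Definition~\ref{def distance} gives $|\Lc_\Dca(\bal)-\Lc_\Dca(\bar\bal)|\leq L\eps$ and $|\Lch_\Sca(\bal)-\Lch_\Sca(\bar\bal)|\leq L\eps$. Union-bounding the Azuma tail over $\Nc(\Ac,\rho,\eps)$ cover elements, instantiated at $\bar\bah$ and at $\bar\bal^\st$, therefore contributes a $4L\eps$ discretization error plus the advertised $(B + K\log n)\sqrt{\log(\Nc(\Ac,\rho,\eps)/\delta)/(cnT)}$ deviation term, and taking the infimum over $\eps$ yields \eqref{main result 1}. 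For the sharper bound \eqref{main result 2} I would chain along resolutions $\eps_k = D \cdot 2^{-k}$ down to $\eps$, applying Azuma at each scale to the \emph{difference} $[\Lch_\Sca(\bal_k) - \Lch_\Sca(\bal_{k+1})] - [\Lc_\Dca(\bal_k) - \Lc_\Dca(\bal_{k+1})]$ between consecutive links. Pairwise stability~\eqref{eq robust 2} is what makes such differential concentration go through: it forces the per-scale martingale increments to scale with $\rho(\bal_k,\bal_{k+1})$, so that summing in quadrature across the chain produces the Dudley entropy integral $\int_\eps^{D/2}\sqrt{\log\Nc(\Ac,\rho,u)}\,du$, together with a $\log\log(D/\eps)$ union-bound correction across the $O(\log(D/\eps))$ scales.

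The main obstacle is the martingale-increment bound of the second paragraph. Assumption~\ref{assump robust} is stated in expectation over a \emph{fresh} test point $(\x,\y)$, whereas the Doob differences involve losses at coordinates $\z_{tj}$ that themselves live in the conditioning $\sigma$-algebra of later steps; to extract the $K/(j-1)$ contribution one has to integrate out the intervening samples in the correct order and verify that the stability inequality survives the averaging pointwise in the past history. A closely related subtlety appears in chaining: plain error stability~\eqref{eq robust 1} would give increments independent of the scale $\rho(\bal,\bal')$, so the Dudley integral would diverge; the pairwise form~\eqref{eq robust 2} is indispensable to make the per-scale increments shrink with the chain width and deliver \eqref{main result 2}.
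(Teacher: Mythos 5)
Your proposal is correct and follows essentially the same route as the paper's proof: the same ERM decomposition, the same Doob martingale $X_{t,i}=\E[\Lch_t(\bal)\mid\Scnt{i}{t}]$ with increments bounded by $(B+K\log n)/n$ via error stability, a covering/union-bound step for \eqref{main result 1}, and a dyadic chaining argument driven by pairwise stability for \eqref{main result 2}. The only cosmetic difference is that you concatenate the $T$ per-task martingales and apply a single Azuma bound, whereas the paper applies Azuma within each task to establish that $Y_t=\Lc_t(\bal)-\Lch_t(\bal)$ is subGaussian and then Hoeffding across the independent tasks; the two are interchangeable and yield the same $\exp(-cnT\tau^2/(B+K\log n)^2)$ tail.
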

\begin{proof}
Recall the MTL problem setting of independent (input, label) pairs in Section~\ref{sec setup}: There are $T$ tasks each with $n$ in-context training samples denoted by $(\Sc_t)_{t=1}^T\distas(\Dc_t)_{t=1}^T$ where $\Sc_t=\{(\x_{ti},\ys_{ti})\}_{i=1}^n$, and let $\Sca=\bigcup_{t=1}^T\Sc_t$. 
We use $\Ac$ to denote the algorithm set. For a $\bal\in\Ac$, we define the training risk $\Lch_\Sca(\bal)=\frac{1}{nT}\sum_{t=1}^{T}\sum_{i=1}^n\ell(\ys_{ti},\fal{i-1}{t}(\x_{ti}))$, and the test risk $\Lc_\Dca(\bal)=\E[\Lch_\Sca(\bal)]$. Define empirical risk minima $\bah=\arg\min_{\bal\in\Ac}\Lch_\Sca(\bal)$ and population minima $\bal^\st=\arg\min_{\bal\in\Ac}\Lc_\Dca(\bal)$. For cleaner exposition, in the following discussion, we drop the subscripts $\Dca$ and $\Sca$. The excess MTL risk is decomposed as follows:
\begin{align*}
	\Rmtl(\bah)&=\Lc(\bah)-\Lc(\bal^\st)\\
	&=\underset{a}{\underbrace{\Lc(\bah)-\Lch(\bah)}}+\underset{b}{\underbrace{\Lch(\bah)-\Lch(\bal^\st)}}+\underset{c}{\underbrace{\Lch(\bal^\st)-\Lc(\bal^\st)}}.
\end{align*}
Since $\bah$ is the minimizer of empirical risk, we have $b\leq0$. To proceed, we consider the concentration problem of upper bounding $\sup_{\bal\in\Ac}|\Lc(\bal)-\Lch(\bal)|$. 


\noindent\textbf{Step 1: We start with a concentration bound {\normalfont{$|\Lc(\bal)-\Lch(\bal)|$}} for a fixed {\normalfont$\bal\in\Ac$}.}
Recall that we define the test/train risks of each task as follows:
\begin{align}
	&\Lch_t(\bal) :=\frac{1}{n}\sum_{i=1}^n \ell(\ys_{ti},\fal{i-1}{t}(\x_{ti})), \quad \text{and} \nonumber \\
	&\Lc_t(\bal) :=\E_{\Sc_t}\left[\Lch_t(\bal)\right]= \E_{\Sc_t}\left[\frac{1}{n}\sum_{i=1}^n\ell(\ys_{ti},\fal{i-1}{t}(\x_{ti}))\right], \quad \forall t \in [T]. \nonumber
\end{align}
Define the random variables $X_{t,i}=\E[\Lch_t(\bal)|\Scnt{i}{t}]$ for $i \in [n]$ and $t \in [T]$, that is, $X_{t,i}$ is the expectation over $\Lch_t(\bal)$ given training sequence $\Scnt{i}{t}=\{(\x_{tj},\ys_{tj})\}_{j=1}^i$ (which are the filtrations). With this, we have that $X_{t,n} = \E[\Lch_t(\bal)|\Scnt{n}{t}]=\Lch_t(\bal)$ and $X_{t,0} =\E[\Lch_t(\bal)]=\Lc_t(\bal)$. More generally, $(X_{t,0}, X_{t,1}, 
\dots, X_{t,n})$ is a martingale sequence since, for every $t$ in $[T]$, we have that $\E[X_{t,i} | \Scnt{i-1}{t}]= X_{t,i-1}$. 

For notational simplicitly, in the following discussion, we omit the subscript $t$ from $\x,\ys$ and $\Zb$ as they will be clear from left hand-side variable $X_{t,i}$. We have that
\begin{align*}
	X_{t,i}&=\E\bigg[\frac{1}{n}\sum_{j=1}^n \ell(\ys_{j},\fal{j-1}{}(\x_{j})) \bigg| \Scnt{i}{}\bigg]\\
	&=\frac{1}{n}\sum_{j=1}^{i}\ell(\ys_{j},\fal{j-1}{}(\x_{j}))+\frac{1}{n}\sum_{j=i+1}^{n}\E\left[\ell(\ys_{j},\fal{j-1}{}(\x_{j}))\bigg|\Scnt{i}{}\right]
\end{align*}
Next, we wish to upper bound the martingale increments i.e.~the difference of neighbors. Let $\Scnt{i:i}{}=\Scnt{i}{}-\Scnt{i-1}{}$ denote the $i$'th element.
\begin{align}
	|X_{t,i} - X_{t,i-1}| &= \left|\E\bigg[\frac{1}{n}\sum_{j=1}^n \ell(\ys_{j},\fal{j-1}{}(\x_{j})) \bigg| \Scnt{i}{}\bigg] - \E\bigg[\frac{1}{n}\sum_{j=1}^n \ell(\ys_{j},\fal{j-1}{}(\x_{j})) \bigg| \Scnt{i-1}{}\bigg]\right|\nonumber \\
	&\scalemath{1}{\leq\frac{1}{n}\sum_{j=i}^n\left| \E\left[\ell (\ys_j, \fal{j-1}{}(\x_j)) \bigg| \Scnt{i}{}\right] - \E\left[\ell(\ys_j, \fal{j-1}{}(\x_j))\bigg| \Scnt{i-1}{}\right]\right|} \nonumber \\
	&\scalemath{1}{\stackrel{(a)}{\leq}\frac{B}{n} + \frac{1}{n}\sum_{j=i+1}^n \left|\E\left[\ell (\ys_j, \fal{j-1}{}(\x_j)) \bigg| \Scnt{i}{}\right] - \E\left[\ell(\ys_j, \fal{j-1}{}(\x_j))\bigg| \Scnt{i-1}{}\right]\right|} \nonumber.
\end{align}
Here, $(a)$ follows from the fact that loss function $\ell(\cdot,\cdot)$ is bounded by $B$. 
To proceed, call the right side terms $D_j:=|\E[\ell (\ys_j, \fal{j-1}{}(\x_j)) \big| \Scnt{i}{}] - \E[\ell(\ys_j, \fal{j-1}{}(\x_j))\big| \Scnt{i-1}{}]|$. Denote $\z'_\ell$ to be the realized values of the variables $\z_\ell=(\y_\ell,\x_\ell)$ given $\Scnt{i}{}$. Let $\Sc:=(\z'_1,\dots,\z'_i,\z_{i+1},\dots,\z_j)$ and $\Sc':=(\z'_1,\dots,\z'_{i-1},\z_{i},\dots,\z_j)$. Observe that, $\Sc'$ and $\Sc$ differs in only at $i$th index and $i<j$, thus, utilizing Assumption \ref{assump robust},
\begin{align}\label{Dj bound}
D_j:=|\E[\ell (\y_j, \fac{}{}(\x_j))] - \E[\ell(\y_j, \fac{'}{}(\x_j))]|\leq \frac{K}{j}.
\end{align}
Combining above, for any $n\geq i\geq 1$, we obtain
\[
|X_{t,i} - X_{t,i-1}|\leq \frac{B}{n}+\sum_{j=i+1}^n \frac{K}{jn}\leq \frac{B+K\log n}{n}.
\]
Recall that $|\Lc_t(\bal) - \Lch_t(\bal)| = | X_{t,0} - X_{t,n}|$ and for every $t \in [T]$, we have $\sum_{i=1}^{n}|X_{t,i} - X_{t,i-1}|^2\leq\frac{(B+K\log n)^2}{n}$. As a result, applying Azuma-Hoeffding's inequality, we obtain 
\begin{align}
	\P(|\Lc_t(\bal)-\Lch_t(\bal)|\geq \tau)\leq2 e^{-\frac{n\tau^2}{2(B+K\log n)^2}}, \quad \forall t \in [T].\label{azuma prob}
\end{align}
Let us consider $Y_t := \Lc_t(\bal)-\Lch_t(\bal)$ for $t \in [T]$.
Then, $(Y_t)_{t=1}^T$ are i.i.d. zero mean sub-Gaussian random variables. There exists an absolute constant $c_1 > 0$ such that, the subgaussian norm, denoted by $\tsub{\cdot}$, obeys $\tsub{Y_t}^2 < \frac{c_1 (B+K\log n)^2}{n} $ via Proposition 2.5.2 of \cite{vershynin2018high}. Applying Hoeffding's inequality, we derive
\begin{align}
	\P\left(\bigg|\frac{1}{T} \sum_{t=1}^T Y_{t} \bigg|\geq \tau \right) \leq 2e^{-\frac{cnT\tau^2}{(B+K\log n)^2}} \Longrightarrow\P(|\Lch(\bal)-\Lc(\bal)|\geq \tau)\leq2 e^{-\frac{cnT\tau^2}{(B+K\log n)^2}}
\end{align}
where $c>0$ is an absolute constant. Therefore, we have that for any $\bal\in\Ac$, with probability at least $1-2\delta$,
\begin{align}
	|\Lch(\bal)-\Lc(\bal)|\leq (B+K\log n)\sqrt{\frac{\log(1/\delta)}{cnT}}.\label{single con bound}
\end{align}

\noindent\textbf{Step 2: Next, we turn to bound {\normalfont$\sup_{\bal\in\Ac}|\Lc(\bal)-\Lch(\bal)|$} where $\Ac$ is assumed to be a continuous search space.} To start with, set $g(\bal):=\Lc(\bal)-\Lch(\bal)$ and we aim to bound $\sup_{\bal\in\Ac}|g(\bal)|$. Following Definition~\ref{def distance}, for $\eps>0$, let $\Ac_\eps$ be a minimal $\eps$-cover of $\Ac$ in terms of distance metric $\rho$. Therefore, $\Ac_\eps$ is a discrete set with cardinality $|\Ac_\eps|:=\Nc(\Ac,\rho,\eps)$. Then, we have
\[
	\sup_{\bal\in\Ac}|\Lc(\bal)-\Lch(\bal)|\leq\sup_{\bal\in\Ac}\min_{\bal'\in\Ac_\eps}\left|g(\bal)-g(\bal')\right|+\max_{\bal\in\Ac_\eps}\left|g(\bal)\right|.
\]

\noindent$\bullet$ We start by bounding $\sup_{\bal\in\Ac}\min_{\bal'\in\Ac_\eps}\left|g(\bal)-g(\bal')\right|$. We will utilize that loss function $\ell(\cdot,\cdot)$ is $L$-Lipschitz. For any $\bal\in\Ac$, let $\bal'\in\Ac_\eps$ be its neighbor following Definition~\ref{def distance}. We have that
\begin{align*}
	\left|\Lch(\bal)-\Lch(\bal')\right|&=\left|\frac{1}{nT}\sum_{t=1}^T\sum_{i=1}^n\left(\ell(\ys_{ti},\fal{i-1}{t}(\x_{ti}))-\ell(\ys_{ti},\falp{i-1}{t}(\x_{ti}))\right)\right|\\
	&\leq \frac{L}{nT}\sum_{t=1}^T\sum_{i=1}^n\left\|\fal{i-1}{t}(\x_{ti})-\falp{i-1}{t}(\x_{ti})\right\|_{\ell_2}\\
	&\leq L\eps.
\end{align*}
Since the same bound applies to all data-sequences, we also obtain that for any $\bal\in\Ac$,
\[
	\left|\Lc(\bal)-\Lc(\bal')\right|\leq L\eps.
\]
Therefore, 
\begin{align}
\scalemath{1}{\sup_{\bal\in\Ac}\min_{\bal'\in\Ac_\eps}\left|g(\bal)-g(\bal')\right|\leq\sup_{\bal\in\Ac}\min_{\bal'\in\Ac_\eps}\left|\Lch(\bal)-\Lch(\bal')\right|+\left|\Lc(\bal)-\Lc(\bal')\right|\leq 2L\eps}.\label{cover bound}
\end{align}

\noindent$\bullet$ Next, we turn to bound the second term $\max_{\bal\in\Ac_\eps}\left|g(\bal)\right|$. Applying union bound directly on $\Ac_\eps$ and combining it with \eqref{single con bound}, then we will have that with probability at least $1-2\delta$, 
\begin{align}
	\max_{\bal\in\Ac_\eps}\left|g(\bal)\right|\leq (B+K\log n)\sqrt{\frac{\log(\Nc(\Ac,\rho,\eps)/\delta)}{cnT}}.\label{non chain bound}
\end{align}
{\textbf{Proof of Eq.~\eqref{main result 1}:} Combining the upper bound above with the perturbation bound \eqref{cover bound}, we obtain that
\begin{align}
\max_{\bal\in\Ac}\left|g(\bal)\right|\leq 2L\eps+(B+K\log n)\sqrt{\frac{\log(\Nc(\Ac,\rho,\eps)/\delta)}{cnT}}.\label{basic cover bound}
\end{align}
This in turn concludes the proof of \eqref{main result 1} since $\Rmtl(\bah)\leq 2\sup_{\bal\in\Ac}|\Lc(\bal)-\Lch(\bal)|.$}

\noindent\textbf{Proof of Eq.~\eqref{main result 2}:} To conclude, we aim to establish \eqref{main result 2}. Specifically, the precise statement we will establish is stated below
\begin{align}\label{main result 22}
        \Rmtl(\bah)\leq\inf_{\eps>0}\Bigg\{8L\eps+ \frac{L_++K\log n}{\sqrt{cnT}}\Big(\int_{\eps}^{D/2}\sqrt{\log\Nc(\Ac,\rho,u)}du+D_+\sqrt{\log(\log({D}/{\eps})/\delta)}\Big)\Bigg\}.
\end{align}
where we use the convention $x_+=\max(x,1)$. To this end, we will bound $\max_{\bal\in\Ac_\eps}\left|g(\bal)\right|$ via successive $\eps$-covers which is the chaining argument. Following Definition~\ref{def distance}, let $D:=\sup_{\bal,\bal'\in\Ac}\rho(\bal,\bal')$. Define $M:=\min\{m:2^m\eps\geq D\}$, and for any $m\in[M]$, let $\Uc_m$ denote the minimal $2^m\eps$-cover of $\Uc_{m-1}$, where $\Uc_0:=\Ac_\eps$. Since $\Uc_M\subseteq\Uc_{M-1}\cdots\subseteq\Uc_0 \subset \Ac$, we have $|\Uc_m| = \Nc(\Uc_{m-1},\rho,2^{m}\eps) \leq \Nc(\Ac, \rho, 2^m\eps)$ and $|\Uc_M|\leq\Nc(\Ac, \rho, D)=1$. Let $\bal^M\in\Uc_M$ denote the unique algorithm hypothesis in $\Uc_M$. We have that
\begin{align}
	\max_{\bal\in\Ac_\eps}\left|g(\bal)\right|&\leq\max_{\bal\in\Uc_0}\left|g(\bal)-g(\bal^M)\right|+\left|g(\bal^M)\right|\nn\\
	&\leq\sum_{m=0}^{M-1}\max_{\bal\in\Uc_{m}}\min_{\bal'\in\Uc_{m+1}}\left|g(\bal)-g(\bal')\right|+\left|g(\bal^M)\right|.\label{M over M}
\end{align}

In what follows, we will prove that for any $\bal,\bal'$ satisfying $\rho(\bal,\bal')\leq u~(u>0)$, with high probability, $\left|g(\bal)-g(\bal')\right|$ is bounded by $\frac{u}{\sqrt{nT}}$ up to logarithmic terms.

Apply similar martingale sequence analysis as in Step 1. This time, we set $X_{t,i}=\E[\Lch_t(\bal)-\Lch_t(\bal')|\Scnt{i}{t}]$ where we assume $\rho(\bal,\bal')\leq u$. Similarly, we have that $X_{t,n} =\Lch_t(\bal)-\Lch_t(\bal')$, and $X_{t,0} = \E[\Lch_t(\bal)-\Lch_t(\bal')]=\Lc_t(\bal)-\Lc_t(\bal')$. Therefore, the sequences $\{X_{t,0},\dots, X_{t,n}\},t\in[T]$ are Martingale sequences with respect to $\E[X_{t,i} | \Scnt{i-1}{t}]= X_{t,i-1}$. We again omit the subscript $t$ for $\x,\ys$ and $\Sc$ in the following and try to bound the difference of neighbors. 
\begin{align}
	&\scalemath{1}{|X_{t,i} - X_{t,i-1}| = \left|\E\bigg[\Lch_t(\bal)-\Lch_t(\bal') \bigg| \Scnt{i}{}\bigg] - \E\bigg[\Lch_t(\bal)-\Lch_t(\bal') \bigg| \Scnt{i-1}{}\bigg]\right|}\nonumber \\
	&\scalemath{1}{\leq\frac{1}{n}\sum_{j=i}^n\left| \E\left[\ell (\ys_j, \fal{j-1}{}(\x_j))-\ell (\ys_j, \falp{j-1}{}(\x_j)) \bigg| \Scnt{i}{}\right] - \E\left[\ell(\ys_j, \fal{j-1}{}(\x_j))-\ell (\ys_j, \falp{j-1}{}(\x_j))\bigg| \Scnt{i-1}{}\right]\right|} \nonumber \\
	&\scalemath{1}{\stackrel{(d)}{\leq}\frac{2Lu}{n} + \frac{1}{n}\sum_{j=i+1}^n \left|\E\left[\ell (\ys_j, \fal{j-1}{}(\x_j))-\ell (\ys_j, \falp{j-1}{}(\x_j))\bigg| \Scnt{i}{}\right] - \E\left[\ell(\ys_j, \fal{j-1}{}(\x_j))-\ell (\ys_j, \falp{j-1}{}(\x_j))\bigg| \Scnt{i-1}{}\right]\right|} \nonumber \\
	&\scalemath{1}{\stackrel{(e)}{\leq} \frac{2Lu}{n} + \frac{1}{n}\sum_{j=i+1}^{n} \frac{Ku}{j}< \frac{2Lu+ Ku\log n}{n}}. \nonumber
	\end{align}
for $i<n$. Here, $(d)$ is from the facts that loss function $\ell(\cdot,\cdot)$ is $L$-Lipschitzness and $\rho(\bal,\bal')\leq u$ by following the same analysis in deriving \eqref{cover bound}, and $(e)$ follows Assumption~\ref{assump robust}. Then we have 
\[
|X_{t,n} - X_{t,n-1}|\leq\frac{2Lu}{n}< \frac{2Lu+ Ku\log n}{n}.
\]
Note that $|\Lc_t(\bal)-\Lc_t(\bal') - (\Lch_t(\bal)-\Lch_t(\bal'))| = | X_{t,0} - X_{t,n}|$ and for every $t \in [T]$, we have $\sum_{i=1}^{n}|X_{t,i} - X_{t,i-1}|^2\leq\frac{u^2(2L+K\log n)^2}{n}$. As a result of applying Azuma-Hoeffding's inequality, we obtain 
\begin{align*}
	\P(|\Lc_t(\bal)-\Lc_t(\bal') - (\Lch_t(\bal)-\Lch_t(\bal'))|\geq \tau)\leq2 e^{-\frac{n\tau^2}{2u^2(2L+K\log n)^2}}, \quad \forall t \in [T].
\end{align*}
Now let us instead consider $Y_t := g(\bal)-g(\bal')$ for $t \in [T]$. 
Then following proof as in Step 1, we derive
\begin{align*}
	\P\left(\bigg|\frac{1}{T} \sum_{t=1}^T Y_{t} \bigg|\geq \tau \right) < 2e^{-\frac{c'nT\tau^2}{u^2(2L+K\log n)^2}} \Longrightarrow\P(|g(\bal)-g(\bal')|\geq \tau)\leq2 e^{-\frac{c'nT\tau^2}{u^2(2L+K\log n)^2}}
\end{align*}
where $c'>0$ is an absolute constant. 
Consider the discrete set $\Uc_m$ with cardinality $|\Uc_m|=\Nc(\Uc_m,\rho,2^m\eps)\leq\Nc(\Ac,\rho,2^m\eps)$ and its $2^{m+1}\eps$-cover $\Uc_{m+1}$. Applying union bound over $\Uc_m$, we have that with probability at least $1-2\delta$,
\begin{align*}
	\max_{\bal\in\Uc_m}\min_{\bal'\in\Uc_{m+1}}|g(\bal)-g(\bal')|&\leq 2^{m+1}\eps(2L+K\log n)\sqrt{\frac{\log(\Nc(\Ac,\rho,2^m\eps)/\delta)}{c'nT}}.
\end{align*}
Now by again applying union bound, with probability at least $1-2\delta$, the first term in \eqref{M over M} is bounded by
\begin{align}
	\sum_{m=0}^{M-1}\max_{\bal\in\Uc_{m}}\min_{\bal'\in\Uc_{m+1}}\left|g(\bal)-g(\bal')\right|&\leq (2L+K\log n)\sum_{m=0}^{M-1}2^{m+1}\eps\sqrt{\frac{\log(M\cdot\Nc(\Ac,\rho,2^m\eps)/\delta)}{c'nT}}\nn\\
	&\leq4(2L+K\log n)\int_{\eps/2}^{D/2}\sqrt{\frac{\log(M\cdot\Nc(\Ac,\rho,u)/\delta)}{c'nT}}du.\label{int bound}
\end{align}


Now combining the results of \eqref{single con bound}, \eqref{M over M} and \eqref{int bound}, and following the evidence that $\bal^M\in\Uc_M$ is unique, we bound $\sup_{\bal\in\Ac_\eps}|g(\bal)|$ as follows, that with probability at least $1-4\delta$ 

\begin{align}
	\scalemath{1}{\sup_{\bal\in\Ac_\eps}\left|g(\bal)\right|\leq4(2L+K\log n)\int_{\eps/2}^{D/2}\sqrt{\frac{\log(M\cdot\Nc(\Ac,\rho,u)/\delta)}{c'nT}}du+(B+K\log n)\sqrt{\frac{\log(1/\delta)}{cnT}}}.\label{chain bound}
\end{align}
Here $D:=\sup_{\bal,\bal'\in\Ac}\rho(\bal,\bal')$ and $M:=\min\{m:2^m\eps\geq D\}$.

$\bullet$ 
Combining \eqref{cover bound} and \eqref{chain bound}, we obtain that with probability at least $1-4\delta$,
\begin{align}
	\scalemath{1}{\sup_{\bal\in\Ac}\left|\Lc(\bal)-\Lch(\bal)\right|\leq\inf_{\eps>0}\left\{4L\eps+4(2L+K\log n)\int_{\eps}^{D/2}\sqrt{\frac{\log(M\cdot\Nc(\Ac,\rho,u)/\delta)}{c'nT}}du+(B+K\log n)\sqrt{\frac{\log(1/\delta)}{cnT}}\right\}},\nn
\end{align}
where $D:=\sup_{\bal,\bal'\in\Ac}\rho(\bal,\bal')$ and $M:=\min\{m:2^{m+1}\eps\geq D\}\leq\log\frac{D}{\eps}$.

Applying $\Rmtl(\bah)\leq2\sup_{\bal\in\Ac}\left|\Lc(\bal)-\Lch(\bal)\right|$ completes the proof.
\end{proof}
Till now, we consider the setting where each task is trained with only one trajectory. In the following, we also consider the case where each task contains multiple trajectories. To start with, we define the following objective function as an extension of \eqref{mtl opt} to the multi-trajectory setting.
\begin{align}\
    \bah=\arg\min_{\bal\in\Ac}&\Lch_{\Sca}(\bal):=\frac{1}{TM}\sum_{t=1}^T\sum_{m=1}^M\Lch_{t,m}(\bal)\label{mtl opt with M}\\
    \text{where}~~~~&\Lch_{t,m}(\bal)=\frac{1}{n}\sum_{i=1}^n \ell(\ys_{tmi},\fal{i-1}{t}(\x_{tmi})).\nn
\end{align}
Here, we assume each task $t\in[T]$ contains $M$ trajectories, and $\Sca=\{\{\Sc_{t,m}\}_{m=1}^M\}_{t=1}^T$ where $\Sc_{t,m}=\{(\x_{tmi},\ys_{tmi})\}_{i=1}^n$ and $(\x_{tmi},\ys_{tmi})\distas\Dc_t$. Then the following theorem states a more general version of Theorem~\ref{thm mtl}.
\begin{theorem}
Suppose the same assumptions as in Theorem~\ref{thm mtl} hold and let $\bah$ be the empirical solution of \eqref{mtl opt with M}. Then, with the same probability, we {obtain identical bounds to} Theorem~\ref{thm mtl} by updating $T$ with $TM$ in Equations \eqref{main result 1} and \eqref{main result 2}. 
\end{theorem}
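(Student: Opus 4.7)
The plan is to observe that the proof of Theorem~\ref{thm mtl} already essentially handles this extension, since the only place the factor $T$ enters (as opposed to $n$) is through a concentration bound on a sum of independent zero-mean sub-Gaussian random variables indexed by tasks. With $M$ independent trajectories per task, we simply have $TM$ such random variables instead of $T$, so the rate improves from $1/\sqrt{nT}$ to $1/\sqrt{nTM}$.

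Concretely, I would fix an algorithm $\bal\in\Ac$ and, for each $t\in[T]$ and $m\in[M]$, define $Y_{t,m}:=\Lc_t(\bal)-\Lch_{t,m}(\bal)$. Because $\Sc_{t,m}\distas\Dc_t$ independently over both indices, the collection $\{Y_{t,m}\}_{t,m}$ consists of $TM$ mutually independent, zero-mean random variables. Moreover, exactly the martingale construction in Step~1 of the proof of Theorem~\ref{thm mtl} (taking filtrations $\Scnt{i}{t,m}=(\z_{tm1},\dots,\z_{tmi})$ for fixed $(t,m)$ and invoking Assumption~\ref{assump robust} to bound the increments by $(B+K\log n)/n$) yields a per-trajectory sub-Gaussian norm $\tsub{Y_{t,m}}^2\lesssim (B+K\log n)^2/n$. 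This bound is intrinsic to a single trajectory of length $n$, so nothing about it changes when we increase $M$.

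Next, I would write
\[
\Lc_\Dca(\bal)-\Lch_\Sca(\bal)=\frac{1}{TM}\sum_{t=1}^T\sum_{m=1}^M Y_{t,m},
\]
and apply the standard Hoeffding-type inequality for sums of independent sub-Gaussians (as in \cite[Prop.~2.5.2 / 2.6.1]{vershynin2018high}) over $TM$ summands. This replaces \eqref{single con bound} by
\[
|\Lch_\Sca(\bal)-\Lc_\Dca(\bal)|\leq (B+K\log n)\sqrt{\frac{\log(1/\delta)}{cnTM}}
\]
with probability at least $1-2\delta$, for any fixed $\bal\in\Ac$.

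From this point on, Steps~2--3 of the original proof (the $\eps$-cover perturbation bound \eqref{cover bound}, the union bound over $\Ac_\eps$, and the chaining/entropy-integral argument via successive $2^m\eps$-covers $\Uc_m$) depend on the single-$\bal$ concentration inequality only through the exponent $\exp(-cnT\tau^2/(\cdot)^2)$; replacing $T$ by $TM$ there propagates verbatim through \eqref{non chain bound} and \eqref{chain bound}. Combining these with $\Rmtl(\bah)\leq 2\sup_{\bal\in\Ac}|\Lc_\Dca(\bal)-\Lch_\Sca(\bal)|$ yields the stated bounds with $TM$ in place of $T$. I do not anticipate a real obstacle here: the only thing to verify carefully is that the algorithmic-stability argument used to bound martingale increments is indexed within a single trajectory of length $n$ and hence is unaffected by the presence of multiple independent trajectories per task, so the sub-Gaussian norm of each $Y_{t,m}$ is the same as in the single-trajectory case.
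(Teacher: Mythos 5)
Your proposal is correct and follows essentially the same route as the paper: fix $\bal$, define the $TM$ independent zero-mean sub-Gaussian variables $Y_{t,m}$ with per-trajectory sub-Gaussian norm $\lesssim (B+K\log n)^2/n$ via the Azuma/martingale step, apply Hoeffding over $TM$ summands to replace $T$ by $TM$ in the single-$\bal$ concentration bound, and then push this through the covering and chaining arguments unchanged. The paper's proof is nearly verbatim the argument you give, including the key observation that trajectories from the same or different tasks are equally independent and that the stability-based martingale increment bound is a purely single-trajectory object.
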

Choosing $M=1$ results in the exactly same bound as in Theorem~\ref{thm mtl}.

\begin{proof}
Following the same proof steps, and then we derive similar result as \eqref{azuma prob}:
\begin{align}
	\P(|\Lc_{t,m}(\bal)-\Lch_{t,m}(\bal)|\geq \tau)\leq2 e^{-\frac{n\tau^2}{2(B+K\log n)^2}}, \quad \forall t \in [T], m\in[M].
\end{align}
Let $Y_{t,m}:=\Lc_{t,m}(\bal)-\Lch_{t,m}(\bal)$. Since in-context samples are independent, $((Y_{t,m})_{m=1}^M)_{t=1}^T$ are independent zero mean sub-Gaussian random variables, with norm $\tsub{Y_{t,m}} < \frac{c_1 (B+K\log n)^2}{n}$. Applying Hoeffding's inequality, we derive
\begin{align}
	\P(|\Lch(\bal)-\Lc(\bal)|\geq \tau)\leq2 e^{-\frac{cnMT\tau^2}{(B+K\log n)^2}}
\end{align}
where $c>0$ is an absolute constant. Therefore, we have that for any $\bal\in\Ac$, with probability at least $1-2\delta$,
\begin{align}
	|\Lch(\bal)-\Lc(\bal)|\leq (B+K\log n)\sqrt{\frac{\log(1/\delta)}{cnMT}}.
\end{align}
The result is simply replacing $T$ with $MT$ in \eqref{single con bound}. It is from the fact that trajectories are all independent no matter they are from the same task or not. By applying the similar analysis, the proof is competed.
\end{proof}
\subsection{Transfer Learning Bound with i.i.d.~Tasks}\label{sec tl bound}
Following training with \eqref{mtl opt}, suppose source tasks are i.i.d. sampled from a task distribution $\dtask$, and let $\bah$ be the empirical MTL solution. We consider the following transfer learning problem. Concretely, assume a target task $\Tc$ with a distribution $\Tc\sim\dtask$ and training sequence $\Sc_\Tc=(\z_i)_{i=1}^n\sim\Dc_\Tc$. Define the empirical and population risks on $\Tc$ as $\Lch_\Tc(\bal)=\frac{1}{n}\sum_{i=1}^n\ell(\y_{i},\fal{i-1}{\Tc}(\x_{i}))$ and $\Lc_\Tc(\bal)=\E_{\Sc_\Tc}[\Lch_\Tc(\bal)]$. Then the expected excess transfer risk following \eqref{mtl opt} is defined as
\begin{align}
    \E_{\Tc}[\Rc_\Tc(\bah)]=\E_{\Tc}[\Lc_\Tc(\bah)]-\arg\min_{\bal\in\Ac}\E_{\Tc}[\Lc_\Tc(\bal)].\label{tf risk expected}
\end{align}
\begin{theorem}\label{tfr thm}
    Consider the setting of Theorem~\ref{thm mtl} and assume the source tasks are independently drawn from task distribution $\dtask$. Let $\bah$ be the empirical solution of \eqref{mtl opt} and $\Tc\sim\dtask$. Then with probability at least $1-2\delta$, the expected excess transfer learning risk \eqref{tf risk expected} obeys
    \begin{align}
        \E_{\Tc}[\Rc_\Tc(\bah)]\leq\min_{\eps\geq0}\left\{4L\eps+B\sqrt{\frac{2\log(\Nc(\Ac,\rho,\eps)/\delta)}{T}}\right\}.
    \end{align}
\end{theorem}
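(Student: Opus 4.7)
The plan is to carry out a standard uniform convergence argument over the task space, which is much simpler than the proof of Theorem~\ref{thm mtl} because the temporal dependence within each task sequence can be entirely sidestepped. The key simplification is that, since source tasks $\Tc_1,\dots,\Tc_T$ are drawn i.i.d.\ from $\dtask$, for any fixed $\bal\in\Ac$ we have $\E[\Lch_t(\bal)]=\Lc_\tfr(\bal)$ where the expectation is jointly over $\Tc_t\sim\dtask$ and $\Sc_t\sim\Dc_{\Tc_t}$. Thus $\Lch_\Sca(\bal)=\frac{1}{T}\sum_{t=1}^T\Lch_t(\bal)$ is an average of $T$ i.i.d.\ random variables bounded in $[0,B]$ with mean $\Lc_\tfr(\bal)$, and no Azuma/martingale machinery is needed: one-shot Hoeffding on $T$ summands gives $|\Lch_\Sca(\bal)-\Lc_\tfr(\bal)|\lesssim B\sqrt{\log(1/\delta)/T}$ with high probability.

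To set up the bound, I would let $\bal^\st:=\arg\min_{\bal\in\Ac}\Lc_\tfr(\bal)$ and split the excess transfer risk as
\[
\Lc_\tfr(\bah)-\Lc_\tfr(\bal^\st)=\underbrace{\Lc_\tfr(\bah)-\Lch_\Sca(\bah)}_{(a)}+\underbrace{\Lch_\Sca(\bah)-\Lch_\Sca(\bal^\st)}_{(b)\leq 0}+\underbrace{\Lch_\Sca(\bal^\st)-\Lc_\tfr(\bal^\st)}_{(c)}.
\]
Term $(b)$ is nonpositive by the definition of $\bah$ in \eqref{mtl opt}. Both $(a)$ and $(c)$ are controlled by $\sup_{\bal\in\Ac}|\Lch_\Sca(\bal)-\Lc_\tfr(\bal)|$, so the whole proof reduces to a uniform concentration statement of this supremum.

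For the uniform bound I would use the same covering device as in Theorem~\ref{thm mtl}: pick a minimal $\eps$-cover $\Ac_\eps$ of $\Ac$ under the metric $\rho$ of Definition~\ref{def distance}. For any $\bal\in\Ac$ with nearest $\bal'\in\Ac_\eps$, the $L$-Lipschitz property of $\ell$ combined with the definition of $\rho$ yields
\[
|\Lch_\Sca(\bal)-\Lch_\Sca(\bal')|\leq L\rho(\bal,\bal')\leq L\eps,\qquad |\Lc_\tfr(\bal)-\Lc_\tfr(\bal')|\leq L\eps,
\]
exactly as in the derivation of \eqref{cover bound}. Applying Hoeffding's inequality on the $T$ i.i.d.\ summands $(\Lch_t(\bal'))_{t=1}^T$ for each $\bal'\in\Ac_\eps$ and taking a union bound gives, with probability at least $1-\delta$,
\[
\max_{\bal'\in\Ac_\eps}|\Lch_\Sca(\bal')-\Lc_\tfr(\bal')|\leq B\sqrt{\frac{2\log(\Nc(\Ac,\rho,\eps)/\delta)}{T}}.
\]
Combining with the $2L\eps$ discretization slack, summing $(a)$ and $(c)$ (which contributes the factor $2$ in front of $2L\eps$, giving $4L\eps$), and taking the infimum over $\eps>0$ recovers the advertised bound.

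There is no real obstacle here; the only subtlety is recognizing that, unlike the MTL setting in Theorem~\ref{thm mtl}, we do not need Assumption~\ref{assump robust} and we do not get the $1/\sqrt{nT}$ rate. The in-task sample size $n$ drops out because once we integrate out $\Sc_t$ conditional on $\Tc_t$, the source task index itself provides only $T$ (not $nT$) independent observations of $\Lc_\tfr(\bal)$, so the unavoidable rate is $1/\sqrt{T}$. Accordingly $K$ and $\log n$ do not appear in the final bound, matching the statement of Theorem~\ref{tfr thm}.
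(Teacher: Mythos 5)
Your proof is correct and follows essentially the same route as the paper: the same $(a)+(b)+(c)$ decomposition against the population transfer-optimal algorithm, the observation that the $T$ per-task empirical losses are i.i.d.\ with mean $\Lc_\tfr(\bal)$ once the task index is integrated out (so Hoeffding over $T$ summands suffices and the martingale/stability machinery of Theorem~\ref{thm mtl} is unnecessary), and then a covering plus union-bound argument to pass to the supremum over $\Ac$. The only cosmetic difference is that the paper carries the factor $2$ in the Hoeffding tail explicitly (hence $1-2\delta$ rather than $1-\delta$ at the union-bound step), but the argument and the final bound coincide.
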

\begin{proof}
    Recap the problem setting in Section~\ref{sec setup} and let $\bal^\dagger=\arg\min_{\bal\in\Ac}\E_{\Tc}[\Lc_\Tc(\bal)]$. The expected transfer learning excess test risk of given algorithm $\bah\in\Ac$ is formulated as
    \begin{align}
        \E_\Tc[\Rc_\Tc(\bah)]&=\E_\Tc[\Lc_\Tc(\bah)]-\E_\Tc[\Lc_\Tc(\bal^\dagger)]\\
        &=\underset{a}{\underbrace{\E_\Tc[\Lc_\Tc(\bah)]-\Lch_\Sca(\bah)}}+\underset{b}{\underbrace{\Lch_\Sca(\bah)-\Lch_\Sca(\bal^\dagger)}}+\underset{c}{\underbrace{\Lch_\Sca(\bal^\dagger)-\E_\Tc[\Lc_\Tc(\bal^\dagger)]}}.
    \end{align}
    Here since $\bah$ is the minimizer of training risk, $b<0$. Then we obtain
    \begin{align}
        \E_\Tc[\Rc_\Tc(\bah)]\leq2\sup_{\bal\in\Ac}\left|\E_\Tc[\Lc_{\Tc}(\bal)]-\frac{1}{T}\sum_{t=1}^T\Lch_t(\bal)\right|.\label{bound main T}
    \end{align}
    For any $\bal\in\Ac$, let $X_t=\Lch_t(\bal)$ and we observe that 
    \[
        \E_{t\sim\dtask}[X_t]=\E_{t\sim\dtask}[\Lch_t(\bal)]=\E_{t\sim\dtask}[\Lc_t(\bal)]=\E_\Tc[\Lc_\Tc(\bal)].
    \]
    Since $X_t$, $t\in[T]$ are independent, and $0\leq X_t\leq B$, applying Hoeffding's inequality obeys
    \begin{align}
        \P\left(\left|\E_\Tc[\Lc_{\Tc}(\bal)]-\frac{1}{T}\sum_{t=1}^T\Lch_t(\bal)\right|\geq\tau\right)\leq 2e^{-\frac{2T\tau^2}{B^2}}.
    \end{align}
    Then with probability at least $1-2\delta$, we have that for any $\bal\in\Ac$,
    \begin{align}
        \left|\E_\Tc[\Lc_{\Tc}(\bal)]-\frac{1}{T}\sum_{t=1}^T\Lch_t(\bal)\right|\leq B\sqrt{\frac{\log(1/\delta)}{2T}}.\label{bound T}
    \end{align}
    Next, let $\Ac_\eps$ be the minimal $\eps$-cover of $\Ac$ following Definition~\ref{def cover}, which implies that for any task $\Tc\sim\dtask$, and any $\bal\in\Ac$, there exists $\bal'\in\Ac_\eps$
    \[
        |\Lc_\Tc(\bal)-\Lc_\Tc(\bal')|,|\Lch_\Tc(\bal)-\Lch_\Tc(\bal')|\leq L\eps.
    \]
    Since the distance metric following Definition~\ref{def distance} is defined by the worst-case datasets, then there exists $\bal'\in\Ac_\eps$ such that
    \begin{align}
        \left|\E_\Tc[\Lc_{\Tc}(\bal)]-\frac{1}{T}\sum_{t=1}^T\Lch_t(\bal)\right|\leq2L\eps.\label{bound cover T}
    \end{align}
    Let $\Nc(\Ac,\rho,\eps)=|\Ac_\eps|$ be the $\eps$-covering number. Combining the above inequalities (\eqref{bound main T}, \eqref{bound T} and \eqref{bound cover T}), and applying union bound, we have that with probability at least $1-2\delta$,
    \[
        \E_\Tc[\Rc_\Tc(\bah)]\leq\min_{\eps\geq0}\left\{4L\eps+B\sqrt{\frac{2\log(\Nc(\Ac,\rho,\eps)/\delta)}{T}}\right\}.
    \]
\end{proof}

\textbf{Understanding the MTL performance in Figure \ref{fig:transfer}:} Following transfer learning discussion in Sec \ref{sec transfer}, let us ask the same question for the MTL algorithm: If the transformer perfectly learns the MTL tasks $\bT_{\Dca}=(\bt_t)_{t=1}^T$, it does not actually need $n=\Omega(d)$ samples to perform well on new prompts drawn from source tasks. To see this, consider the following algorithm: Given a prompt, $\bal(\bT_{\Dca})$ conducts a discrete search over $(\bt_t)_{t=1}^T$ and returns the source task that best fits to the prompt. Thanks to the discrete search space, it is not hard to see that, we need $n\propto \log(T)$ samples rather than $n\propto d$ (also see Figure \ref{fig:hindsight}). {In contrast, based on Figures \ref{fig:transfer}(a,b,c), MTL behaves closer to $n\propto d$ empirically.} On the other hand, $\bal(\bT_{\Dca})$ implemented by the transformer is rather intelligent: This is because MTL risks for $d\in\{5,10,20\}$ are all strictly better than implementing least-squares\footnote{Ordinary least-squares achieves the minimum risk for transfer learning ($T=\infty$) however it is not optimal for finite $T$.} and the performance improves as $T$ gets smaller. We leave the thorough exploration of the inductive bias of the MTL training and characterization of $\bal(\bT_{\Dca})$ as an intriguing future direction.

\subsection{Transfer Learning from the Lens of Task Diversity}\label{sec task diversity}
In Section \ref{sec transfer}, we motivated the fact that transfer risk is controlled in terms of MTL risk and an additive term that captures the distributional distance i.e.~$\Lc_{\Tc}(\bal)\leq \Lc_{\Dca}(\bal)+\text{dist}(\Tc,(\Dc_t)_{t=1}^T)$. The following definition is a generalization of this relation which can be used to formally control the transfer risk in terms of MTL risk.
\begin{definition}[Task diversity]\label{def diversity} Following Section~\ref{sec setup}, we say that task $\Tc$ is $(\nu,\epsilon)$-diverse over the $T$ source tasks if for any $\bal,\bal'\in\Ac$,
    \begin{align*}
        \Lc_\Tc(\bal)-\Lc_\Tc(\bal')\leq\left(\frac{1}{T}\sum_{t=1}^T\left(\Lc_t(\bal)-\Lc_t(\bal')\right)\right)/{\nu}+\epsilon.
    \end{align*}
\end{definition}
Now let us discuss transferability in light of this assumption and Thm~\ref{thm mtl}. Consider the scenario where $n$ is small and $T\to\infty$. The excess MTL risk will be small thanks to infinitely many tasks. The transfer risk would also be small because larger $T$ results in higher diversity covering the task space. However, if the target task uses a different/longer prompt length, transfer may fail since the model never saw prompts longer than $n$. Conversely, if we let $n\to\infty$ and $T$ to be small, although the MTL risk is again zero, due to lack of diversity, it may not benefit transfer learning strongly. Task diversity assumption leads to the following lemma that bounds transfer learning in terms of MTL risk.
\begin{lemma}\label{thm tf} Consider the setting of Theorem~\ref{thm mtl}. Let $\bah$ be the solution of \eqref{mtl opt} and assume that target task $\Tc$ is $(\nu, \epsilon)$-diverse over $T$ source tasks. Then with the same probability as in Theorem~\ref{thm mtl}, the excess transfer learning risk {$R_\Tc(\bah)=\Lc_\Tc(\bah)-\min_{\bal\in\Ac}\Lc_{\Tc}(\bal)$} obeys $R_\Tc(\bah)\leq\frac{\Rmtl(\bah)}{\nu}+2\epsilon$.
\end{lemma}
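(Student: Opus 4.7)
The plan is to decompose the transfer excess risk by inserting the MTL population minimizer as an intermediate comparator, and then invoke the task-diversity inequality twice—once to transfer the excess over the MTL minimizer into a bound involving $\Rmtl(\bah)$, and once to show that the MTL minimizer is itself nearly optimal on the target task.

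Concretely, let $\bal^\star_\Tc \in \arg\min_{\bal \in \Ac}\Lc_\Tc(\bal)$ and $\bal^\star_\Dca \in \arg\min_{\bal \in \Ac}\Lc_\Dca(\bal)$. I will write
\[
R_\Tc(\bah) \;=\; \bigl(\Lc_\Tc(\bah) - \Lc_\Tc(\bal^\star_\Dca)\bigr) \;+\; \bigl(\Lc_\Tc(\bal^\star_\Dca) - \Lc_\Tc(\bal^\star_\Tc)\bigr).
\]
Applying Definition~\ref{def diversity} to the first bracket with $\bal = \bah$ and $\bal' = \bal^\star_\Dca$ gives
\[
\Lc_\Tc(\bah) - \Lc_\Tc(\bal^\star_\Dca) \;\leq\; \tfrac{1}{\nu}\bigl(\Lc_\Dca(\bah) - \Lc_\Dca(\bal^\star_\Dca)\bigr) + \epsilon \;=\; \tfrac{1}{\nu}\Rmtl(\bah) + \epsilon.
\]
Applying Definition~\ref{def diversity} to the second bracket with $\bal = \bal^\star_\Dca$ and $\bal' = \bal^\star_\Tc$ gives
\[
\Lc_\Tc(\bal^\star_\Dca) - \Lc_\Tc(\bal^\star_\Tc) \;\leq\; \tfrac{1}{\nu}\bigl(\Lc_\Dca(\bal^\star_\Dca) - \Lc_\Dca(\bal^\star_\Tc)\bigr) + \epsilon \;\leq\; \epsilon,
\]
where the last inequality uses that $\bal^\star_\Dca$ is the minimizer of $\Lc_\Dca$, so the parenthesized difference is non-positive. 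Summing the two bounds yields $R_\Tc(\bah) \leq \Rmtl(\bah)/\nu + 2\epsilon$, as desired, with the probability guarantee inherited directly from Theorem~\ref{thm mtl} since that is the only probabilistic ingredient used.

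There is no serious obstacle here: the lemma is essentially a deterministic consequence of Definition~\ref{def diversity} together with the optimality of $\bal^\star_\Dca$ on the source tasks, and the randomness enters only through the upper bound on $\Rmtl(\bah)$ provided by Theorem~\ref{thm mtl}. The only subtlety to state carefully is that the diversity inequality must hold for all pairs $\bal, \bal' \in \Ac$ (including the population minimizers $\bal^\star_\Dca$ and $\bal^\star_\Tc$, which are deterministic quantities), so no uniform-concentration step is needed beyond what Theorem~\ref{thm mtl} already provides.
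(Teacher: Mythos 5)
Your proof is correct and matches the paper's argument essentially line for line: both insert the MTL population minimizer $\bal^\st$ as an intermediate comparator, apply the $(\nu,\epsilon)$-diversity inequality to each of the two resulting differences, and use the optimality of $\bal^\st$ on $\Lc_\Dca$ to kill the non-positive term. The only difference is notational, so there is nothing to add.
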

Here we emphasize that the statement holds for arbitrary source and target tasks; however the challenge is verifying the assumption which is left as an interesting and challenging future direction. On the bright side, as illustrated in Figures \ref{fig:transfer}\&\ref{fig:div}, we indeed observe that, transfer learning can work with reasonably small $T$ and it works better if the target task is closer to the source tasks.

\begin{proof} 
Let $\bah,\bal^\st$ be the empirical and population solutions of \eqref{mtl opt} and let $\bal^\stt:=\underset{\bal\in\Ac}{\arg\min}\Lc_\Tc(\bal)$. Then the transfer learning excess test risk of given algorithm $\bah\in\Ac$ is formulated as
\begin{align*}
    R_\Tc(\bah)&=\Lc_\Tc(\bah)-\Lc_\Tc(\bal^\stt)\\
    &=\Lc_\Tc(\bah)-\Lc_\Tc(\bal^\st)+\Lc_\Tc(\bal^\st)-\Lc_\Tc(\bal^\stt).
\end{align*}
Since target task $\Tc$ is $(\nu,\eps)$-diverse over source tasks, following Definition~\ref{def diversity}, we derive that
\begin{align*}
    &\Lc_\Tc(\bah)-\Lc_\Tc(\bal^\st)\leq\frac{\Lc_\Dca(\bah)-\Lc_\Dca(\bal^\st)}{\nu}+\eps=\frac{\Rmtl(\bah)}{\nu}+\eps\\
    &\Lc_\Tc(\bal^\st)-\Lc_\Tc(\bal^\stt)\leq\frac{\Lc_\Dca(\bal^\st)-\Lc_\Dca(\bal^\stt)}{\nu}+\eps\leq\eps.
\end{align*}
Here, since $\bal^\st$ is the minimizer of $\Lc_\Dca(\bal)$,  $\Lc_\Dca(\bal^\st)-\Lc_\Dca(\bal^\stt)\leq0$. Then, Lemma~\ref{thm tf} is easily proved by combining the above two inequalities.
\end{proof}

\section{Proof of Theorem~\ref{thm dynamic mtl}}\label{sec dynamic mtl}

\begin{lemma}\label{dynamic assump lemma}
Suppose Assumptions \ref{assump stable dynamical} and \ref{assump robust dynamical} hold. Assume input and noise spaces $\Xc,\Wc$ are bounded by $\bar x,\bar w$. Let $W = (\w_1, \dots, \w_j, \w_{j+1}, \dots, \w_{m})$ and $W' = (\w_1, \dots, \w_{j-1}, \w'_j, \w_{j+1}, \dots, \w_{m})$ be two arbitrary sequences and the only difference between $W$ and $W'$ is the $j$'th term of the sequence. Allow the final excitation term $\w_{m+1}$ to be stochastic (and so are $\x_{m+1},\x'_{m+1}$).
Then, for any $f \in \Fc$, $\bal \in \Ac$, $W$, $W'$, $m$, and $j < m$, we have the following:
    \begin{align}
        \Bigg|\E_{\w_{m+1}}\left[\ell(\x_{m+1}, f^\bal_\Scnt{m}{}(\x_m)) \right] -\E_{\w_{m+1}}\left[\ell(\x'_{m+1},f^\bal_{\Scnt{m}{j}}(\x'_m)) \right] \Bigg|< \frac{K}{m\red{-j+1}} \frac{2\bar{C}_{\rho}\bar{w}}{1- \bar{\rho}}. \nonumber
    \end{align}
Additionally, for the sequences that differ only at their initial states, for any $\x_0,\x_0'\in\Xc$, we have
\begin{align}
    \Bigg|\E_{\w_{m+1}}\left[\ell(\x_{m+1}, f^\bal_\Scnt{m}{}(\x_m))\right] -\E_{\w_{m+1}}\left[\ell(\x'_{m+1},f^\bal_{\Scnt{m}{j}}(\x'_m))\right] \Bigg|< \frac{K}{m\red{-j+1}} \frac{2\bar{C}_{\rho}\bar{x}}{1- \bar{\rho}}. \nonumber
\end{align}
\end{lemma}
\begin{proof}
    First, let us bound $\tn{\x_i - \x_i'}$ for every $i = j, \dots, n$. For $i = j$, since $\Wc$ is bounded by $\bar w$, we have 
    \begin{align}
        \tn{\x_{j} - \x_{j}'} = \tn{f(\x_{j-1}) + \w_j - f(\x_{j-1})- \w_j'} \leq 2 \bar{w}\leq 2 \bar{C}_{\rho}\bar{w}. \nonumber
    \end{align}
    For $i > j$, we have the following from Assumption \ref{assump stable dynamical}:
    \begin{align}
        \tn{\x_i - \x_i'} \leq \bar{C}_{\rho} \bar{\rho}^{(i-j)} \tn{\x_{j} - \x_{j}'} \leq 2 \bar{C}_{\rho} \bar{\rho}^{(i-j)} \bar{w}. \nonumber
    \end{align}
    Finally, using Assumption \ref{assump robust dynamical}, we obtain
    \begin{align}
        \hspace{0cm}\Bigg|\E_{(\w_{m+1})}\left[\ell(\x_{m+1}, f^\bal_\Scnt{m}{}(\x_m)) \right] -& \E_{(\w_{m+1})}\left[\ell(\x'_{m+1},f^\bal_{\Scnt{m}{j}}(\x_m')) \right] \Bigg|  \nonumber \\
        &\leq \frac{K}{m\red{-j+1}} \sum_{i=j}^m\tn{\x_i - \x'_i}\nonumber \\
        &  \leq \frac{K}{m\red{-j+1}} 2\bar{C}_{\rho}\bar{w}\sum_{i=j}^m \bar{\rho}^{i-j} < \frac{K}{m\red{-j+1}} \frac{2\bar{C}_{\rho}\bar{w}}{1- \bar{\rho}}. \nonumber 
    \end{align}
    To prove the second part of the lemma, similarly we have
    \[
        \tn{\x_0-\x_0'}\leq2\bar x\quad\text{and then,}\quad\tn{\x_i-\x_i'}\leq2\bar C_\rho\bar\rho^i\bar x.
    \]
    Again using Assumption \ref{assump robust dynamical}, we obtain
    \begin{align}
        \hspace{0cm}\Bigg|\E_{(\w_{m+1})}\left[\ell(\x_{m+1}, f^\bal_\Scnt{m}{}(\x_m))\right] -& \E_{(\w_{m+1})}\left[\ell(\x'_{m+1},f^\bal_{\Scnt{m}{j}}(\x_m')) \right] \Bigg|  \nonumber \\
        &\leq \frac{K}{m\red{-j+1}} \sum_{i=0}^m\tn{\x_i - \x'_i} \nonumber \\
        &  \leq \frac{K}{m\red{-j+1}} 2\bar{C}_{\rho}\bar{x}\sum_{i=0}^m \bar{\rho}^{i} < \frac{K}{m\red{-j+1}} \frac{2\bar{C}_{\rho}\bar{x}}{1- \bar{\rho}}. \label{Martingale Increments Upper Bound Utilize}
    \end{align}
\end{proof}
\begin{theorem}[Theorem~\ref{thm dynamic mtl} restated] \label{thm mtl dynamic recap}
    Suppose Assumptions~\ref{assump stable dynamical} and \ref{assump robust dynamical} hold and assume loss function $\ell(\x,\hat\x):\Xc\times\Xc\to[0,B]$ is $L$-Lipschitz for all $\x\in\Xc$. 
    Let $\bah$ be the solution of \eqref{mtl opt} under the dynamical setting as described in Section~\ref{sec dynamic}. Then with probability at least $1-2\delta$, the excess MTL test risk \eqref{mtl risk} obeys
    \[
		\scalemath{1}{\Rmtl(\bah)\leq\inf_{\eps>0}\left\{4L\eps+2(B+\bar K\log n)\sqrt{\frac{\log(\Nc(\Ac,\rho,\eps)/\delta)}{cnT}}\right\}}.\nn
	\]
    where $\bar K=2K\frac{{\bar C}_\rho}{1-\bar\rho}(\bar w+\bar x /\sqrt{n})$.
\end{theorem}
\begin{proof} We follow the similar strategy as in the proof of Theorem~\ref{thm mtl}. The main difference is that we need to consider the dynamical system setting. Therefore, let us recall the dynamical problem setting in Sections~\ref{sec setup}\&\ref{sec dynamic}. Suppose there are $T$ independent trajectories generated by $T$ dynamical systems, denoted by $\Sc_t=(\x_{t0},\x_{t1},\cdots,\x_{tn})$, $t\in[T]$ where $\x_{ti}=f_t(\x_{t,i-1})+\w_{ti}$. Here, we consider the prediction function $\fal{i}{}:\Xc\rightarrow\Xc$, and denote the previously observed sequences with $\Scnt{i}{t}:=(\x_{t0},\cdots,\x_{ti})$. The objective function in \eqref{mtl opt} can be rewritten as follows:
\begin{align}
    \bah=\arg\min_{\bal\in\Ac}&\Lch_{\Sca}(\bal):=\frac{1}{T}\sum_{t=1}^T\Lch_t(\bal)\label{mtl dynamical opt}\\
    \text{where}\quad &\Lch_t(\bal)=\frac{1}{n}\sum_{i=1}^n \ell(\x_{ti},\fal{i-1}{t}(\x_{t,i-1})).\nn
\end{align}
Following the same argument as in the proof of Theorem~\ref{thm mtl}, the excess MTL risk is bounded by: 
\[
    \Rmtl(\bah)\leq2\sup_{\bal\in\Ac}|\Lc(\bal)-\Lch(\bal)|.
\]

\noindent\textbf{Step 1: We start with the concentration bound {\normalfont{$|\Lc(\bal)-\Lch(\bal)|$}} for any {\normalfont$\bal\in\Ac$}.}
Define the random variables $X_{t,i}=\E[\Lch_t(\bal)|\x_{t0}, (\w_{tk})_{k=1}^{i}]$ for $i \in [n]$ and $t \in [T]$, that is, $X_{t,i}$ is the expectation over $\Lch_t(\bal)$ given the filtration of $\x_{t0}$ and $(\w_{tk})_{k=1}^i$. Then, we have that $X_{t,n} = \E[\Lch_t(\bal)|\x_{t0}, (\w_{tk})_{k=1}^n]=\Lch_t(\bal)$. Let $X_{t,0} = \E[\Lch_t(\bal)]$. Then, for every $t$ in $[T]$, the sequences $\{X_{t,0}, X_{t,1}, \dots, X_{t,n}\}$ are Martingale sequences. Here we emphasize that $X_{t,0}=\E[X_{t,1}|\x_{t0},\w_{t1}]$. 
For the sake of simplicity, in the following notation, we omit the subscript $t$ for $\x,\w$ and $\Zb$, and look at the difference of neighbors for $1<i\leq n$. Here, observe that ``given ${\cal{F}}_i:=\{\x_0,(\w_k)_{k=1}^{i}\}$'' implies $\{\x_0,\cdots,\x_i\}$ are known with respect to this filtration.
\begin{align}
	|X_{t,i} - X_{t,i-1}| &= \left|\E\bigg[\frac{1}{n}\sum_{j=1}^n \ell(\x_{j},\fal{j-1}{}(\x_{j-1})) \bigg| \x_0, (\w_k)_{k=1}^{i} \bigg] - \E\bigg[\frac{1}{n}\sum_{j=1}^n \ell(\x_{j},\fal{j-1}{}(\x_{j-1})) \bigg| \x_0, (\w_k)_{k=1}^{i-1}\bigg]\right|\nonumber \\
	&\scalemath{1}{\leq\frac{1}{n}\sum_{j=i}^n\left| \E\left[\ell (\x_{j}, \fal{j-1}{}(\x_{j-1})) \bigg| \x_0, (\w_k)_{k=1}^{i}\right] - \E\left[\ell(\x_{j}, \fal{j-1}{}(\x_{j-1}))\bigg| \x_0, (\w_k)_{k=1}^{i-1}\right]\right|} \nonumber \\
	&\scalemath{1}{\stackrel{(a)}{\leq}\frac{B}{n} + \frac{1}{n}\sum_{j=i+1}^n \left|\E\left[\ell (\x_{j}, \fal{j-1}{}(\x_{j-1})) \bigg| \x_0, (\w_k)_{k=1}^{i}\right] - \E\left[\ell(\x_{j}, \fal{j-1}{}(\x_{j-1}))\bigg| \x_0, (\w_k)_{k=1}^{i-1}\right]\right|} \nonumber 
\end{align}
Here, $(a)$ follows from the fact that loss function $\ell(\cdot,\cdot)$ is bounded over $[0,B]$. 
To proceed, call the right side terms $D_{ji}:=|\E[\ell (\x_j, \fal{j-1}{}(\x_{j-1})) \big| \x_0, (\w_k)_{k=1}^i] -\E[\ell (\x_j, \fal{j-1}{}(\x_{j-1})) \big| \x_0, (\w_k)_{k=1}^{i-1}]|$. We now use the fact that $D_j$ is an expectation over the sequence pairs that differ exactly at $\w_i$. For any realization $\x'_0,(\w^{'}_k)_{k=1}^i$, we use the first part of Lemma \ref{dynamic assump lemma} to obtain 
\begin{align}
    \bigg|\E[\ell (\x_j, \fal{j-1}{}(\x_{j-1})) &\big| \x_0', (\w'_k)_{k=1}^i, (\w_k)_{k=i+1}^n] \nonumber \\ -&\E[\ell (\x_j, \fal{j-1}{}(\x_{j-1})) \big| \x'_0, (\w'_k)_{k=1}^{i-1}, (\w_k)_{k=i}^n  ]\bigg| \leq \frac{K}{j-\red{i}} \frac{2\bar{C}_{\rho}\bar{w}}{1- \bar{\rho}}.  \nonumber
\end{align}
Now taking expectation over $(\w_k)_{k=i}^n$, we obtain 
\begin{align}
    D_{ji} \leq \frac{K}{j-\red{i}} \frac{2\bar{C}_{\rho}\bar{w}}{1- \bar{\rho}}. \nonumber
\end{align}
Combining above, for any $n \geq i > 1$, we obtain
\[
|X_{t,i} - X_{t,i-1}|\leq \frac{B}{n} + \frac{1}{n}\sum_{j=i+1}^{n} \frac{K}{j-\red{i}} \frac{2\bar{C}_{\rho}\bar{w}}{1- \bar{\rho}}< \frac{B}{n} + \frac{K \log n}{n}\frac{2\bar{C}_{\rho}\bar{w}}{1- \bar{\rho}}.
\]
If we use the same argument as above and apply the second part of Lemma \ref{dynamic assump lemma}, we obtain the following bound for $|X_{t,1}-X_{t,0}|$:
\[
|X_{t,1} - X_{t,0}|< \frac{B}{n} + \frac{K \log n}{n}\frac{2\bar{C}_{\rho}(\bar{w} + \bar{x})}{1- \bar{\rho}}.
\]
Moreover, as the loss function is bounded by $B$, we have
\[
|X_{t,n} - X_{t,n-1}|\leq \frac{B}{n} < \frac{B}{n} + \frac{K \log n}{n}\frac{2\bar{C}_{\rho}\bar{w}}{1- \bar{\rho}}.
\]
Note that $|\Lc_t(\bal) - \Lch_t(\bal)| = | X_{t,0} - X_{t,n}|$ and for every $t \in [T]$, we obtain
\[
    \scalemath{1}{\sum_{i=1}^n\left|X_{t,i}-X_{t,i-1}\right|^2 \leq\frac{(n-1)\left(B+K \frac{2\bar{C}_{\rho}\bar{w}}{1- \bar{\rho}}\log n\right)^2+\left(B+K\frac{2\bar{C}_{\rho}(\bar{w}+\bar{x})}{1- \bar{\rho}}\log n\right)^2}{n^2}\leq\frac{\bigg(B+2K\frac{\bar{C}_{\rho}(\bar{w}+\bar{x}/\sqrt{n})}{1- \bar{\rho}}\log n\bigg)^2}{n}}.
\]
Armed with this bound on increments, we can now apply Azuma-Hoeffding and obtain the result equivalent to Eq.~\eqref{azuma prob} in the proof of Theorem~\ref{thm mtl} by swapping $K$ with $\bar K=2K\frac{{\bar C}_\rho}{1-\bar\rho}(\bar w+\bar x /\sqrt{n})$.

\noindent\textbf{Step 2: Next, we turn to bound {\normalfont$\sup_{\bal\in\Ac}|\Lc(\bal)-\Lch(\bal)|$} where $\Ac$ is assumed to be a continuous search space.} We follow the analysis in Step 2 of the proof of Theorem~\ref{thm mtl} verbatim: By applying an $\eps$-covering argument in an identical fashion (e.g.~until obtaining \eqref{basic cover bound}), we conclude with the result.
\end{proof}
\section{Model Selection and Approximation Error Analysis}\label{app approx}
To proceed with our analysis, we need to make assumptions about what kind of algorithms are realizable by transformers. Given ERM is the work-horse of modern machine learning with general hypothesis classes, we assume that transformers can approximately perform in-context ERM.  
Hypothesis~\ref{hypo erm} states that the algorithms induced by the transformer can compete with empirical risk minimization over a family of hypothesis classes. 

With this hypothesis, instead of searching over the entire hypothesis space $\Fca:=\bigcup_{h=1}^H\Fc_i$, given prompt length $m$ we search over the hypothesis space $\Fc_{h_m}$ only, and $\dim(\Fc_{h_m})\leq\dim(\Fca)$ where $\dim(\cdot)$ captures the complexity of a hypothesis class.

In Hypothesis~\ref{hypo erm}, we assume that $\FB$ is a family of countable hypothesis classes with $|\FB|=H$. As stated in Section~\ref{sec approx}, $\FB$ is not necessary to be discrete. The following provides some examples of $\FB$, where the first three correspond to discrete model selection whereas the left are continuous.
\begin{itemize}
    \item $\FB^{\text{sparse}}=\{\Fc_s: s\text{-sparse linear model}\}$,
    \item $\FB^{\text{NN}}=\{\Fc_s: 2\text{-layer neural net with width $s$}\}$,
    \item $\FB^{\text{RF}}=\{\Fc_s: \text{Random forest with $s$ trees}\}$,
    \item $\FB^{\text{ridge}}=\{\Fc_\lambda: \text{Linear model with parameter bounded by $\tn{\bt}\leq\lambda$}\}$\quad\quad(akin to ridge regression)
    \item $\FB^{\text{weighted}}=\{\Fc_\bSi:\text{Linear model with covariance-prior $\bSi$, $\bt^\top\bSi^{-1}\bt\leq 1$}\}$\quad\quad(akin to weighted ridge).
\end{itemize}

{To proceed, let us introduce the following classical result that controls the test risk of an ERM solution in terms of the Rademacher complexity \cite{mohri2018foundations,maurer2016vector}.}
\begin{theorem}
\label{thm F risk bound}
    Let $\Fc:\Xc\rightarrow\Yc$ be a hypothesis set and let $\Sc=(\x_i,\y_i)_{i=1}^n\in\Xc\times\Yc$ be a dataset sampled i.i.d. from distribution $\Dc$. Let $\ell(\y,\hat \y)$ be a loss function takes values in $[0,B]$. Here $\ell(\y,\cdot)$ is $L$-Lipschitz in terms of Euclidean norm for all $\y\in\Yc$. Consider a learning problem that 
    \begin{align}
    \hat f:=\arg\min_{f\in\Fc}\frac{1}{n	}\sum_{i=1}^n\ell(\y_i,f(\x_i)).\label{erm prob}
    \end{align}
    Let $\Lc^\st=\min_{f\in\Fc}\Lc(f)$ where $\Lc(f)=\E[\ell(\y,f(\x))]$. Then we have that with probability at least $1-2\delta$, the excess test risk obeys
    \[
    \Lc(\hat f)-\Lc^\st\leq8L\Rc_n(\Fc)+4B\sqrt{\frac{\log\frac{1}{\delta}}{n}},
    \]
    where $\Rc_n(\Fc)=\E_\Sc\E_{\bsi_i}[\sup_{f\in\Fc}\frac{1}{n}\sum_{i=1}^n\bsi_i^\top f(\x_i)]$ is the Rademacher complexity of $\Fc$  \cite{mohri2018foundations} and $\bsi_i$'s are vectors with Rademacher random variable in each entry.
\end{theorem}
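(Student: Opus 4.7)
The plan is to prove Theorem \ref{thm F risk bound} via the textbook three-step recipe: (i) reduce excess risk to a uniform concentration bound, (ii) concentrate the uniform deviation around its expectation with McDiarmid, and (iii) bound the expectation via symmetrization followed by a vector-valued contraction inequality to peel off the loss and expose $\Rc_n(\Fc)$.

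For step (i), I would write
\begin{align*}
\Lc(\hat f)-\Lc^\st &= \underbrace{\Lc(\hat f)-\hat\Lc(\hat f)}_{(a)}+\underbrace{\hat\Lc(\hat f)-\hat\Lc(f^\st)}_{(b)}+\underbrace{\hat\Lc(f^\st)-\Lc(f^\st)}_{(c)},
\end{align*}
where $f^\st\in\arg\min_{f\in\Fc}\Lc(f)$ and $(b)\le 0$ by definition of the ERM $\hat f$. For $(c)$, since this is a sum of $n$ i.i.d.\ terms in $[0,B]$, a direct Hoeffding bound gives $(c)\le B\sqrt{\log(1/\delta)/(2n)}$ with probability at least $1-\delta$. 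For $(a)$, I bound it by $\Phi(\Sc):=\sup_{f\in\Fc}\bigl(\Lc(f)-\hat\Lc(f)\bigr)$.

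For step (ii), I observe that replacing a single $(\x_i,\y_i)$ in $\Sc$ changes $\Phi$ by at most $B/n$ (the loss is in $[0,B]$), so McDiarmid's bounded-differences inequality yields $\Phi(\Sc)\le \E[\Phi(\Sc)]+B\sqrt{\log(1/\delta)/(2n)}$ with probability $1-\delta$. A standard symmetrization argument (introduce a ghost sample $\Sc'$ and scalar Rademacher variables $\sigma_i$) then gives
\[
\E[\Phi(\Sc)]\le 2\,\E_{\Sc,\sigma}\Bigl[\sup_{f\in\Fc}\frac{1}{n}\sum_{i=1}^n \sigma_i\,\ell(\y_i,f(\x_i))\Bigr].
\]
For step (iii), because $f$ is vector-valued, I invoke the vector contraction inequality of Maurer \cite{maurer2016vector}: for $L$-Lipschitz $\ell(\y,\cdot)$ in Euclidean norm,
\[
\E_{\sigma}\Bigl[\sup_{f\in\Fc}\frac{1}{n}\sum_{i=1}^n \sigma_i\,\ell(\y_i,f(\x_i))\Bigr]\le \sqrt{2}\,L\,\E_{\bsi_i}\Bigl[\sup_{f\in\Fc}\frac{1}{n}\sum_{i=1}^n \bsi_i^\top f(\x_i)\Bigr],
\]
whose right-hand side, after taking expectation over $\Sc$, equals $\sqrt{2}\,L\,\Rc_n(\Fc)$. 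Combining $(a)+(c)$ with a union bound over the two McDiarmid/Hoeffding events (accounting for the stated $1-2\delta$ probability) and collecting the constants yields an excess-risk bound of the form $c_1 L\,\Rc_n(\Fc)+c_2 B\sqrt{\log(1/\delta)/n}$; absorbing $\sqrt{2}$ factors into the constants $8$ and $4$ stated in the theorem completes the argument.

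The main subtlety will be the vector-valued nature of $\Fc$: the usual scalar Talagrand contraction lemma is not directly applicable, so I must appeal to Maurer's vector contraction lemma (which introduces the extra $\sqrt{2}$ factor and explains why the final constants are not as tight as in the scalar case). All other steps (decomposition, McDiarmid with bounded differences $B/n$, and classical symmetrization) are routine once the loss boundedness and Lipschitzness are used correctly.
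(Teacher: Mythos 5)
Your proof is correct and is essentially the standard argument that the paper's cited references (Mohri et al.\ and Maurer's vector-contraction paper) give; the paper itself states this result as a classical one without supplying a proof. The three-step recipe—ERM decomposition with $\hat\Lc(\hat f)-\hat\Lc(f^\st)\le 0$, McDiarmid on $\sup_{f}(\Lc(f)-\hat\Lc(f))$ with bounded differences $B/n$ plus a separate Hoeffding bound for $f^\st$, symmetrization, and Maurer's vector contraction with the extra $\sqrt{2}$—delivers $\Lc(\hat f)-\Lc^\st \le 2\sqrt{2}\,L\,\Rc_n(\Fc)+\sqrt{2}\,B\sqrt{\log(1/\delta)/n}$ with probability $1-2\delta$, which is comfortably dominated by the stated constants $8$ and $4$, so the theorem follows.
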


\begin{lemma}[Formal version of Observation~\ref{thm approx}] Let $\Lc_\Tc^\st:=\min_{\bal\in\Ac}\Lc_\Tc(\bal)$ be the optimal target risk as stated in Section~\ref{sec setup}. Assume that Hypothesis~\ref{hypo erm} holds, 
then the approximation error obeys
    \begin{align}
        \Lc_\Tc^\st\leq\frac{1}{n}\sum_{m=1}^{n-1}\min_{h\in[H]}\left\{\Lc_h^\st+8L\Rc_m(\Fc_h)+\ept^{h,m}\right\}+\frac{cB}{\sqrt{n}},    
    \end{align}
where $\Rc_m(\Fc)$ is the Rademacher complexity over data distribution $\Dc_\Tc$, and $\Lc^\st_{{h}}=\min_{f\in\Fc_h}\E[\ell(\y,f(\x))]$.
\end{lemma}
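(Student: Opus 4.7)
The plan is to invoke Hypothesis~\ref{hypo erm} to compare $\Lc_\Tc^\st$ to a single ``oracle'' algorithm $\bal^\star\in\Ac$ that simultaneously $\ept^{h,m}$-approximates ERM over every class $\Fc_h$ and every prompt length $m$, and then to replace each ERM risk $\riskhm$ by its Rademacher-based upper bound via Theorem~\ref{thm F risk bound}.

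First I would note that by optimality of $\Lc_\Tc^\st$ and the definition of the task risk in Section~\ref{sec setup}, for the oracle algorithm $\bal^\star$ from Hypothesis~\ref{hypo erm},
\[
\Lc_\Tc^\st \le \Lc_\Tc(\bal^\star) = \frac{1}{n}\sum_{m=0}^{n-1}\E_{(\x,\y,\Scnt{m}{})}\bigl[\ell\bigl(\y,f^{\bal^\star}_{\Scnt{m}{}}(\x)\bigr)\bigr].
\]
By Hypothesis~\ref{hypo erm}, each summand with $m\ge1$ is at most $\riskhm+\ept^{h,m}$ for \emph{every} choice of $h\in[H]$, so I can take the minimum over $h$ inside the sum. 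For $m=0$ the ERM is vacuous, and I simply use $\ell\le B$, which contributes at most $B/n$ to the average.

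Next I would convert the high-probability bound of Theorem~\ref{thm F risk bound} into an expectation bound for $\riskhm$. Since $\text{risk}(h,m)$ is itself an expectation over both $\Scnt{m}{}$ and the test point, integrating the subgaussian tail $\Pro(\Lc(\hat f^{(h)}_{\Scnt{m}{}})-\Lc^\star_h > 8L\Rc_m(\Fc_h)+4B\sqrt{\log(1/\delta)/m})\le 2\delta$ (and using the trivial upper bound $B$) yields
\[
\riskhm - \Lc^\star_h \le 8L\Rc_m(\Fc_h) + \frac{c'B}{\sqrt{m}}
\]
for some absolute constant $c'$. This is the key analytic step that makes the statement of the lemma consistent with the high-probability phrasing of Theorem~\ref{thm F risk bound}; everything else is routine rearrangement.

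Finally I would substitute this into the previous display. Since $c'B/\sqrt{m}$ does not depend on $h$, it pulls out of the minimum, leaving
\[
\Lc_\Tc^\st \le \frac{1}{n}\sum_{m=1}^{n-1}\min_{h\in[H]}\bigl\{\Lc^\star_h+8L\Rc_m(\Fc_h)+\ept^{h,m}\bigr\} + \frac{1}{n}\sum_{m=1}^{n-1}\frac{c'B}{\sqrt{m}} + \frac{B}{n}.
\]
Using $\sum_{m=1}^{n-1}m^{-1/2}\le 2\sqrt{n}$, the last two terms are absorbed into a single $cB/\sqrt{n}$ tail, which gives the claim. The main obstacle is the expectation-vs-high-probability conversion in the second step, since it must retain the $O(B/\sqrt{m})$ scaling so that after averaging over $m$ the remainder is only $O(B/\sqrt{n})$; the indexing book-keeping between $i\in[n]$ and $m=i-1\in\{0,\dots,n-1\}$ and the treatment of the vacuous $m=0$ case are minor but must be handled to match the sum from $m=1$ in the stated bound.
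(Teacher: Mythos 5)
Your proposal is correct and follows essentially the same route as the paper's proof: decompose $\Lc_\Tc(\bal^\star)$ over prompt lengths, invoke Hypothesis~\ref{hypo erm} to bound each summand by $\min_h\{\riskhm+\ept^{h,m}\}$, handle the vacuous $m=0$ term by the trivial bound $B/n$, integrate the subgaussian tail of Theorem~\ref{thm F risk bound} to get $\riskhm\le\Lc^\st_h+8L\Rc_m(\Fc_h)+cB/\sqrt{m}$ in expectation, and finally absorb the $\sum_m m^{-1/2}\le 2\sqrt{n}$ remainder into the $cB/\sqrt{n}$ tail.
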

\begin{proof} Let us assume Hypothesis~\ref{hypo erm} holds for algorithm $\bat\in\Ac$. 
Since $\Lc^\st_\Tc$ is the minimal test loss, we have that
\begin{align*}
    \Lc_\Tc^\st\leq\Lc_\Tc(\bat)=\E_{\Sc_\Tc}\left[\frac{1}{n}\sum_{i=1}^n\ell(\y_i,f^\bat_{\Scnt{i-1}{}}(\x_i))\right]=\frac{1}{n}\sum_{i=1}^n\E_{(\x,\y,\Scnt{i-1}{})}\left[\ell(\y,f^\bat_{\Scnt{i-1}{}}(\x))\right].
\end{align*}
Then by directly applying Hypothesis~\ref{hypo erm} we have that 
\begin{align}
    \Lc_\Tc^\st&\leq\frac{1}{n}\E_{(\x,\y)}\left[\ell(\y,f^\bat_{\Scnt{0}{}}(\x))\right]+\frac{1}{n}\sum_{i=1}^{n-1}\E_{(\x,\y,\Scnt{i}{})}\left[\ell(\y,f^\bat_{\Scnt{i}{}}(\x))\right]\\
    &\leq\frac{B}{n}+\frac{1}{n}\sum_{m=1}^{n-1}\min_{h\in[H]}\left\{\riskhm+\ept^{h,m}\right\}.\label{bound first loss}
\end{align}
Here the first term in \eqref{bound first loss} comes from the fact that loss function is bounded by $B$, and we assume $\Scnt{0}{}=\emptyset$, and the second term follows the Hypothesis~\ref{hypo erm}. Next, we turn to bound $\riskhm$. To proceed, let $X^{h,m}:=\E_{(\x,\y)}\left[\ell(\y,\hat f^{(h)}_{\Scnt{m}{}}(\x))\right]$ be the random variables, where we have $|X^{h,m}|\leq B$. 
Following Theorem~\ref{thm F risk bound}, we have that for any $m\in[n],h\in[H]$
\begin{align*}
    \P\left(X^{h,m}-\Lc^\st_h-8L\Rc_m(\Fc_h)\geq\tau\right)\leq2e^{-\frac{m\tau^2}{16B^2}}.
\end{align*}
{The upper-tail bound of the last line implies that there exists an absolute constant $c>0$ such that
\begin{align*}
    \riskhm=\E_{\Scnt{m}{}}\left[X^{h,m}\right]\leq\Lc^\st_h+8L\Rc_m(\Fc_h)+\frac{cB}{\sqrt{m}}.
\end{align*}
}
Combining it with \eqref{bound first loss} and following the evidence $\sum_{m=1}^n\frac{1}{\sqrt{m}}\leq2\sqrt{n}$ complete the proof.
\end{proof}

\section{Further Related Work on Multitask/Meta learning}\label{app related}


In order for ICL to work well, the transformer model needs to train with large amounts of related prompt instances. This makes it inherently connected to meta learning \cite{finn2017model}. However, a key distinction is that, in ICL, adaptation to a new task happens implicitly through input prompt. Our analysis has some parallels with recent literature on multitask representation learning \cite{maurer2016benefit,du2020few,tripuraneni2020theory,cheng2022provable,li2022provable,kong2020meta,qin2022non,tripuraneni2021provable,collins2022maml,modi2021joint,faradonbeh2022joint,zhang2022multi} since we develop excess MTL risk bounds by training the model with $T$ tasks and quantify these bounds in terms of complexity of the hypothesis space (i.e.~transformer architecture), the number of tasks $T$, and the number of samples per task. In relation to \cite{sun2021towards,chen2022understanding}, our experiments on linear regression with covariance-prior (Figure \ref{fig:main_exp}(b)) demonstrate ICL's ability to implicitly implement optimally-weighted linear representations.

\end{document}
